\def\eqref#1{equation~\ref{#1}}
\def\1{\bm{1}}
\def\vepsilon{{\bm{\epsilon}}}
\def\vtheta{{\bm{\theta}}}
\def\vf{{\bm{f}}}
\def\vh{{\bm{h}}}
\def\vs{{\bm{s}}}
\def\vw{{\bm{w}}}
\def\vx{{\bm{x}}}
\def\vy{{\bm{y}}}
\def\mI{{\bm{I}}}
\DeclareMathAlphabet{\mathsfit}{\encodingdefault}{\sfdefault}{m}{sl}
\SetMathAlphabet{\mathsfit}{bold}{\encodingdefault}{\sfdefault}{bx}{n}
\newcommand{\E}{\mathbb{E}}
\newcommand{\R}{\mathbb{R}}
\newcommand{\KL}{D_{\mathrm{KL}}}
\newcommand{\Fisher}{D_{\mathrm{Fisher}}}
\newtheorem{theorem}{Theorem}[section]
\newtheorem{lemma}{Lemma}[section]
\newtheorem{proposition}{Proposition}[section]
\newtheorem{assumption}{Assumption}[section]
\newcommand{\tr}{\mathrm{tr}}
\definecolor{SDEblue}{RGB}{28 58 88}
\title{The Blessing of Randomness: SDE Beats ODE in General Diffusion-based Image Editing}
\author{
Shen Nie$^{1,2}$\thanks{Work done during an internship at ShengShu, Beijing, China.}~, Hanzhong Allan Guo$^{1,2}$, Cheng Lu$^3$, Yuhao Zhou$^3$, \\
~\textbf{Chenyu Zheng}$^{1,2}$, \textbf{Chongxuan Li}$^{1,2}$\thanks{Correspondence to Chongxuan Li.} \\
$^1$Gaoling School of Artificial Intelligence, Renmin University of China, Beijing, China \\
$^2$Beijing Key Laboratory of Big Data Management and Analysis Methods, Beijing, China \\
$^3$Department of Computer Science and Technology, Tsinghua University, Beijing, China\\
{\tt\small \{nieshen, guohanzhong, chongxuanli\}@ruc.edu.cn;} \\ 
{\tt\small\{lucheng.lc15, yuhaoz.cs, chenyu.zheng666\}@gamil.com;} \\
}
\begin{document}

\maketitle
\begin{abstract} 
We present a unified probabilistic formulation for diffusion-based image editing, where a latent variable is edited in a task-specific manner and generally deviates from the corresponding marginal distribution induced by the original stochastic or ordinary differential equation (SDE or ODE). Instead, it defines a corresponding SDE or ODE for editing. In the formulation, we prove that the Kullback-Leibler divergence between the marginal distributions of the two SDEs gradually decreases while that for the ODEs remains as the time approaches zero, which shows the promise of SDE in image editing. Inspired by it, we provide the SDE counterparts for widely used ODE baselines in various tasks including inpainting and image-to-image translation, where SDE shows a consistent and substantial improvement. Moreover, we propose \emph{SDE-Drag}  -- a simple yet effective method built upon the SDE formulation for point-based content dragging. We build a challenging benchmark (termed \emph{DragBench}) with open-set natural, art, and AI-generated images for evaluation. A user study on DragBench indicates that SDE-Drag significantly outperforms our ODE baseline, existing diffusion-based methods, and the renowned DragGAN. Our results demonstrate the superiority and versatility of SDE in image editing and push the boundary of diffusion-based editing methods. See the project page \url{https://ml-gsai.github.io/SDE-Drag-demo/} for the code and DragBench dataset.
\end{abstract}

\section{Introduction}

The primary objective of image editing tasks is to manipulate the content of a given image in a controlled manner without deviating from the data distribution. As a representative example, point-based dragging~\citep{pan2023drag} provides a user-friendly way to manipulate the image content directly. Although the field has witnessed numerous endeavors exploring diverse editing methods~\citep{meng2021sdedit,rombach2022high,su2022dual,wu2022unifying,couairon2022diffedit,hertz2022prompt,zhao2022egsde,shi2023dragdiffusion,mou2023dragondiffusion}, leveraging (large-scale) pre-trained diffusion models~\citep{sohl2015deep,ho2020denoising,song2020score} to achieve remarkable empirical performance, there remains an absence of genuine attempts to comprehend and expound upon this process from a probabilistic perspective.

To this end, we present a unified probabilistic formulation (see Fig.~\ref{fig:overview}, left panel) for diffusion-based image editing in Sec.~\ref{sec:formulation_image_edit}, encompassing a wide range of existing work (see Tab.~\ref{tab:summary}). In the formulation, the input image undergoes inversion or noise perturbation first, followed by manipulation or domain transformation to generate a task-specific latent variable. Starting from it, a stochastic differential equation (SDE) or a probability flow ordinary differential equation (ODE) is defined by a pretrained diffusion model. Notably, they differ from the ones used for sampling because the corresponding marginal distributions mismatch. In Sec.~\ref{sec:theory}, we theoretically show the promise of the SDE formulation for general image editing:  the Kullback-Leibler (KL) divergence between marginal distributions of the SDEs decreases as the time tends to zero while that of the ODEs remains the same.

\begin{figure}[t!]
    \centering
    \begin{subfigure}{1.0\textwidth}     \includegraphics[width=\linewidth]{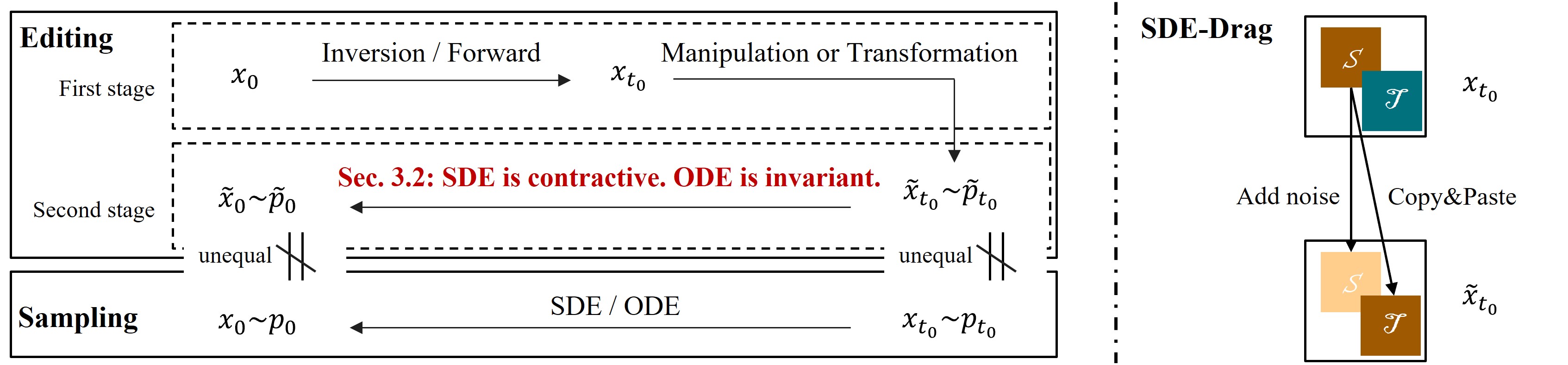}
    \caption{\textbf{Technical contributions.} The left and right panels illustrate the unified probabilistic perspective for image editing in diffusion and how to manipulate the latent variable in SDE-Drag respectively. Best viewed in color. }   \label{fig:overview}
    \vspace{.1cm}
    \end{subfigure}
    \begin{subfigure} {1.0\textwidth}
    \includegraphics[width=\linewidth]{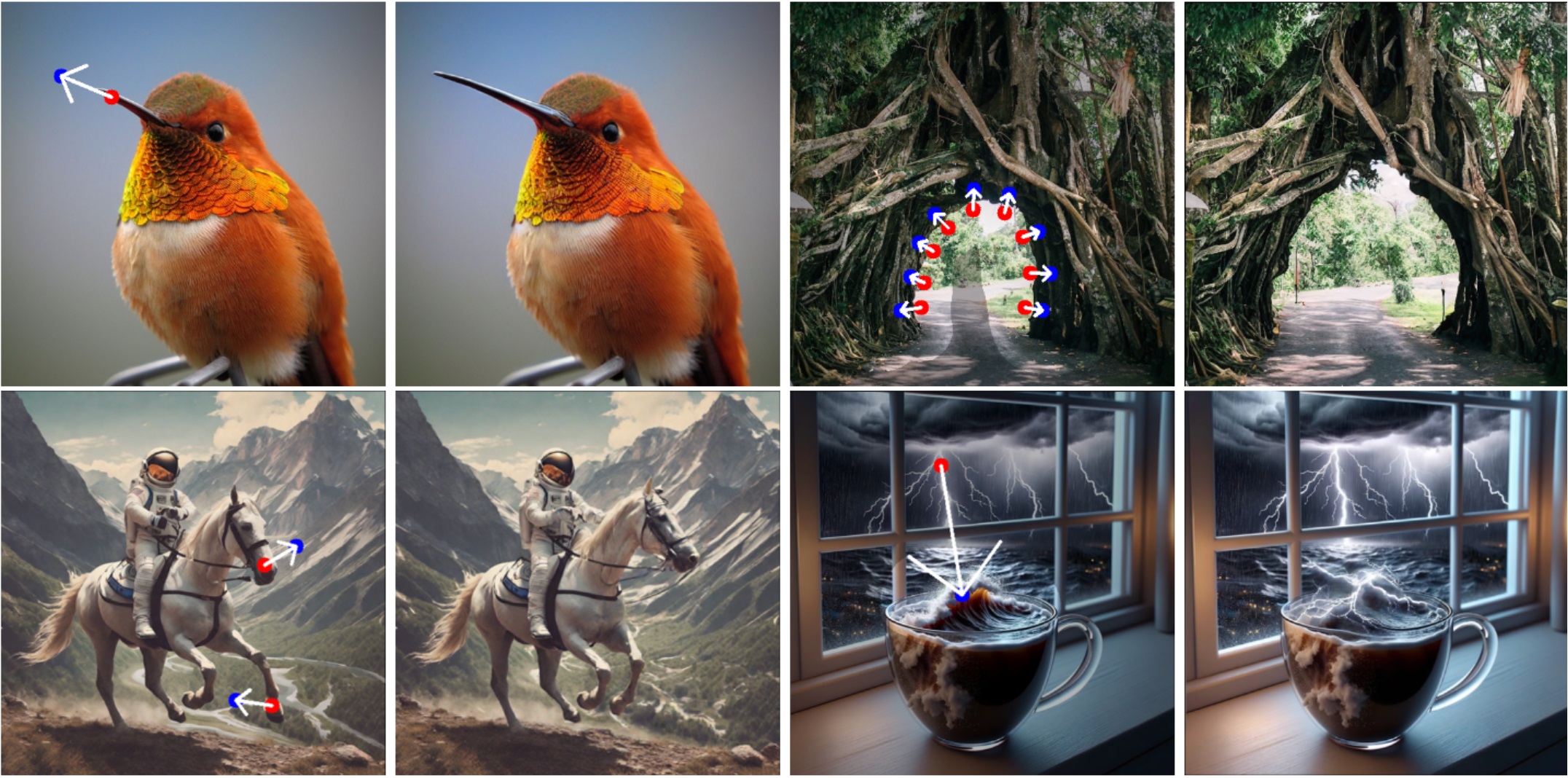}
    \caption{\textbf{Results in Dragging.} The top and bottom rows visualize the results of SDE-Drag on real images and fake ones produced by Stable Diffusion~\citep{rombach2022high,podell2023sdxl} and DALL$\cdot$E $3$ respectively. In particular, the last example shows that SDE-Drag can drag \emph{miniature lightning} into the mug, which is mentioned in the text description yet missing in the original sample from  DALL$\cdot$E $3$. See more in Appendix~\ref{app:additional_user_study}.}
    \label{fig:highlight}
    \end{subfigure}
    \caption{\textbf{Overview of the paper.} (a) Technical contributions. (b) Visualization of SDE-Drag.}
\end{figure}

Inspired by the analyses, we provide the SDE counterparts for
widely used ODE baselines in various tasks. In particular, we propose a simple yet effective method (dubbed \emph{SDE-Drag}) built upon the
SDE formulation for point-based content dragging~\citep{pan2023drag} in Sec.~\ref{sub:method_drag}. Distinct from the prior work~\citep{pan2023drag,shi2023dragdiffusion,mou2023dragondiffusion}, SDE-Drag manipulates the latent variable in a straightforward \emph{copy-and-paste} manner (see Fig.~\ref{fig:overview}, right panel) instead of performing optimization in the latent space. We further build a challenging benchmark (termed \emph{DragBench}) with $100$ natural, art, and AI-generated samples open-set for evaluation in Sec.~\ref{sub:dragbench}.


In Sec.~\ref{sec:expriment}, we conduct extensive experiments on various tasks including inpainting, image-to-image translation, and dragging, where the SDE counterparts show a consistent and substantial improvement over the widely used baselines. Notably, SDE-Drag can not only deal with open-set images but also improve the alignment between the prompt and sample from advanced AI-painting systems like Stable Diffusion and DALL$\cdot$E $3$ (see Fig.~\ref{fig:highlight}).

\section{Background}
\label{sec:background}

We present preliminaries on diffusion models, SDE and ODE samplers, and data reconstruction algorithms in this section and discuss other related work in detail in Appendix~\ref{app:related_work}.

\textbf{Diffusion models.} Let $\vx_0$ denote a $D$-dimensional random variable follows a data distribution $q_0(\vx_0)$. Diffusion models~\citep{sohl2015deep,ho2020denoising,song2020score} introduce a forward process $\{\vx_t \}_{t \in [0, T]}$ that gradually adds Gaussian noise to $\vx_0$  and defines:
\begin{align}
    q_{0t}(\vx_t|\vx_0) = \mathcal{N}(\vx_t| \sqrt{\alpha_t} \vx_0, (1 - \alpha_t) \mI),
\end{align}
where $\alpha_t$ is a function of $t$ that decreases monotonically. Besides, $\alpha_0=1$ and $\alpha_T$ is sufficiently small to ensure $q_{0T}(\vx_T|\vx_0) \approx \mathcal{N}(0, \mI)$. We denote the distribution of $\vx_t$ as $q_t(\vx_t)$. It is equivalent to represent the forward process as the following \emph{stochastic differential equation} (SDE): 
\begin{align}
\label{equ:forward}
    d\vx_t = f(t) \vx_t dt + g(t) d \vw_t,
\end{align} 
where $\vw_t$ is a standard Wiener process. For simplicity, we consider the VP type SDE~\citep{song2020score}, i.e., $f(t) = \frac{d \log \alpha_t}{2 dt}$ and $g^2(t) = -\frac{d \alpha_t}{dt} - \frac{d \log \alpha_t}{dt}(1-\alpha_t)$. The forward process has an equivalent backward process referred to as the \emph{reverse-time SDE}, which recovers $q_0(\vx_0)$ from  $q_T(\vx_T)$:
\begin{align}
    \label{equ:reverse_sde_exact}
    d \vx_t = \left[f(t) \vx_t -g^2(t) \nabla_{\vx_t} \log q_t(\vx_t) \right] dt + g(t) d \bar{\vw}_t,
\end{align}
where $\bar{\vw}_t$ is a standard Wiener process with backward time and $\nabla_{\vx_t} \log q_t(\vx_t)$ is the score function of $q_t(\vx_t)$. A diffusion model parameterizes the score functions by a neural network  $\vs_{\theta}(\vx_t, t)$, which can be equivalently reparametrized as a noise prediction model
$\vepsilon_{\theta}(\vx_t, t)$. Taking $\vepsilon_{\theta}(\vx_t, t)$ as an example, the reverse-time SDE is given by
\begin{align}
\label{equ:reverse_sde}
    d \vx_t = \left[f(t) \vx_t  + \frac{g^2(t)}{\sqrt{1 - \alpha_t}} \vepsilon_{\theta}(\vx_t, t) \right] dt + g(t) d \bar{\vw}_t.
\end{align}

We denote the marginal distribution of $\vx_t$ defined by Eq.~(\ref{equ:reverse_sde}) as $p_{t}(\vx_t)$~\footnote{We omit the dependence on $\theta$ in $p_{t}(\vx_t)$ for simplicity.}. Insightfully, there exists a probability flow \emph{ordinary differential equation} (ODE), whose marginal distribution for any given $t$ matches $p_{t}(\vx_t)$~\citep{song2020score}. Formally, the ODE is defined as:
\begin{align}
\label{equ:reverse_ode}
    d \vx_t = \left[f(t) \vx_t + \frac{g^2(t)}{2 \sqrt{1 - \alpha_t}}  \vepsilon_{\theta}(\vx_t, t)\right] dt.
\end{align}

\textbf{Samplers.} Samples can be generated from the diffusion model by discretizing the reverse SDE (i.e., Eq.~(\ref{equ:reverse_sde}))~\citep{ho2020denoising,bao2022analytic,lu2022dpm-plus} or ODE (i.e., Eq.~(\ref{equ:reverse_ode}))~\citep{song2020denoising, lu2022dpm} from $T$ to $0$. The ODE solvers often outperform SDE solvers given a few (e.g., $10-50$) steps and therefore are popular in sampling and editing. 
In particular, as a representative and widely adopted approach, DDIM~\citep{song2020denoising} produces samples as follows:
\begin{align}
    \vx_t = \sqrt{\alpha_{t}} \left( \frac{\vx_s - \sqrt{1 - \alpha_s} \vepsilon_{\theta}(\vx_s, s)}{\sqrt{\alpha_s}} \right) + \sqrt{1 - \alpha_{t} - \sigma_s^2} \vepsilon_{\theta}(\vx_s, s) + \sigma_s \bar{\vw}_s,
    \label{eq:dis_ddim}
\end{align}
where $s > t$, $\sigma_s = \eta \sqrt{(1 - \alpha_t) / (1 - \alpha_s)} \sqrt{1 - \alpha_s / \alpha_t}$, $\eta = 0$ and $\bar{\vw}_s$ is a standard Gaussian noise. For fairness, we employ DDPM~\citep{ho2020denoising} as the SDE sampler in our experiment, which is also given by Eq.~(\ref{eq:dis_ddim}) with $\eta = 1$. We present the algorithms of both samplers in Appendix~\ref{appendix:bg}.

\textbf{Data reconstruction.} To preserve relevant information about the input image, most editing methods require a way to produce a latent variable that can reconstruct the input. This can be easily implemented by ODE samplers because of its invertability. In fact, taking DDIM as an example, the latent variable can be directly produced by Eq.~(\ref{eq:dis_ddim}) with $s < t$ and $\eta =0$, which is called \emph{DDIM inversion}. 

However, the SDE is nontrivial to invert because of the noise injected. The previous work~\citep{wu2022unifying} proposes to invert the forward process of DDPM as follows:
\begin{align}
    \vx_s &= \sqrt{\frac{\alpha_s}{\alpha_t}} \vx_t + \sqrt{1 - \frac{\alpha_s}{\alpha_t}} \vw, \vw \sim \mathcal{N}(0, \mI), \label{eq:addnoise_dis} \\ 
    \bar{\vw}'_s &= \frac{1}{\sigma_s}\left(\vx_t - \sqrt{\alpha_{t}} \left( \frac{\vx_s - \sqrt{1 - \alpha_s} \vepsilon_{\theta}(\vx_s, s)}{\sqrt{\alpha_s}} \right) - \sqrt{1 - \alpha_{t} - \sigma_s^2} \vepsilon_{\theta}(\vx_s, s) \right), \label{eq:sde_inversion}
\end{align}
where $s > t$. Intuitively, it first adds noise to obtain latent variables from data in Eq.~(\ref{eq:addnoise_dis}) and solves the ``noise'' $\bar{\vw}_s'$ to be added to reconstruct the data through Eq.~(\ref{eq:dis_ddim}). It is easy to check that if we plug in $\bar{\vw}_s'$ and $\vx_s$ into the DDPM sampler in Eq.~(\ref{eq:dis_ddim}) with $\eta = 1$, we can reconstruct $\vx_t$ perfectly. Note that it can be extended to all SDE solvers and we refer to the sampling process of SDE given $\bar{\vw}_s'$ to \emph{Cycle-SDE} for coherence. We present more details and algorithms of both reconstruction processes, as well as preliminary experiments on their reconstruction ability in Appendix~\ref{appendix:bg}.

\section{The Blessing of Randomness in Diffusion-based Image Editing}
\label{sec:method}

In this section, we formulate a broad family of image-editing methods in diffusion models from a unified probabilistic perspective in Sec.~\ref{sec:formulation_image_edit} and show the potential benefits of SDE  in Sec.~\ref{sec:theory}.

\begin{table}
    \centering
    \caption{\textbf{Summary of prior work in a unified formulation.} I2I means the image-to-image translation task. As for the editing type, AM means to add a mask to combine the sampling result and noise input. CL and CP mean to change the label and prompt during sampling respectively. OL means to optimize the latent variable via gradient descent. See more details in Appendix~\ref{app:instance}. In this paper, we add SDE counterparts for the ODE-based methods, as highlighted in blue in the last column.}
    \label{tab:summary}
    \small
    \begin{tabular}{lccccc}
      \toprule
     Task & Methods & $\vx_{0} \rightarrow \vx_{t_0}$ & Edit & $\tilde{\vx}_{t_0} \rightarrow \tilde{\vx}_{0}$ & {\color{blue}New results} \\
      \midrule
      Inpainting & \citet{meng2021sdedit,zhao2022egsde} & Noise  &  AM & SDE & \\
      \midrule
      Inpainting & \citet{rombach2022high} & Noise  &  AM & ODE & {\color{blue} Tab.~\ref{tab:inpainting}}\\
      \midrule
      I2I & \citet{couairon2022diffedit,hertz2022prompt} & ODE &  CP & ODE & {\color{blue} Fig.~\ref{fig:diffedit}}\\
      \midrule
      I2I & \citet{su2022dual} & ODE   &  CL & ODE & {\color{blue}Tab.~\ref{tab:img_edit_ddib}} \\
      \midrule
      I2I & \citet{wu2022unifying} & Noise  &   CL  & Cycle-SDE \\
      \midrule
      Dragging & \citet{shi2023dragdiffusion,mou2023dragondiffusion} & ODE & OL & ODE &{\color{blue} Fig.~\ref{fig:usr_drag}} \\
      \bottomrule
    \end{tabular}
\end{table}

\subsection{A General Probabilistic Formulation for Image Editing}
\label{sec:formulation_image_edit}


In particular, given a pretrained diffusion model parameterized by $\vtheta$, an image $\vx_0$ to be edited, and a potential condition $c$ such as a label or text description, we identify two common stages for our general probabilistic formulation.

The first stage initially produces an intermediate latent variable $\vx_{t_0}$ through either a noise-adding process (e.g., Eq.~(\ref{eq:addnoise_dis})) or an inversion process (e.g., Eq.~(\ref{eq:dis_ddim}) with $s < t$ and $\eta=0$), where $t_0 \in(0, T]$ is a hyperparameter. Then, the latent variable $\vx_{t_0}$ is manipulated manually or transferred to a different data domain by changing the condition $c$ in a task-specific manner\footnote{Here we omit the detail that some existing methods manipulate the latent in multiple steps (see details in Appendix~\ref{app:imple_details}). Nevertheless, our analyses in Sec.~\ref{sec:theory} apply to both the one-step and multiple-step editing.}.
We refer to the edited latent variable as $\tilde{\vx}_{t_0}$, which deviates from the corresponding marginal distribution $p_{t_0}(\cdot)$ in general. 

The second stage starts from $\tilde{\vx}_{t_0}$ and produces the edited image $\tilde{\vx}_0$ following either an ODE solver (e.g., Eq.~(\ref{eq:dis_ddim}) with $\eta = 0$), an SDE Solver (e.g., Eq.~(\ref{eq:dis_ddim}) with $\eta = 1$), or a Cycle-SDE process  (e.g., Eq.~(\ref{eq:dis_ddim}) with $\eta = 1$ and $\bar{\vw}_s'$ from Eq.~(\ref{eq:sde_inversion})). 
This formulation encompasses a board family of methods, as summarized in Tab.~\ref{tab:summary}. We illustrate these two stages in the left panel of Fig.~\ref{fig:overview} and please refer to Appendix~\ref{app:instance} for more details.

Notably, the edited image $\tilde{\vx}_{0}$ does not follow $p_0(\cdot)$ in general because the marginal distributions at time $t_0$ mismatch, making the editing process distinct from the well-studied sampling process. However, to the best of our knowledge, most of the prior work for image editing is empirical and there remains an absence of attempts to comprehend and expound upon the editing process precisely. Moreover, as mentioned in Sec.~\ref{sec:background} and Tab.~\ref{tab:summary}, due to the popularity of ODE in sampling and the property of easy to reverse, ODE is widely adopted in editing methods, and existing SDE-based approaches are rare and focus on a very specific task.  

Our probabilistic perspective provides a general and principled way to study the editing process, from which we attempt to answer the following natural and fundamental problems in the paper. \emph{Is there any guarantee on the sample quality of the edited image (i.e., the gap between its distribution and the data distribution)? Do the SDE and ODE formulations make any difference in editing?}

\subsection{Theoretical Analyses}
\label{sec:theory}

We present theoretical analyses here and we employ the widely adopted KL divergence to measure the closeness of distributions by default.
Throughout the paper, we focus on analyzing the different behaviors of SDE and ODE formulations with mismatched prior distributions. Therefore, we assume the model characterizes the true score functions (see Sec.~\ref{sec:conclusion}) for simplicity. 
In such a context, all of ODE, SDE, and Cycle-SDE can recover the data distribution without editing. Formally, we assume that they characterize the same marginal distribution $p_t$ and $p_t = q_t$  for any $t \in [0, T]$.
We present a discussion in Appendix~\ref{app:more_discuss} about the rationality of using $p$ as a surrogate for $q$ both theoretically and empirically.
Besides, we consider general cases for all $0\le s < t \le T$. By setting different values of $s$ and $t$, our results apply to both one-step and multiple-step editing. Please see Appendix~\ref{appendix:proof} for more details and all proofs.

We first prove that the KL divergence between the marginal distributions of two SDEs with mismatched prior distributions gradually decreases while that for ODEs remains as the time approaches zero as shown by the following Theorems~\ref{thm：main_sde}-\ref{thm:main_ode}, respectively.

\begin{theorem}[Contraction of SDEs, see Appendix~\ref{app:proof_sampler}]\label{thm：main_sde}
    Let $\tilde p_{t}$ and $ p_{t}$ be the marginal distributions of two SDEs (see Eq.~(\ref{equ:reverse_sde})) at time $t$. For any $0 \le s < t \le T$, if $\tilde p_{t} \neq p_{t}$, then under some mild regularity conditions listed in 
    \citet[Assumption A.1]{DBLP:conf/icml/LuZB0LZ22}, it holds that
    \begin{align}
        \KL (\tilde p_s\Vert p_s) = \KL (\tilde p_t\Vert p_t) - \int_{s}^t g(\tau)^2 \Fisher(\tilde p_\tau\Vert p_\tau) d\tau
       < \KL (\tilde p_{t}\Vert p_{t}),
\end{align}
where $\KL(\cdot \Vert \cdot)$ denote the KL divergence and $\Fisher(\cdot \Vert \cdot)$ denote the Fisher divergence.
\end{theorem}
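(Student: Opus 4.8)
The plan is to exploit the fact that, under the true-score assumption, both marginals $p_t$ and $\tilde p_t$ are transported by the \emph{same} reverse-time dynamics: they obey one and the same linear Fokker--Planck equation and differ only in the law imposed at time $t$. Writing the reverse-SDE drift as $v(\vx,\tau)=f(\tau)\vx-g^2(\tau)\nabla\log p_\tau(\vx)$ with diffusion coefficient $g(\tau)$, both densities satisfy
\[
\partial_\tau \rho_\tau=\nabla\cdot\big(v(\cdot,\tau)\,\rho_\tau\big)-\tfrac12 g^2(\tau)\,\Delta\rho_\tau ,
\]
written in the original time $\tau$ in which sampling runs from $T$ down to $0$; the substitution $u=T-\tau$ turns this into a standard forward Kolmogorov equation $\partial_u\rho=-\nabla\cdot(v\rho)+\tfrac12 g^2\Delta\rho$ with non-negative diffusion, which is the form in which the classical estimates hold. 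The crucial structural point is that $v$ is a common, prescribed vector field for both flows, so the evolution is a single linear Markov semigroup acting on two different initial laws.

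First I would differentiate the relative entropy along the flow. Using $\KL(\tilde p_\tau\Vert p_\tau)=\int \tilde p_\tau\log(\tilde p_\tau/p_\tau)\,d\vx$ together with mass conservation $\int\partial_\tau\tilde p_\tau\,d\vx=0$, I obtain
\[
\frac{d}{d\tau}\KL(\tilde p_\tau\Vert p_\tau)=\int \partial_\tau\tilde p_\tau\,\log\frac{\tilde p_\tau}{p_\tau}\,d\vx-\int\frac{\tilde p_\tau}{p_\tau}\,\partial_\tau p_\tau\,d\vx .
\]
Substituting the two Fokker--Planck equations and integrating by parts, the transport contributions from the shared drift $v$ combine into a single divergence and cancel, while the two Laplacian contributions assemble—after a second integration by parts—into the negative of a weighted gradient norm. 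This is the relative-entropy dissipation (de Bruijn-type) identity $\tfrac{d}{du}\KL(\tilde p\Vert p)=-\tfrac12 g^2\int \tilde p\,\big|\nabla\log(\tilde p/p)\big|^2\,d\vx$ in the forward time $u=T-\tau$. Converting back to $\tau$ (so $\tfrac{d}{d\tau}=-\tfrac{d}{du}$) and identifying the right-hand integral with $g(\tau)^2\,\Fisher(\tilde p_\tau\Vert p_\tau)$ under the paper's normalization of the Fisher divergence gives $\tfrac{d}{d\tau}\KL(\tilde p_\tau\Vert p_\tau)=g(\tau)^2\Fisher(\tilde p_\tau\Vert p_\tau)$; integrating over $[s,t]$ yields exactly the claimed equality.

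For the strict inequality I would use that the Fisher divergence is non-negative and vanishes only when $\tilde p_\tau=p_\tau$. Since $\tilde p_t\neq p_t$ and the densities evolve by a well-posed PDE, uniqueness forces $\tilde p_\tau\neq p_\tau$ throughout $[s,t]$ (agreement at any interior time would, by running the flow back to $t$, contradict $\tilde p_t\neq p_t$). Hence the integrand is positive on a set of positive measure and $\int_s^t g(\tau)^2\Fisher(\tilde p_\tau\Vert p_\tau)\,d\tau>0$, giving $\KL(\tilde p_s\Vert p_s)<\KL(\tilde p_t\Vert p_t)$.

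I expect the real work to lie in the analytic justification rather than the formal computation. The delicate steps are (i) differentiation under the integral sign and finiteness of $\KL$ and $\Fisher$ along the flow; (ii) the two integration-by-parts steps, whose boundary terms at infinity must vanish; and (iii) positivity $\tilde p_\tau,p_\tau>0$ with enough regularity that $\log(\tilde p_\tau/p_\tau)$ and its gradient are well defined and integrable against $\tilde p_\tau$. These are precisely the smoothness, positivity, and tail-decay hypotheses imported from \citet[Assumption A.1]{DBLP:conf/icml/LuZB0LZ22}, which I would invoke exactly to discard the boundary terms and legitimize the manipulations, treating everything else as the standard Fokker--Planck relative-entropy argument.
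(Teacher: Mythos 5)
Your proposal is correct and follows essentially the same route as the paper's proof: both derive the dissipation identity $\frac{\partial}{\partial t}\KL(\tilde p_t \Vert p_t) = g(t)^2 \Fisher(\tilde p_t \Vert p_t)$ from the Fokker--Planck equations of the two flows (the shared drift terms cancel after integration by parts, with the regularity assumptions of \citet[Assumption A.1]{DBLP:conf/icml/LuZB0LZ22} justifying the boundary terms vanishing), and then integrate over $[s,t]$. The differences are cosmetic -- you keep the Laplacian explicit and pass to forward time $u = T-\tau$ where the paper absorbs the diffusion into a density-dependent drift $\vh$; your written transport term has a flipped sign, which is immaterial since the common drift cancels identically in the KL derivative; and you supply an explicit positivity argument for the strict inequality that the paper asserts without comment.
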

\begin{theorem}[Invariance of ODEs, see Appendix~\ref{app:proof_sampler}] \label{thm:main_ode}
    Let $\tilde p_{t}$ and $ p_{t}$ be the marginal distributions of two ODEs (see Eq.~(\ref{equ:reverse_ode})) at time $t$. For any $0 \le s < t \le T$, under some mild regularity conditions listed in  
    \citet[Assumption A.1]{DBLP:conf/icml/LuZB0LZ22}, it holds that 
    \begin{align}
        \KL (\tilde p_s\Vert p_s)
       = \KL (\tilde p_{t}\Vert p_{t}).
\end{align}
\end{theorem}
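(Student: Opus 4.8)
The plan is to handle both theorems through a single relative-entropy dissipation computation, exploiting the structural fact that $p_t$ and $\tilde p_t$ are transported by the \emph{same} time-dependent vector field. Under the standing assumption that the model recovers the true score, the drift of the reverse SDE Eq.~(\ref{equ:reverse_sde}), namely $f(t)\vx - g^2(t)\nabla_{\vx}\log p_t(\vx)$, and the velocity of the ODE Eq.~(\ref{equ:reverse_ode}), $f(t)\vx - \tfrac12 g^2(t)\nabla_{\vx}\log p_t(\vx)$, are fixed fields depending only on the true marginal $p_t$, not on the terminal condition at time $T$. Consequently $p_t$ and the perturbed $\tilde p_t$ solve the \emph{same} linear PDE and differ only in their terminal data: a Fokker--Planck equation in the SDE case and a continuity (Liouville) equation in the ODE case. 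Because both processes run backward in $t$, I would first pass to the forward auxiliary time $u = T - t$ so that both densities satisfy a standard forward-in-$u$ evolution of the form $\partial_u p = -\nabla\cdot(b\,p) + \tfrac12 g^2 \Delta p$, carry out the computation there, and translate back to $t$ at the end.

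The core step is to differentiate $\KL(\tilde p_u \Vert p_u)$ in $u$ and substitute the evolution equations. Writing $\rho = \tilde p_u$, $\mu = p_u$ and using mass conservation, $\frac{d}{du}\KL(\rho\Vert\mu) = \int (\partial_u\rho)\log\frac{\rho}{\mu} - \int \frac{\rho}{\mu}\partial_u\mu$. Plugging in the two evolution equations and integrating by parts, I expect the \textbf{transport/drift terms to cancel exactly}, contributing nothing regardless of $b$, while the second-order diffusion terms recombine into a single sign-definite quantity $-\tfrac12 g^2 \int \rho\,\|\nabla\log(\rho/\mu)\|^2 = -\tfrac12 g^2\Fisher(\rho\Vert\mu)$. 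For the ODE there is no diffusion term, so the cancellation of the transport part alone yields $\frac{d}{du}\KL = 0$, giving Theorem~\ref{thm:main_ode} at once; equivalently one may observe that the common ODE flow map is a diffeomorphism applied to \emph{both} distributions and KL is invariant under a shared invertible change of variables. For the SDE, translating back via $u = T-t$ gives $\frac{d}{dt}\KL(\tilde p_t\Vert p_t) = \tfrac12 g^2(t)\,\Fisher(\tilde p_t\Vert p_t) \ge 0$, and integrating from $s$ to $t$ produces the stated identity (up to the normalization convention adopted for $\Fisher$).

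To upgrade this to the strict inequality of the SDE contraction statement, I would argue that $\Fisher(\tilde p_\tau\Vert p_\tau) > 0$ for almost every $\tau \in [s,t]$. If the two densities agreed at any single time, uniqueness of solutions to the Fokker--Planck equation with the common fixed drift would force $\tilde p_\tau = p_\tau$ for \emph{all} times, contradicting the hypothesis $\tilde p_t \neq p_t$. Since $\Fisher \ge 0$ everywhere and is positive on a set of positive measure, the integral term is strictly positive, yielding the strict inequality.

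The main obstacle I anticipate is not the algebra of the dissipation identity but its rigorous justification: the integrations by parts silently discard boundary terms at infinity, and the manipulations require that $\tilde p_\tau$, $p_\tau$, their logarithmic gradients, and the Fisher integrand be sufficiently smooth, integrable, and decaying. This is precisely what the regularity conditions of \citet[Assumption A.1]{DBLP:conf/icml/LuZB0LZ22} are invoked to supply, and I would lean on them to validate each integration by parts and the interchange of differentiation and integration. A secondary but conceptually essential point worth stating explicitly is that $\tilde p_\tau$ evolves under the fixed score field $\nabla\log p_\tau$ rather than its own score; this is exactly what makes the two processes share one common linear PDE, and it is the hinge on which both proofs turn.
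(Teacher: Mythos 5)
Your proposal is correct and follows essentially the same route as the paper's proof: both densities satisfy the same continuity (Fokker--Planck with zero diffusion) equation because the two ODEs share one fixed vector field, and differentiating $\KL(\tilde p_t \Vert p_t)$, integrating by parts, and observing that the drift terms cancel exactly gives $\tfrac{\partial}{\partial t}\KL(\tilde p_t\Vert p_t)=0$, which is precisely the paper's computation (Eq.~(\ref{eq:time_derivative}) specialized to $\vh=\tilde\vh$). Your aside that KL is invariant under pushforward by the common flow map is a nice equivalent viewpoint, but it is not needed and does not change the substance of the argument.
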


Theorems~\ref{thm：main_sde}-\ref{thm:main_ode} share the same spirit as~\cite{lyu2012interpretation,DBLP:conf/icml/LuZB0LZ22}. In particular, they follow from the Fokker-Plank equation and a detailed computation of the time derivative of $\KL (\tilde p_{t}\Vert {p}_{t})$.  

Moreover, with a stronger yet standard assumption (i.e., the log-Sobolev inequality holds for the data distribution), we can obtain a linear convergence rate in the SDE formulation. Namely, we have  $\KL (\tilde{p}_s\Vert {p}_s) = \exp(O(-(t-s))) \KL (\tilde{p}_t\Vert {p}_t)$. We present the formal statement and more details in Appendix~\ref{app:convergence} for completeness. Further, we conduct a toy experiment on a one-dimensional Gaussian mixture data where the true score function has an analytic form and the log-Sobolev inequality holds to illustrate the above results clearer in Appendix~\ref{app:toy}.

The analysis of Cycle-SDE is more difficult because the random variables $\bar{\vw}_s'$ in the sampling path are correlated. Therefore, it is highly nontrivial to obtain a quantitative convergence rate for Cycle-SDE. However, we can still prove a similar ``contractive'' result as in SDE using the data processing inequality for KL divergence~\citep{cover1999elements}. The result is presented in the following theorem.
\begin{theorem}[Contraction of Cycle-SDEs, see Appendix~\ref{app:proof_cycle}]
\label{thm:main_cycle-sde}
 Let $p_{t}$ and $\tilde{p}_{t}$ be the marginal distributions of two Cycle-SDEs (e.g., see Eq.~(\ref{eq:dis_ddim}) with $\eta = 1$ and $\bar{\vw}_s'$ from Eq.~(\ref{eq:sde_inversion})) at time $t$, respectively. For any $0 \le s < t \le T$, if $p_{t} \neq \tilde{p}_{t}$, then
\begin{align}
    \KL (\tilde p_{s}\Vert{p}_{s})
       < \KL (\tilde p_{t}\Vert{p}_{t}).
\end{align}
\end{theorem}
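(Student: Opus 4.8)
The plan is to establish the contraction through the \emph{data-processing inequality} (DPI) for the KL divergence, i.e.\ $\KL(\mathcal{K}\mu \Vert \mathcal{K}\nu) \le \KL(\mu\Vert\nu)$ for any Markov kernel $\mathcal{K}$, rather than through the explicit time-derivative computation used for the SDE and ODE results above (cf.\ Theorem~\ref{thm:main_ode}). The latter route is unavailable here: once the stored noise $\{\bar{\vw}'_s\}$ is fixed, the Cycle-SDE update is a discrete \emph{deterministic} map, so there is no clean Fokker--Planck identity to differentiate and no Fisher-divergence term to extract. I would first reduce to a single reconstruction step from time $t$ to time $s$; the general statement for $0\le s<t\le T$ would then follow by composing the one-step contractions along the discretization grid between $s$ and $t$, the strictness surviving because $\tilde p_\tau\neq p_\tau$ propagates along the chain.

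For the single step, I would exploit that, under the true-score assumption, the original Cycle-SDE is the exact inverse of the forward Gaussian noising in Eq.~(\ref{eq:addnoise_dis}); hence it reproduces the forward marginals ($p_t=q_t$), and its state-to-state transition, after the stored noise is marginalized out, coincides with the true posterior kernel $q(\vx_s\mid\vx_t)$ of the diffusion. The natural attempt is to argue that both Cycle-SDEs push their marginals $p_t,\tilde p_t$ forward through this same stochastic kernel and then invoke DPI to obtain $\KL(\tilde p_s\Vert p_s)\le\KL(\tilde p_t\Vert p_t)$, with the \emph{strict} inequality coming from $\tilde p_t\neq p_t$ together with the genuinely non-injective randomness the kernel injects. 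Concretely, I would condition on a fixed realization of $\bar{\vw}'$: given the noise the update is a deterministic bijection in the state and therefore preserves the conditional KL, and I would then recombine the realizations via the joint convexity of the KL divergence, which is the information-theoretic content underlying DPI.

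The main obstacle -- and the reason a quantitative rate analogous to the SDE case is out of reach -- is precisely that $\bar{\vw}'$ is \emph{not} fresh noise but a deterministic functional of the original trajectory, so it is correlated with the state along the sampling path. Consequently the edited process does not literally apply the same marginal kernel as the original one: writing the edited update explicitly, one finds a residual correction governed by the score difference $\vepsilon_\theta(\tilde{\vx}_t)-\vepsilon_\theta(\vx_t)$ that does not vanish, so a single shared Markov kernel does not exist and the naive DPI step is invalid. I would therefore carry the stored noise along as an auxiliary variable, work with the \emph{joint} laws of (state, noise), and control the mismatch through the KL chain rule, using that the noise sequence has a common marginal law under both processes. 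The crux is to show that marginalizing out the correlated noise still yields a net \emph{decrease} rather than mere non-increase; making the bijectivity-at-fixed-noise argument interact correctly with the convex mixture over noise realizations, despite the path correlation, is the delicate step on which the whole proof hinges.
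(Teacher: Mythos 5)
Your first two paragraphs land exactly on the paper's proof, and your third paragraph then talks you out of it — that retreat is where the proposal fails. The paper's argument is precisely the ``natural attempt'' you describe: it treats the two Cycle-SDEs as Markov processes that \emph{share} the state-to-state transition kernel $p_{s|t}(\vx_s|\vx_t)$ (the conditional law of $\vx_s$ given $\vx_t$, with the stored noise marginalized out), forms the joints $p_{t,s}=p_{s|t}\,p_t$ and $\tilde p_{t,s}=p_{s|t}\,\tilde p_t$, and expands $\KL(\tilde p_{t,s}\Vert p_{t,s})$ by the chain rule in both directions: expanding through $t$ first gives $\KL(\tilde p_t\Vert p_t)$ (the conditional term vanishes because the kernel is shared), while expanding through $s$ first gives $\KL(\tilde p_s\Vert p_s)+\E_{p_s}\left[\KL(\tilde p_{t|s}\Vert p_{t|s})\right]$, whence the inequality. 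Your objection --- that in the implemented algorithm $\bar{\vw}'$ is correlated with the edited state, so the edited process need not apply the same kernel as the original one --- is a fair criticism of modeling fidelity, but the theorem is stated and proved at the level of the idealized model in which the two Cycle-SDEs differ \emph{only} in their law at time $t$; the paper invokes the path correlation only as the obstruction to a quantitative rate, not to the qualitative contraction. By rejecting that idealization rather than adopting it, you replace a two-line argument with a program (joint laws of state and noise, controlling the mismatch through the chain rule) whose crux you explicitly leave unresolved; a proposal that ends at ``the delicate step on which the whole proof hinges'' is not a proof.

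The second concrete gap is strictness, where your mechanism would fail even inside the idealized model. At a fixed noise realization the update is a deterministic bijection, so conditional KL is exactly \emph{preserved}; averaging over noise via joint convexity then yields only $\le$, and ``genuinely non-injective randomness'' is not an argument that upgrades it. The paper gets strictness from a different place: if equality held, the two chain-rule expansions would force $\E_{p_s}\left[\KL(\tilde p_{t|s}\Vert p_{t|s})\right]=0$, i.e.\ $\tilde p_{t|s}=p_{t|s}$ almost surely; combined with the shared kernel $\tilde p_{s|t}=p_{s|t}$ and the strict positivity of all densities (guaranteed by the Gaussian noise injected in the Cycle-SDE), Lemma~\ref{lemma:equ_con_mar} (equality of both conditionals implies equality of marginals) gives $\tilde p_t=p_t$, contradicting the hypothesis. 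This two-sided chain-rule-plus-positivity argument is the missing idea in your proposal; without it, you obtain at best a non-strict inequality, and only under the modeling assumption you declined to make.
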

In summary, we show that the additional noise in the SDE formulation (including both the original SDE and Cycle-SDE) provides a way to reduce the gap caused by mismatched prior distributions, while the gap remains invariant in the ODE formulation, suggesting the blessing of randomness in diffusion-based image editing. Inspired by the theory, we propose the SDE counterparts for representative ODE baselines across various editing tasks (as highlighted in blue in Tab.~\ref{tab:summary}) and show a consistent and substantial improvement in all settings (see Sec.~\ref{sec:expriment}).

\begin{algorithm}[t!]
    \caption{SDE-Drag}
    \label{alg:drag}
    \begin{algorithmic}
        \REQUIRE $\vx_0$, $a_s$, $a_t$; ~hyper-parameters: $r$, $m$, $n$, $t_0 \in (0, T]$, $\alpha \in (1,+\infty)$, $\beta \in [0,1]$
        \FOR{$j$ in $\{1, 2 \dots, m \}$ }
        \STATE Obtain $\vx_{t_0}, \{\bar{\vw}'_i\}_{i=1}^{n}$  according to Eq.~(\ref{eq:addnoise_dis}-\ref{eq:sde_inversion}) with $\vx_0$, $n$ and $t_0$ \hfill $\blacktriangleright$ Add and memorize noise
        \STATE Obtain $\tilde{\vx}_{t_0}$ according to Eq.~(\ref{eq:sde-drag}) with $\alpha$, $\beta$, $r$ and $\vx_{t_0}$ \hfill $\blacktriangleright$ Manipulate the latent variable 
        \STATE Obtain $\tilde{\vx}_0$ according to Eq.~(\ref{eq:dis_ddim}) with $\eta = 1$, $\{\bar{\vw}'_i\}_{i=1}^{n}$ and $\tilde{\vx}_{t_0}$ \hfill $\blacktriangleright$ Sample with Cycle-SDE
        \STATE $\vx_{0} \leftarrow \tilde{\vx}_{0}$ \hfill $\blacktriangleright$ Update the image
        \ENDFOR
        \STATE \textbf{Return} $\vx_0$
    \end{algorithmic}
\end{algorithm}

\section{Drag Your Diffusion in the SDE Formulation}
\label{sec:drag}

Among various image editing tasks, point-based dragging~\citep{pan2023drag} provides a user-friendly way to manipulate the image content directly and has attracted more and more attention in diffusion models recently~\citep{shi2023dragdiffusion,mou2023dragondiffusion}. Inspired by the formulation and theory in Sec.~\ref{sec:method}, we investigate how to drag images in the SDE formulation (see Sec.~\ref{sub:method_drag}) and introduce a challenging benchmark with $100$ open-set images for evaluation in Sec.~\ref{sub:dragbench}.

\subsection{SDE-Drag}
\label{sub:method_drag}

We propose a simple yet effective dragging algorithm based on the SDE formulation, dubbed \emph{SDE-Drag}. We present SDE-Drag following the unified formulation in Sec.~\ref{sec:method}.

In the first stage, we first add noise to obtain the intermediate representation $\vx_{t_0}$ and obtain a series of ``noise'' $\bar{\vw}_s'$ according to Eqs.~(\ref{eq:addnoise_dis}-\ref{eq:sde_inversion}). Distinct from the prior work~\citep{pan2023drag,shi2023dragdiffusion,mou2023dragondiffusion}, we manipulate the latent variable in a straightforward \emph{copy-and-paste} manner (see Fig.~\ref{fig:overview}, right panel) instead of performing optimization in the latent space. In particular, given a source point $a_s$ and a target point $a_t$ provided by the user, let $\mathcal{S}$ and $\mathcal{T}$ denote the two squares with side length $2r$ centered at the two points, respectively, where $r$ is a hyperparameter. Then, we first make a copy of $ \vx_{t_0}$ as $\tilde{\vx}_{t_0}$ and modify  $\tilde{\vx}_{t_0}$ as follows:
\begin{align}\label{eq:sde-drag}
     \tilde{\vx}_{t_0} \left[  \mathcal{T} \right] = \alpha \vx_{t_0} \left[  \mathcal{S} \right], \tilde{\vx}_{t_0} \left[\mathcal{S} \setminus \mathcal{T}\right] = (\beta \vx_{t_0} + \sqrt{1 - \beta^2} \vepsilon) \left[\mathcal{S} \setminus \mathcal{T}\right], \vepsilon \sim \mathcal{N}(0, \mI),
\end{align}
where $\alpha\in (1,\infty)$, $\beta\in [0, 1)$ are hyperparameters, $\vx[\mathcal{S}]$ takes pixels in $\vx$ that corresponds to the set $\mathcal{S}$ and $\setminus$ is the complementary operation for sets. Intuitively, SDE-Drag directly copies the noisy pixels from the source to the target and amplifies them by a factor $\alpha$. Besides, SDE-Drag perturbs the source with a Gaussian noise by a factor $\beta$ to avoid duplicating the content.

In the second stage, we directly produce $\tilde{\vx}_0$ by a Cycle-SDE process defined by Eq.~(\ref{eq:dis_ddim}) with $\eta = 1$, $\bar{\vw}_s = \bar{\vw}_s'$ and $\vx_{t_0} = \tilde{\vx}_{t_0}$. Besides, when an optional binary mask is provided, the unmasked area of the latent representation $\tilde{\vx}_{t}$ at every step $t$ is replaced by the corresponding unmasked area of the latent representation $\vx_{t}$\footnote{Note that our analyses in Sec.~\ref{sec:theory} apply to any time interval including this case.} in the noise-adding process. To our knowledge, this is the first paper to investigate SDE formulations in dragging among prior work~\citep{shi2023dragdiffusion,mou2023dragondiffusion}.

When the target point is far from the source point, it is challenging to drag the content in a single operation~\citep{pan2023drag}. To this end, we divide the process of Drag-SDE into $m$ steps along the segment joining the two points and each one moves an equal distance sequentially. We present the whole process in Algorihtm~\ref{alg:drag} with $n$ discretization steps in sampling.
$m$ and $n$ are hyperparamters.

We emphasize that we employ default values of all hyperparameters (e.g., $\alpha$, $\beta$, $m$, and $n$) for all input images and SDE-Drag is not sensitive to the hyperparameters, as detailed in Sec.~\ref{sub:drag}. Moreover, it is easy to extend SDE-Drag by adding multiple source-target pairs either simultaneously or sequentially. For more implementation details, please refer to Appendix~\ref{sub:drag_implementation}. Our experiments (see Sec.~\ref{sub:drag}) on a benchmark introduced later demonstrate the effectiveness of the SDE-Drag over an ODE baseline and representative prior work~\citep{pan2023drag,shi2023dragdiffusion}.

\subsection{DragBench}
\label{sub:dragbench}

Built upon large-scale text-to-image diffusion models~\citep{rombach2022high}, SDE-Drag and ODE-based diffusion~\citep{shi2023dragdiffusion,mou2023dragondiffusion} can potentially deal with open-set images, which is desirable. However, existing benchmarks~\citep{pan2023drag} are restricted to specific domains like human faces. To this end, we introduce \emph{DragBench}, a challenging benchmark consisting of 100 image-caption pairs from the internet that cover animals, plants, art, landscapes, and high-quality AI-generated images. Each image has one or multiple pairs of source and target points and some images are associated with binary masks. Please refer to Appendix~\ref{appendix:dragbench} for more details.

\section{Experiment}
\label{sec:expriment}


We conduct experiments on various tasks including inpainting (see Sec.~\ref{sub:inpainting}), image-to-image (I2I) translation (see Sec.~\ref{sub:exp_i2i}), and dragging (see Sec.~\ref{sub:drag}), where the SDE counterparts show a consistent and substantial improvement over the widely used baselines. We also analyze the sensitivity of hyperparameters and time efficiency (see Sec.~\ref{exp_sens}).

\subsection{Inpainting}
\label{sub:inpainting}

\textbf{Setup.} Inpainting aims to complete missing values based on observation. For broader interests, we follow the LDM~\citep{rombach2022high} to employ the Places~\citep{zhou2017places} dataset for evaluation and adopt FID~\citep{heusel2017gans} and LPIPS~\citep{zhang2018unreasonable} to measure the sample quality. For simplicity and fairness,  the base model (i.e., Stable Diffusion 1.5~\citep{rombach2022high} trained on LAIOB-5B~\citep{schuhmann2022laion}), and most of the hyperparameters remain the same as the ODE baseline~\citep{rombach2022high}. The only tuned hyperparameter is the number of sampling steps $n$ and we conduct systematical experiments with $25$, $50$, and $100$ steps. Please refer to Appendix~\ref{appendix:inpainting} for more details.

\textbf{Results.} As presented in Tab.~\ref{tab:inpainting}, the SDE counterpart significantly outperforms the ODE baseline across all settings under both the FID and PLIPS metrics. Notably, with a few steps (e.g., $25$), where ODE is widely recognized to have a significant advantage in normal sampling, SDE still manages to achieve superior results. Further, the performance of SDE at $25$ steps surpasses that of ODE at $100$ steps, which strongly suggests the usage of SDE in inpainting. Additionally, we present the results of high-order algorithms in Tab.~\ref{tab:inpainting_order2} for completeness and the conclusion remains.

\begin{table}
\begin{center}
 \caption{\textbf{Results in inpainting.} Inpaint-ODE and Inpaint-SDE employ DDIM~\citep{song2020denoising} and DDPM~\citep{ho2020denoising} respectively. Inpaint-SDE outperforms Inpaint-ODE in all settings.}
 \label{tab:inpainting}
 \vspace{-0.85em}
\begin{adjustbox}{max width=\textwidth}
  \begin{tabular}[t]{lcccccccccccc}
   \toprule
    & \multicolumn{6}{c}{Small Mask (average mask ratio $0.2$)}& \multicolumn{6}{c}{Large Mask (average mask ratio $0.4$)} \\
    \cmidrule(lr){2-7} \cmidrule(lr){8-13}
    & \multicolumn{3}{c}{FID $\downarrow$} & \multicolumn{3}{c}{LPIPS $\downarrow$} & \multicolumn{3}{c}{FID $\downarrow$}& \multicolumn{3}{c}{LPIPS $\downarrow$} \\
    \cmidrule(lr){2-4} \cmidrule(lr){5-7} \cmidrule(lr){8-10} \cmidrule(lr){11-13}
    $\#$ steps & 25 & 50  & 100  & 25 & 50 & 100  & 25 & 50  & 100  & 25 & 50 & 100 \\
    \midrule
    Inpaint-ODE & 5.84 & 5.86 & 5.86 & 0.227 & 0.227 & 0.227 & 17.73 & 17.57 & 17.50 & 0.363 & 0.361 & 0.361 \\
    Inpaint-SDE & \textbf{5.18} & \textbf{4.97} & \textbf{4.91} & \textbf{0.218} & \textbf{0.215} & \textbf{0.215} & \textbf{16.43} & \textbf{15.48} & \textbf{15.27} & \textbf{0.351} & \textbf{0.345} & \textbf{0.343} \\
    \bottomrule
\end{tabular}
\end{adjustbox}
\end{center}
\end{table}

\subsection{Image-to-image translation}
\label{sub:exp_i2i}
 

\textbf{Setup.}  As visualized in Fig.~\ref{fig:diffedit_case} in Appendix~\ref{app:diffedit}, I2I aims to transfer the input images to another domain by changing its label or text description. We provide the SDE counterparts for two representative ODE approaches. Due to space limit, we present the results on DiffEdit~\citep{couairon2022diffedit} here and those on DDIB~\citep{su2022dual} in Appendix~\ref{app:more_i2i}.
For fairness, we use Stable Diffusion 1.5~\citep{rombach2022high} trained on LAION-5B~\citep{schuhmann2022laion}) and employ image-caption pairs from the COCO dataset~\citep{lin2014microsoft} and annotations from the COCO-BISON dataset~\citep{hu2019evaluating} following the DiffEdit. Similarly, we adopt the same evaluation metrics including FID, LPIPS, and CLIP-Score~\citep{radford2021learning}. which measures the alignment between edited images and target prompts. As suggested by DiffEdit, we tune a single key hyperparameter, i.e. the time index of the latent variable $t_0$, and keep others unchanged. The SDE counterpart (termed DiffEdit-SDE) employs the Cycle-SDE. Please refer to Appendix~\ref{appendix:diffedit} for more details.


\textbf{Results.} Quantitatively, DiffEdit-SDE outperforms the ODE baseline (termed DiffEdit-ODE) under all metrics with a wide range of $t_0$, as presented in Fig.~\ref{fig:diffedit}. Qualitatively, as visualized in Fig.~\ref{fig:diffedit_case}, DiffEdit-SDE achieves a better image-text alignment and higher image fidelity simultaneously. We observe similar results in experiments with DDIB in Appendix~\ref{app:more_i2i}. 

\begin{figure}[t!]
    \centering
    \begin{subfigure}{0.45\textwidth}
        \includegraphics[width=\linewidth]{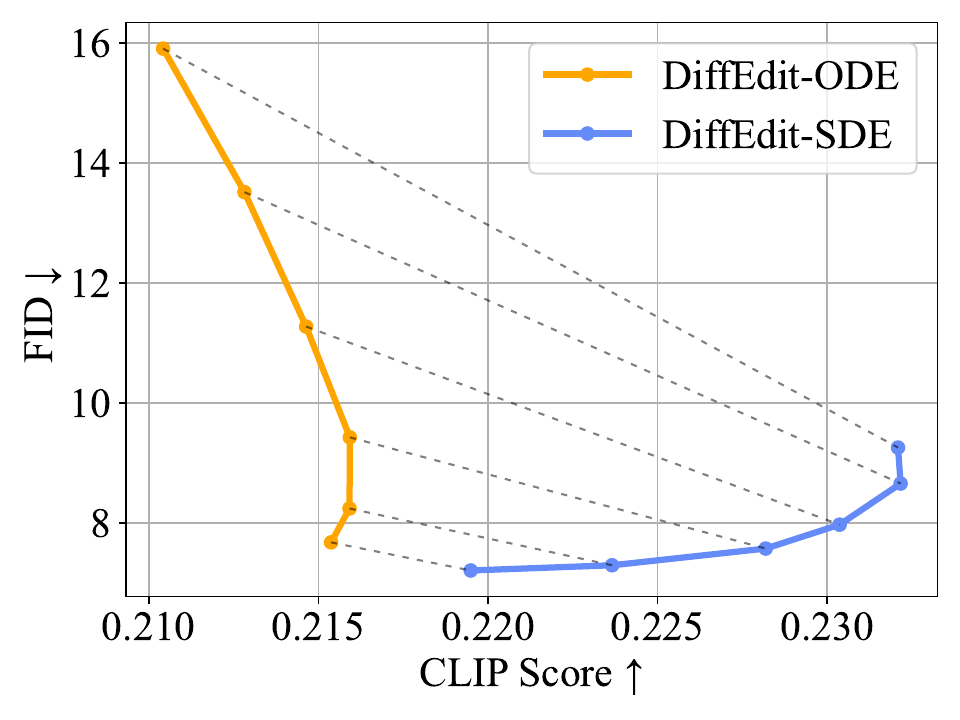}
        \caption{Trade-offs between FID and CLIP-Score}
    \end{subfigure}
    \begin{subfigure}{0.45\textwidth}
        \includegraphics[width=\linewidth]{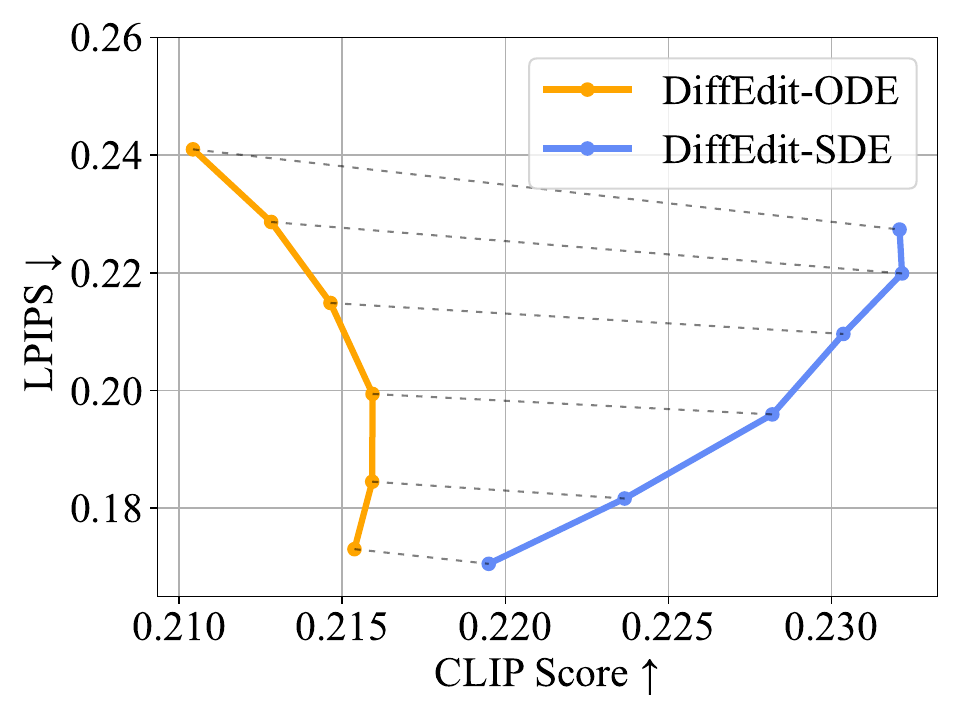}
        \caption{Trade-offs between LPIPS and CLIP-Score}
    \end{subfigure}
    \caption{\textbf{Results in I2I (DiffEdit).} We consider $t_0 \in \{0.3, 0.4, 0.5, 0.6, 0.7, 0.8\}$. With the same value of $t_0$ (linked by dashed lines), DiffEdit-SDE outperforms DiffEdit-ODE under all metrics.}
    \label{fig:diffedit}
\end{figure}

\subsection{Dragging}
\label{sub:drag}

\textbf{Setup.} As visualized in Fig.~\ref{fig:drag_fun_real} in Appendix~\ref{app:drag_fun}, dragging~\citep{pan2023drag} aims to drag the content following the instructions of the source points and target points. We compare our SDE-Drag against a direct ODE baseline (termed ODE-Drag), an ODE-based prior work~\citep{shi2023dragdiffusion}, and the DragGAN~\citep{pan2023drag}. 
We focus on the challenging and desirable dragging task on open-set images and there is no standard benchmark in the literature as far as we know. Therefore, we conduct a user study on the DragBench introduced in Sec.~\ref{sub:dragbench}, as summarized below.


\textbf{User Study.} There were $6$ participants and $100$ questions for each one vs. one (e.g., SDE-Drag vs. ODE-Drag) comparison by default. There were $22$ questions to compare with DragGAN because it is restricted to specific domains such as cats and lions.
In each question, participants were presented with original images paired with two differently edited images produced by distinct models. Participants were tasked with selecting a better image between the pair, which is known as the Two-Alternative Forced Choice methodology commonly used in the literature~\citep{kawar2023imagic, bar2022text2live, kolkin2019style, park2020swapping}.  See more details in Appendix~\ref{appendix:user_study}.

\textbf{Implementation.} Throughout the paper, the hyperparameters in SDE-Drag are $r=5$, $n=120$, $t_0=0.6T$, $\alpha=1.1$, $\beta=0.3$ and $m = \lceil \| a_s - a_t \| / 2 \rceil$, where $\| a_s - a_t \|$ denotes the Euclidean distance between $a_s$ and $a_t$. If multiple points are present, $m$ is determined based on the greatest distance among all pairs of points. To enjoy relatively high classifier-free guidance (CFG) and numerical stability simultaneously, we linearly increase the CFG from $1$ to $3$ as the time goes from $0$ to $t_0$. Note that SDE-Drag is insensitive to most of the hyperparameters, as detailed in Sec.~\ref{exp_sens}. 

ODE-Drag shares all the hyperparameters as SDE-Drag except that it employs ODE inversion and sampling algorithms. Besides, following~\cite{shi2023dragdiffusion}, we integrate an optional LoRA~\citep{hu2021lora} finetuning process for all diffusion-based methods. Nevertheless, SDE-Drag without LoRA still outperforms ODE-Drag and DragGAN.
See more implementation details in Appendix~\ref{sub:drag_implementation}.

\textbf{Results.} As shown in Fig.~\ref{fig:usr_drag}, 
SDE-Drag significantly outperforms all
competitors including the direct baseline ODE-Drag, representative ODE-based prior work~\citep{shi2023dragdiffusion} and DragGAN~\citep{pan2023drag} on the challenging DragBench through a comprehend user study. For completeness, we also present the results without LoRA against ODE-Drag and DragGAN in Fig.~\ref{fig:user_study_sde_wo_lora} in Appendix~\ref{app:additional_user_study} and the conclusion remains. In addition, the visualization results in Appendix~\ref{app:drag compare} show that SDE-Drag can produce high-quality images edited in a desired manner, which agrees with the user study. All these results together with the ones in inpainting (Sec.~\ref{sub:inpainting}) and I2I (Sec.~\ref{sub:exp_i2i}) clearly demonstrate the superiority and versatility of SDE in image editing.

We further highlight that SDE-Drag is able to deal with multiple points simultaneously or sequentially on open-domain images and improve the alignment between the prompt and sample from the advanced AI-painting systems like DALL$\cdot$E $3$ and Stable Diffusion~\citep{podell2023sdxl} in Figs.~\ref{fig:drag_fun_sd}-\ref{fig:drag_fun_continuous}, advancing the area of interactive image generation.

\begin{figure}[t!]
    \centering
    \begin{subfigure}{0.23\textwidth}
        \includegraphics[width=\linewidth]{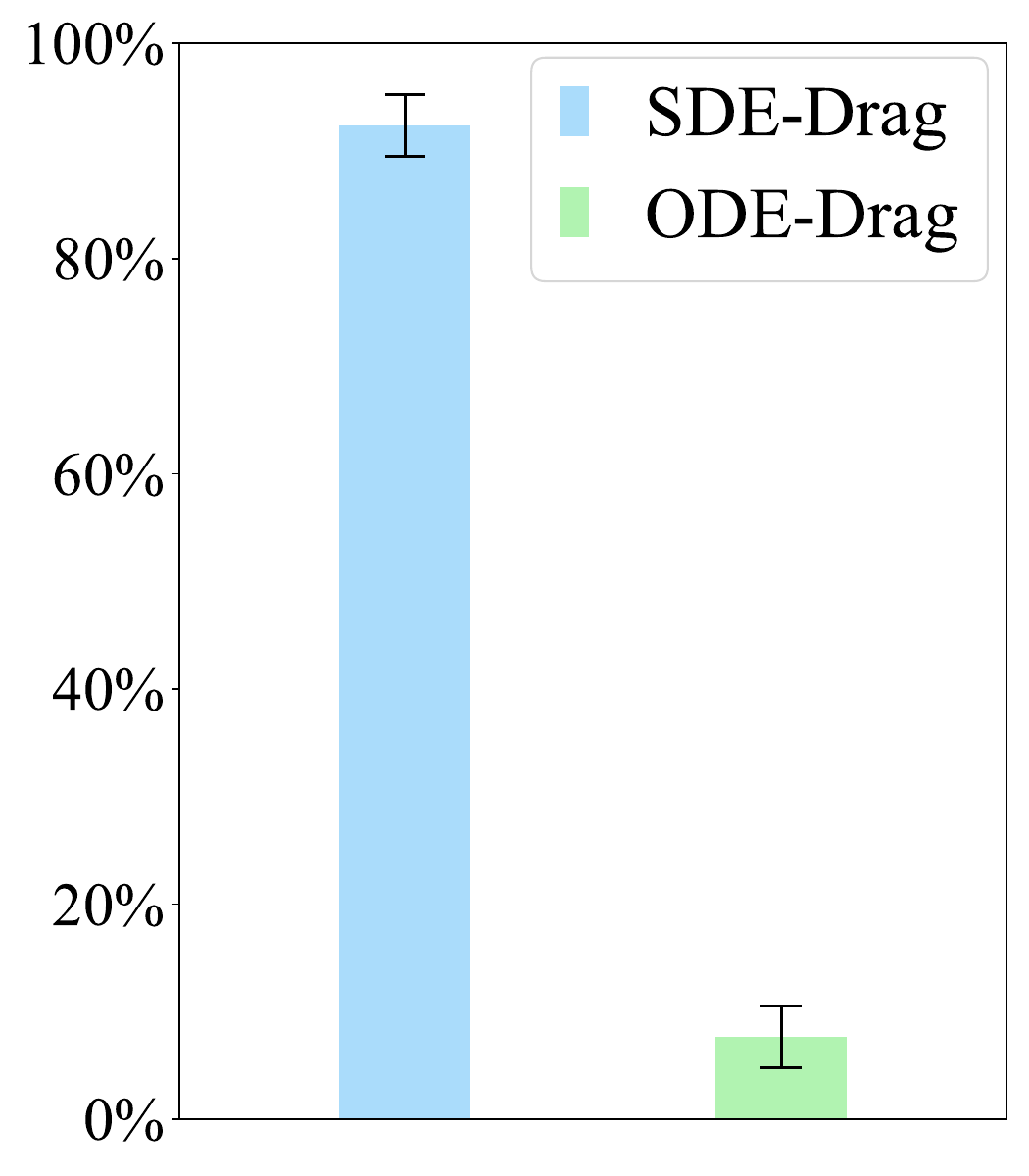}
        \caption{SDE vs. ODE}
        \label{fig:sde_drag_vs_ode_drag}
    \end{subfigure}
    \begin{subfigure}{0.23\textwidth}
        \includegraphics[width=\linewidth]{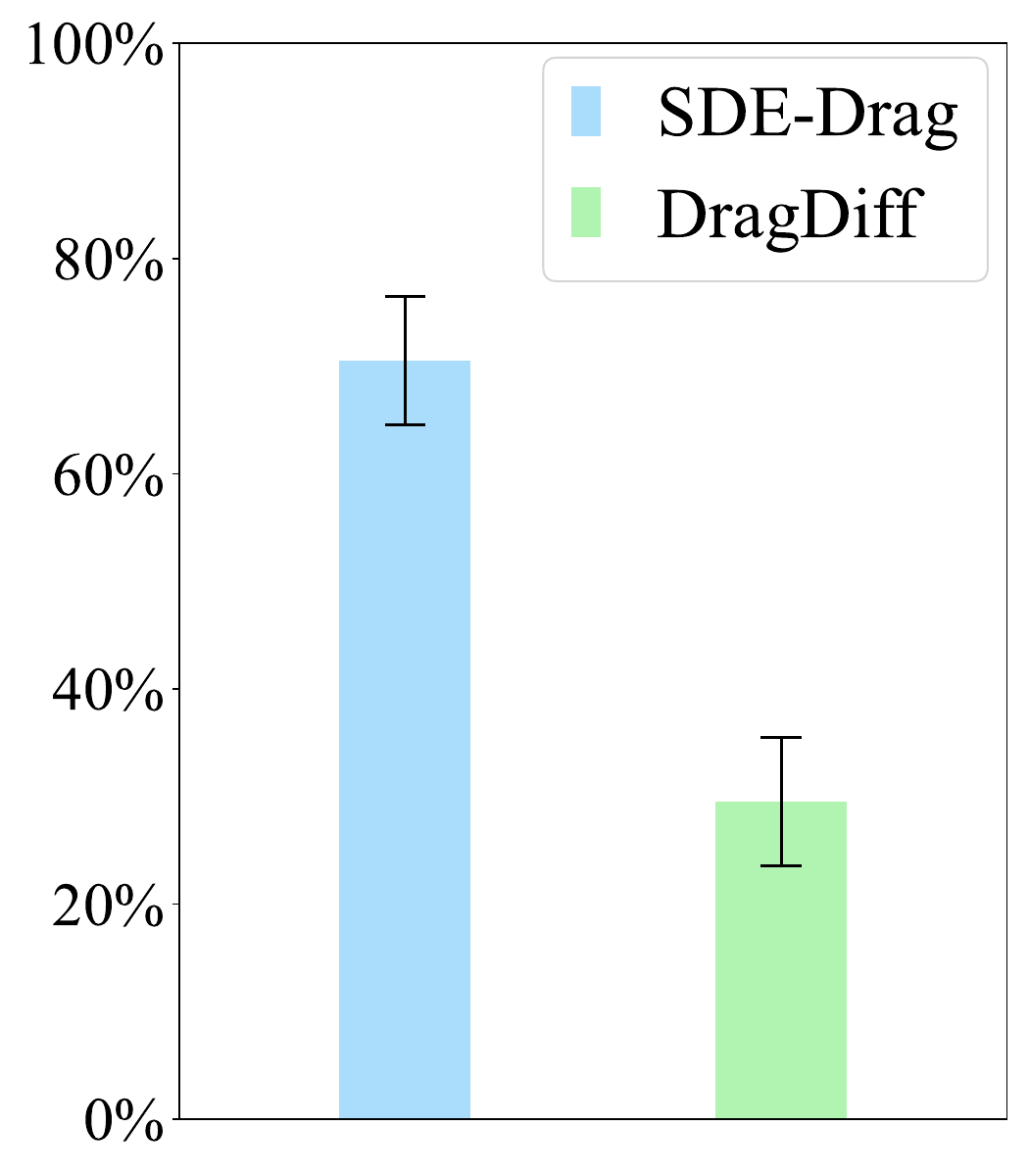}
        \caption{Ours vs. DragDiff}
        \label{fig:sde_drag_vs_dragdiff}
    \end{subfigure}
    \begin{subfigure}{0.23\textwidth}
        \includegraphics[width=\linewidth]{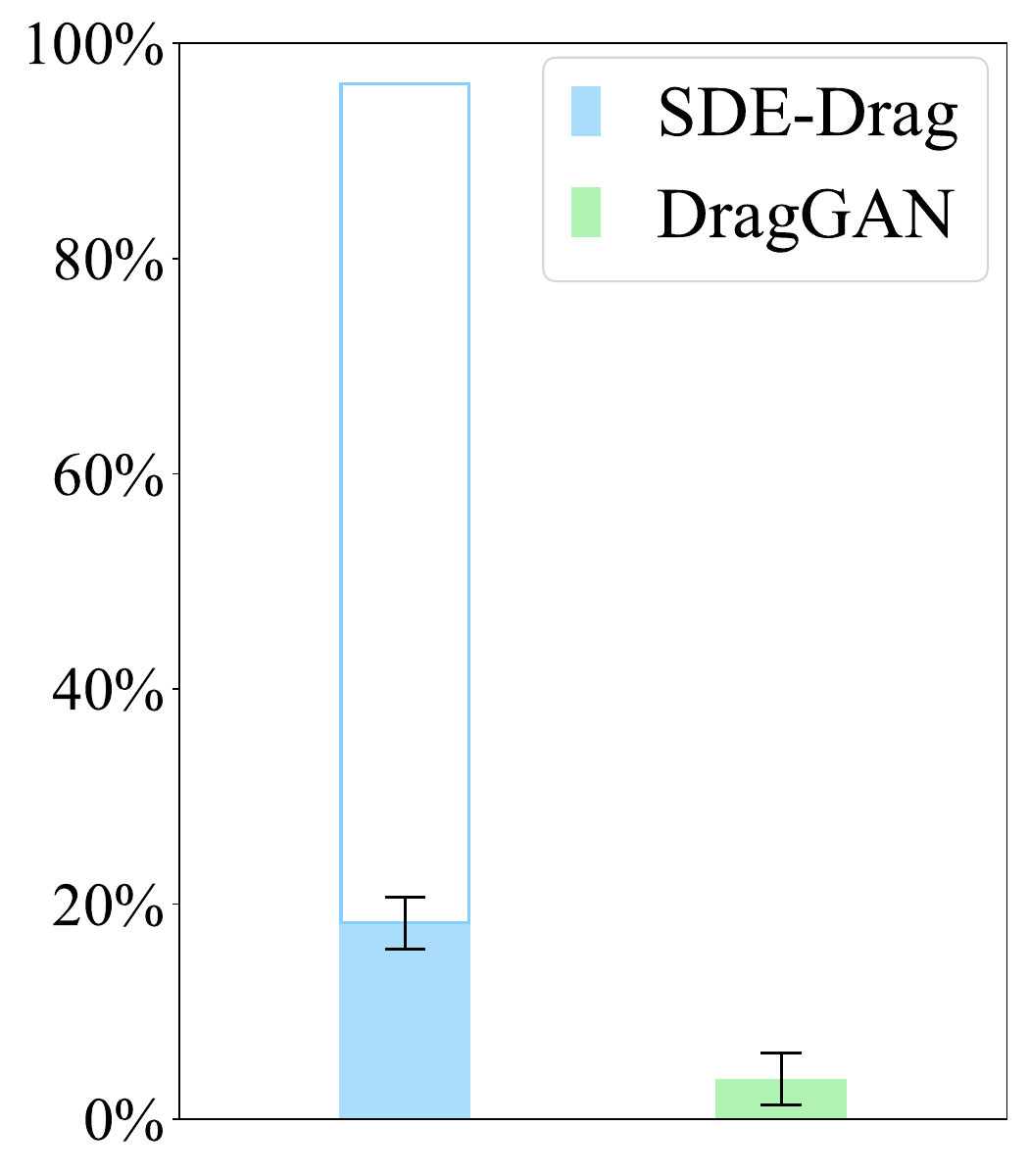}
        \caption{Ours vs. DragGAN}
        \label{fig:sde_drag_vs_draggan}
    \end{subfigure}
    \begin{subfigure}{0.23\textwidth}
        \includegraphics[width=\linewidth]{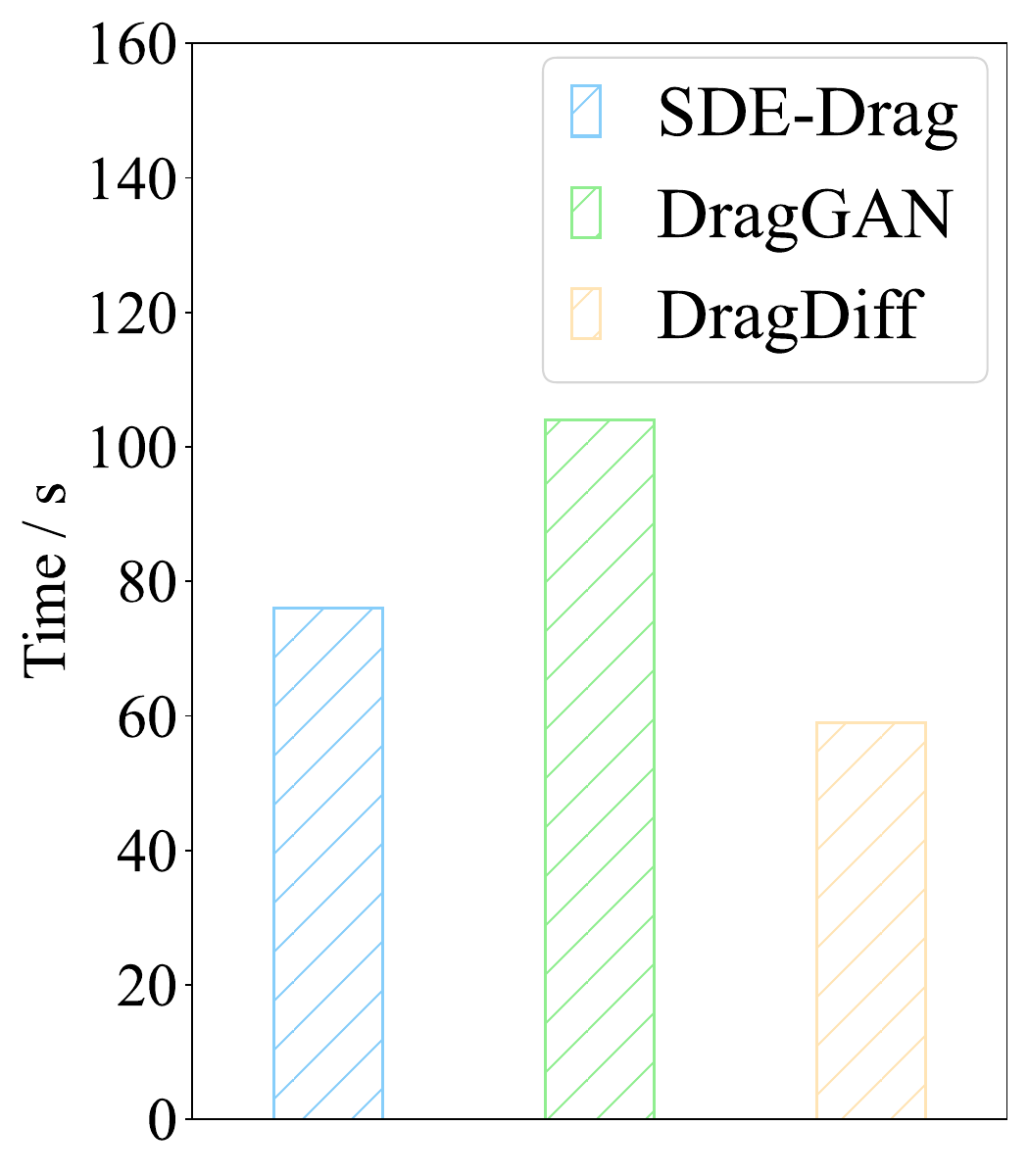}
        \caption{Costing time}
        \label{fig:time_efficiency}
    \end{subfigure}
    \caption{\textbf{Results in dragging}. (a-c) present the preference rates (with $95 \%$ confidence intervals) of SDE-Drag over ODE-Drag, DragDiffusion, and DragGAN. SDE-Drag significantly outperforms all competitors. The blank box in (c) denotes the ratio of the open-domain images in DragBench that DragGAN cannot edit. (d) shows that the average time cost per image is comparable for all methods.}
    \label{fig:usr_drag}
\end{figure}

\subsection{Sensitivity Analysis and Time Efficiency} 
\label{exp_sens}

We perform a sensitivity analysis for all experiments. Overall, we did not tune the hyperparameters heavily and the SDE-based methods are not sensitive to most of the hyperparameters. In particular, Tab.~\ref{tab:inpainting} and Fig.~\ref{fig:diffedit} suggest that SDE-based methods can achieve good results in a wide range of key hyperparameters in inpainting and I2I. Below we focus on the analysis of SDE-Drag where we perform preliminary experiments on several images over a small set of values for hyperparameters.

\textbf{Time $t_0$ and amplification factor $\alpha$.} We evaluated SDE-Drag with $t_0 / T \in \{0.4, 0.5, 0.6, 0.7, 0.8\}$. As identified in~\cite{meng2021sdedit} and confirmed in Fig.~\ref{fig:ablation_t}, a higher $t_0$ improves the fidelity while lowering the faithfulness of the edited images and vice versa. However, the overall performance is similar for values between $0.5T$ and $0.7T$ and we set $t_0 = 0.6T$ by default. We also evaluate SDE-Drag with $\alpha \in \{1.0, 1.1, 1.2\}$ and the default value of $1.1$ performs slightly better (see Fig.~\ref{fig:ablation_alpha}). 

\textbf{Perturbation factor $\beta$.} We evaluate SDE-Drag with  $\beta \in \{0.1, 0.2, 0.3, 0.4, 0.5, 1\}$. Intuitively, when $\beta=1$, the signal at the source point is retained, leading to ``copying'' the content. For typical dragging, SDE-Drag is robust to values between $0.1$ and $0.5$. Fig.~\ref{fig:ablation_beta} visualizes the effect of $\beta$. 

\textbf{Dragging steps $m$.} As discussed in Sec.~\ref{sub:method_drag}, when the target point is far from the source point, it is challenging to drag the content with $m=1$. As illustrated in Fig.~\ref{fig:ablation_m}, it is sufficient to use an adaptive strategy with $m = \lceil \| a_s - a_t \| / 2 \rceil$. SDE-Drag is robust when using a factor between $1/4$ and $1$ in the adaptive strategy.

\textbf{LoRA.} We perform a systematical ablation study of LoRA in terms of user preference. We provide visualization results in Fig.~\ref{fig:ablation_lora} and more detailed analyses in Appendix~\ref{sub:drag_implementation}.
 
Notably, although the optimal hyperparameters may vary for each image, SDE-Drag works well in a wide range of hyperparameters and we provide a set of default hyperparameters for all images.

We now discuss the time efficiency. Overall, SDE-based methods achieve excellent results without increasing the computational time. As shown in Tab.~\ref{tab:time_efficiency} in Appendix~\ref{app:time_efficiency}, the SDE counterpart takes nearly the same time as the direct ODE baseline in all experiments. Besides, SDE-Drag, DragDiffusion, and DragGAN have similar time efficiency as well, as shown in Figure~\ref{fig:time_efficiency}.

\section{Conclusion and Discussion}
\label{sec:conclusion}


We present a unified probabilistic formulation for diffusion-based image editing encompassing a wide range of existing work. We theoretically show the promise of the SDE
formulation for general image editing. We propose a simple yet effective dragging algorithm (SDE-Drag) based on the SDE formulation and a challenging
benchmark with 100 open-set images for evaluation. Our results in inpainting, image-to-image translation, and dragging clearly demonstrate the superiority and versatility of SDE in general image editing. 

There are several limitations. Theoretically, we do not consider the model approximation error and discretization error, which can potentially be addressed based on existing work~\citep{DBLP:conf/icml/LuZB0LZ22,lee2022convergence,lee2023convergence,chen2022sampling,chen2023restoration}. Besides, it is nontrivial to obtain convergence rate results for Cycle-SDE, which requires task-specific assumptions and analysis. Empirically, we observe that the open-set dragging problem is far from solved and we present failure cases in Appendix~\ref{app:failure_case}.

\paragraph{Ethics Statement.} This paper enhances the effectiveness of various image editing techniques, which leads to numerous societal benefits, including boosting productivity in visual industries, introducing modern tools to educational curricula, and democratizing digital content creation for a wider audience. However, it can also be harnessed to produce deceptive images or ``deepfakes'', and may give rise to ethical concerns around image alterations that misrepresent reality or history if not used properly. Besides, the editing process relies on a pretrained diffusion model. If the model is trained on biased data, the editing process may amplify the bias.

\section*{Acknowledgements}

This work was supported by NSF of China (No. 62076145); Beijing Outstanding Young Scientist Program (No. BJJWZYJH012019100020098); Major Innovation \& Planning Interdisciplinary Platform for the ``Double-First Class" Initiative, Renmin University of China; the Fundamental Research Funds for the Central Universities, and the Research Funds of Renmin University of China (No. 22XNKJ13). C. Li was also sponsored by Beijing Nova Program (No. 20220484044).

\bibliography{iclr2024_conference}
\bibliographystyle{iclr2024_conference}

\newpage 
\appendix
\section{Background in Detial}
\label{appendix:bg}
In this section, we detail the sampling algorithm (see Appendix~\ref{app:sampler}) and the data reconstruction methods (see Appendix~\ref{app:inversion}) employed in our experiments.
In addition, we present an empirical study of the reconstruction capability of Cycle-SDE and DDIM inversion in Appendix~\ref{app:emprical_study_cycle_sde}.

\subsection{Samplers}
\label{app:sampler}
We denote the noise prediction network as $\vepsilon_{\vtheta}(\vx_t, t)$ and define a time sequence $\{t_i\}_{i=1}^{n+1}$ increasing from $t_1=0$ and $t_{n+1}=T$, where $n$ is the number of sampling steps. Specifically, when classifier-free guidance~\citep{ho2022classifier} is employed, we have $\vepsilon_{\vtheta}(\vx_t, t)=\vepsilon_{\vtheta}(\vx_t, t, \varnothing) + s\left(\vepsilon_{\vtheta}(\vx_t, t, c) - \vepsilon_{\vtheta}(\vx_t, t, \varnothing) \right)$, where $s$ is the guidance scale, $c$ is the conditional embedding and  $\varnothing$ is the unconditional embedding. We detail two representative ODE and SDE samplers, i.e., DDIM~\citep{song2020denoising} and DDPM~\citep{ho2020denoising} in Algorithm~\ref{alg:ode_sample} and Algorithm~\ref{alg:sde_sample}, respectively.

 \begin{algorithm}[t!]
  \caption{DDIM sampler}
  \label{alg:ode_sample}
  \begin{algorithmic}[1]
    \REQUIRE $\vx_T$
    \FOR{$i=n+1, \dots ,2$}
    \STATE $\vx_{t_{i-1}} = \sqrt{\alpha_{t_{i-1}}} \left( \frac{\vx_{t_i} - \sqrt{1 - \alpha_{t_i}} \vepsilon_{\theta}(\vx_{t_i}, {t_i})}{\sqrt{\alpha_{t_i}}} \right) + \sqrt{1 - \alpha_{t_{i-1}}} \vepsilon_{\theta}(\vx_{t_i}, {t_i})$
    \ENDFOR
    \RETURN $\vx_0$
  \end{algorithmic}
\end{algorithm}

\begin{algorithm}[t!]
  \caption{DDPM sampler}
  \label{alg:sde_sample}
  \begin{algorithmic}[1]
    \REQUIRE $\vx_T$
    \FOR{$i=n+1, \dots ,2$}
    \STATE $\sigma_{t_i} = \sqrt{(1 - \alpha_{t_{i-1}}) / (1 - \alpha_{t_i})} \sqrt{1 - \alpha_{t_i} / \alpha_{t_{i-1}}}$
    \STATE $\bar{\vw}_i \sim \mathcal{N}(0, \mI)$
    \STATE $\vx_{t_{i-1}} = \sqrt{\alpha_{t_{i-1}}} \left( \frac{\vx_{t_i} - \sqrt{1 - \alpha_{t_i}} \vepsilon_{\theta}(\vx_{t_i}, {t_i})}{\sqrt{\alpha_{t_i}}} \right) + \sqrt{1 - \alpha_{t_{i-1}} - \sigma_{t_i}^2} \vepsilon_{\theta}(\vx_{t_i}, {t_i}) + \sigma_{t_i} \bar{\vw}_i$
    \ENDFOR
    \RETURN $\vx_0$
  \end{algorithmic}
\end{algorithm}

\subsection{Data reconstruction}
\label{app:inversion}
With the same notation as Appendix~\ref{app:sampler}, below we summarize the data reconstruction methods based on ODE or SDE solver.

Due to the invertibility of ODE, given $\vx_0$, we can deduce the latent representation $\vx_T$ that ensures the reconstruction of $\vx_0$ via ODE sampling. Based on Eq.~(\ref{equ:reverse_ode}), the general formulation of ODE inversion is defined as:
\begin{align}
    \vx_T = \vx_0 + \int_0^T \left[f(t) \vx_t + \frac{g^2(t)}{2 \sqrt{1 - \alpha_t}}  \vepsilon_{\theta}(\vx_t, t)\right] dt,
\end{align}
we detail the widely used DDIM inversion in Algorithm~\ref{alg:ode_inversion}, which is a special discretization method of general ODE inversion.

In the context of an SDE solver, given $\vx_0$, we first log a forward trajectory $\{\vx_{t_i}\}_{i=1}^{n+ 1}$ employing the forward precoss (i.e., Eq.~(\ref{eq:addnoise_dis})). Recall that each sampling step (e.g., Eq.~(\ref{eq:dis_ddim}) with $\eta=1$) of the SDE solver (first order) can be denoted as $\vx_{t_{i-1}}=\vf(\vepsilon_{\vtheta}, \vx_{t_i}, t_i) + \sigma_{t_i} \bar{\vw}_{t_i}$, where $\vf$ is a function defined by sampling algorithm. Consequently, we can analyticly compute $\bar{\vw}_{t_i}= \left( \vf(\vepsilon_{\vtheta}, \vx_{t_i}, t_i) - \vx_{t_{i-1}}\right) / \sigma_{t_i}$ based on $\vx_{t_i}$ and $\vx_{t_{i-1}}$ logged in the forward trajectory. With all $\{\vw_{t_i}\}_{i=2}^{n+1}$ calculated, we can reconstruct $\vx_0$ employ the SDE solver sampling from $\vx_T$ (i.e., $\vx_{t_{n+1}}$). We call these procedures Cycle-SDE and describe a DDPM-based Cycle-SDE in Algorithm~\ref{alg:cycle_sde}.

\begin{algorithm}[t!]
  \caption{DDIM inversion}
  \label{alg:ode_inversion}
  \begin{algorithmic}[1]
    \REQUIRE $\vx_0$
    \FOR{$i=2, \dots n+1$}
    \STATE $\vx_{t_{i}} = \sqrt{\alpha_{t_{i}}} \left( \frac{\vx_{t_{i-1}} - \sqrt{1 - \alpha_{t_{i-1}}} \vepsilon_{\theta}(\vx_{t_{i-1}}, {t_{i-1}})}{\sqrt{\alpha_{t_{i-1}}}} \right) + \sqrt{1 - \alpha_{t_{i}}} \vepsilon_{\theta}(\vx_{t_{i-1}}, {t_{i-1}})$
    \ENDFOR
    \RETURN $\vx_T$
  \end{algorithmic}
\end{algorithm}

\begin{algorithm}[t!]
  \caption{Cycle-SDE (based on DDPM)}
  \label{alg:cycle_sde}
  \begin{algorithmic}[1]
    \REQUIRE $\vx_0$
    \FOR{$i=2, \dots n+1$}
    \STATE $\vx_{t_{i}} = \sqrt{\alpha_{t_i} / \alpha_{t_{t-1}}} \vx_{t_{i-1}} + \sqrt{1 - \alpha_{t_i} / \alpha_{t_{t-1}}} \vw$, $\vw \sim \mathcal{N}(0, \mI)$ 
    \STATE $\sigma_{t_i} = \sqrt{(1 - \alpha_{t_{i-1}}) / (1 - \alpha_{t_i})} \sqrt{1 - \alpha_{t_i} / \alpha_{t_{i-1}}}$ %
    \STATE $\bar{\vw}_i = \frac{1}{\sigma_{t_i}} \left(\vx_{t_{i-1}} - \sqrt{\alpha_{t_{i-1}}} \left( \frac{\vx_{t_i} - \sqrt{1 - \alpha_{t_i}} \vepsilon_{\theta}(\vx_{t_i}, {t_i})}{\sqrt{\alpha_{t_i}}} \right) -\sqrt{1 - \alpha_{t_{i-1}} - \sigma_{t_i}^2}\vepsilon_{\theta}(\vx_{t_i}, {t_i}) \right)$ %
    \STATE Record $\sigma_{t_i}$ and $\bar{\vw}_i$.
    \ENDFOR

    \FOR{$i=n+1, \dots 2$}
    \STATE $\vx_{t_{i-1}} = \sqrt{\alpha_{t_{i-1}}} \left( \frac{\vx_{t_i} - \sqrt{1 - \alpha_{t_i}} \vepsilon_{\theta}(\vx_{t_i}, {t_i})}{\sqrt{\alpha_{t_i}}} \right) + \sqrt{1 - \alpha_{t_{i-1}} - \sigma_{t_i}^2} \vepsilon_{\theta}(\vx_{t_i}, {t_i}) + \sigma_{t_i} \bar{\vw}_i$
    \ENDFOR
    
    \RETURN $\vx_T$, $\{ \bar{\vw}_i \}_{i=1}^n$
  \end{algorithmic}
\end{algorithm}

\subsection{Empirical study of Cycle-SDE}
\label{app:emprical_study_cycle_sde}
In this section, we discuss the reconstruction capability of Cycle-SDE with classifier-free guidance (CFG) and compare it to the widely adopted ODE inversion method in image reconstruction.

In the procedure of Cycle-SDE, we first log a forward trajectory $\{\vx_{t_i}\}_{i=1}^{n+ 1}$ via the forward process, and the sampling process of Cycle-SDE reconstructs this trajectory through the analyticly computed $\{ \vw_{t_i}\}_{i=2}^{n+1}$ (see Appendix~\ref{app:inversion}). However, practical constraints like floating-point precision can lead to reconstruction errors. These errors are especially significant when employing a larger CFG scale, because of the numerical instability inherent to CFG~\citep{lu2022dpm-plus}. While the reconstruction potential of Cycle-SDE has been touched upon in earlier research~\citep{wu2022unifying, huberman2023edit}, we present a complete discussion here.

Firstly, we presented a visual representation of Cycle-SDE's performance under various CFG scale and machine precision in Fig.~\ref{fig:reconstruct}. Cycle-SDE can flawlessly reconstruct the original image without CFG   (i.e., CFG scale $1$). However, as we increase the CFG scale, such as to $4$, the numerical instability intensifies, thereby hindering the ability to reconstruct the image. We demonstrate that executing experiments with double precision ensures stability and promotes successful image reconstruction, which shows that the primary source of Cycle-SDE's reconstruction error is numerical instability.

Furthermore, we also conducted quantitative experiments to evaluate the image reconstruction capability of Cycle-SDE and ODE inversion. The reconstruction upper bound was determined using the vector-quantized auto-encoder provided by Stable Diffusion~\citep{rombach2022high}, denoted as VQAE. We employ Stable Diffusion 1.5~\citep{rombach2022high} as our foundational model. Following~\cite{mokady2023null}, we randomly select $100$ image-caption pairs from the COCO validation set for our dataset and use PSNR to measure reconstruction quality. For the CFG, Following~\cite{mokady2023null}, we use a scale $7.5$, which is also the default set in Stable Diffusion. We limited our sampling to $50$ steps for simplicity. 

As outlined in Table~\ref{tab:reconstruction}, Cycle-SDE cannot successfully reconstruct the origin image when employing CFG. Implementing the experiment with double precision mitigates the numerical instability inherent to CFG, significantly reducing Cycle-SDE's reconstruction error, and bringing it closer to the upper bound. Conversely, the reconstruction error in ODE inversion is mainly due to approximation errors encountered during the ODE discretization phase. Therefore, enhancing machine precision has minimal impact on reducing the reconstruction error of ODE inversion.

\begin{figure}[t!]
    \centering
    \begin{subfigure}{0.4\textwidth}
        \includegraphics[width=\linewidth]{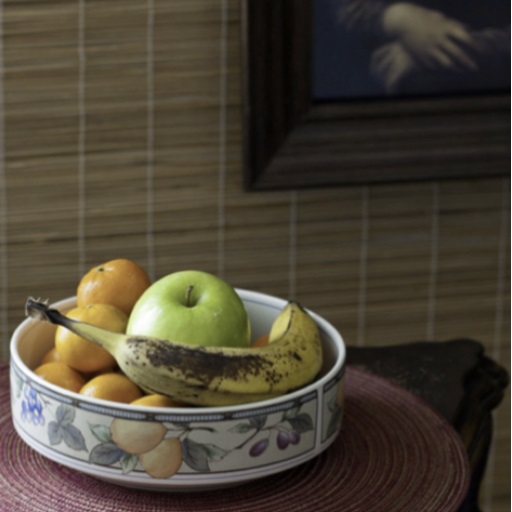}
        \caption{Origin image}
    \end{subfigure}
    \begin{subfigure}{0.4\textwidth}
        \includegraphics[width=\linewidth]{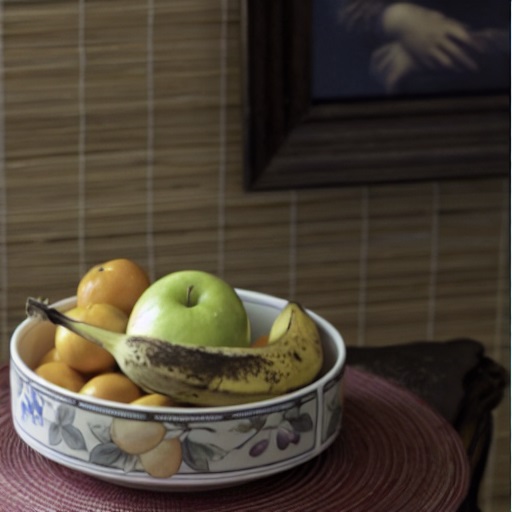}
        \caption{CFG scale $1$}
    \end{subfigure}
    \begin{subfigure}{0.4\textwidth}
        \includegraphics[width=\linewidth]{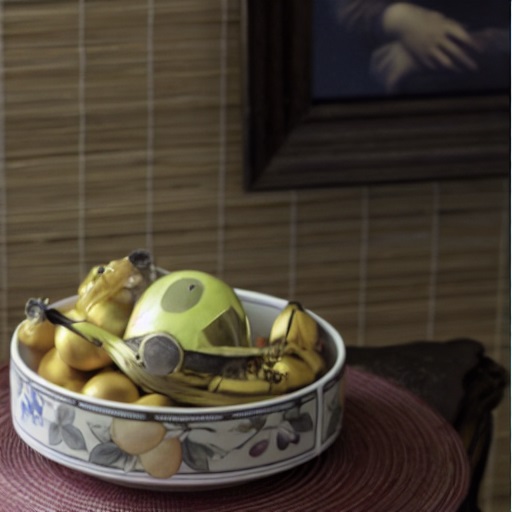}
        \caption{CFG scale $4$}
    \end{subfigure}
    \begin{subfigure}{0.4\textwidth}
        \includegraphics[width=\linewidth]{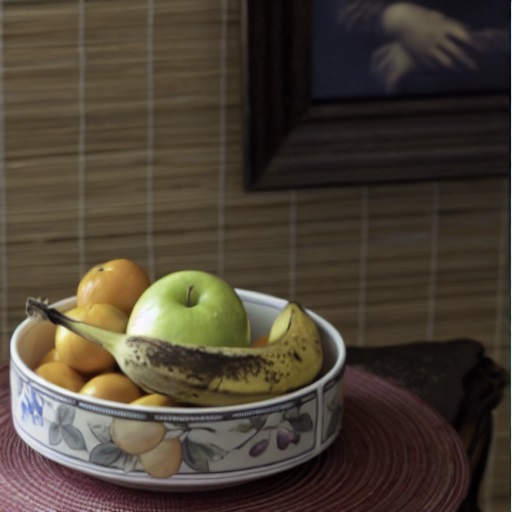}
        \caption{CFG scale $4$, double precision}
        \label{fig:reconstruction_double}
    \end{subfigure}
    \caption{\textbf{Qualitative results in image reconstruction.} Except for (d) which uses double precision, all other reconstruction experiments utilize single precision. Due to numerical instability in CFG, Cycle-SDE struggles to reconstruct the original image while using double precision ensures numerical stability. }
    \label{fig:reconstruct}
\end{figure}

\begin{table}[t!]
    \centering
    \caption{\textbf{Quantitative results in image reconstruction under CFG scale $7.5$.} For the experiment of Cycle-SDE, results from three trials were averaged due to randomness.  Numerical instability in CFG affects both Cycle-SDE and DDIM inversion in reconstructing the original image. While double precision enhances stability, discretization errors in DDIM still dominate, preventing successful reconstruction.}
    \label{tab:reconstruction}
    \begin{tabular}{ccccccc}
     \toprule
     & \multicolumn{3}{c}{Float32} & \multicolumn{3}{c}{Float64} \\
     \cmidrule(lr){2-4} \cmidrule(lr){5-7} 
      & VQAE & DDIM inversion & Cycle-SDE & VQAE & DDIM inversion & Cycle-SDE\\
     \midrule
    PSNR & 25.48 & 11.86 & 11.83 & 25.48 & 16.97 & 24.28\\      
      \bottomrule
    \end{tabular}
\end{table}

\section{Diffusion-based Editing Methods}
\label{app:instance}

We present representative diffusion-based editing methods mentioned in Tab.~\ref{tab:summary} as instances of the general probabilistic formulation in detail. 

 Inpainting aims to complete missing values based on observation. We start by discussing image sampling and then delve into the distinctions between inpainting and sampling. For a single sampling step in the diffusion model, either an SDE solver or ODE solver is employed to sample from $p_{\vtheta}(\vx_s|\vx_t)$ where $s<t$. In the context of inpainting, SDEdit~\citep{meng2021sdedit} and Stable Diffusion~\citep{rombach2022high} initially generate $\vy_t$ through the noise-adding process given the reference image $\vy_0$, and replace the unmasked area of $\vx_t$ with the unmasked area of $\vy_t$ to produce $\tilde{\vx}_t$. SDEdit and Stable Diffusion then sample from $p_{\vtheta}(\vx_s|\tilde{\vx}_t)$ employing an SDE solver and ODE solver, respectively. It is clear that $\tilde{p}_{t_0}(\cdot) \neq p_{t_0}(\cdot)$ due to the manipulation. 

DiffEdit~\citep{couairon2022diffedit} and Prompt-to-Prompt~\citep{hertz2022prompt, mokady2023null} are methods for image-to-image translation. In the first stage, both methods produce $\vx_{t_0}$ through ODE inversion employing the source image's prompt $c_1$. For consistency with other methods, we denote $\vx_{t_0}$ as $\tilde{\vx}_{t_0}$. In the second stage, they utilize the target prompt $c_2$ for ODE sampling from $\tilde{\vx}_{t_0}$.  Consequently, $\tilde{\vx}_{t_0} \sim \tilde{p}_{t_0}(\cdot|c_1)$ and a mismatch exists between $\tilde{p}_{t_0}(\cdot|c_1)$ and $p_{t_0}(\cdot|c_2)$.

DDIB~\citep{su2022dual} introduces a distinct methodology for image-to-image translation. It employs a probabilistic approach similar to DiffEdit and Prompt-to-Prompt. The primary distinction is that DDIB uses a class-conditional diffusion model, whereas DiffEdit uses a text-conditional diffusion model. 

CycleDiffusion~\citep{wu2022unifying} is another method for image-to-image translation tasks. CycleDiffusion produce $\vx_{t_0}$ through the noise-adding process and compute $\bar{\vw}_s'$ in Eq.(\ref{eq:sde_inversion}) employing the source image's label $c_1$. It then utilizes the target label $c_2$ for Cycle-SDE sampling. Similar to DiffEdit and DDIB, $\tilde{p}_{t_0}(\cdot|c_1) \neq p_{t_0}(\cdot|c_2)$ in CycleDiffusion.

DragDiffusion~\citep{shi2023dragdiffusion} and DragonDiffusion~\citep{mou2023dragondiffusion} are designed for image dragging. In the first stage, both methods produce $\vx_{t_0}$ through ODE inversion. After optimizing $\vx_{t_0}$ with gradient descent, they produce $\tilde{\vx}_{t_0}$ and then employ ODE sampling from $\tilde{\vx}_{t_0}$. Clearly, we have $\tilde{p}_{t_0}(\cdot) \neq p_{t_0}(\cdot)$ due to the optimization procedure.

\section{Related Work}
\label{app:related_work}
\textbf{Diffusion models.} Diffusion models~\citep{sohl2015deep,ho2020denoising,song2020score} are able to generate high quality imags~\citep{dhariwal2021diffusion}, audios~\citep{chen2020wavegrad, kong2020diffwave}, videos~\citep{ho2022imagen, singer2022make}, point clouds~\citep{luo2021diffusion}, 3D~\citep{poole2022dreamfusion, wang2023prolificdreamer}, and molecular conformations~\citep{hoogeboom2022equivariant,bao2022equivariant}. Especially with the emergence of large-scale text-to-image models~\citep{rombach2022high, ramesh2022hierarchical,saharia2022photorealistic, bao2023one, balaji2022ediffi, xue2023raphael,podell2023sdxl}, there has been a significant advancement in the domain of image generation. 

\textbf{SDE and ODE Solvers.} There are some works on increasing the effectiveness of the diffusion model via solving reverse SDE or its equivalent reverse ODE with advanced methods. \cite{ho2020denoising, song2020score, karras2022elucidating, lu2022dpm-plus, bao2022analytic, bao2022estimating, jolicoeur2021gotta, xue2023sa} introduce the reverse SDE discretization methods while \cite{song2020denoising, liu2022pseudo, lu2022dpm, lu2022dpm-plus,zhang2022gddim, karras2022elucidating,zhao2023unipc} employ fast ODE sampling algorithm.

In comparison, previous work focuses on sampling from the diffusion model. In contrast, this paper studies SDE and ODE formulations in the context of image editing. 

\textbf{Diffusion-based image editing method.} Based on the powerful open-set generation capabilities of the large-scale text-to-image diffusion model, the field of image editing has experienced rapid development. \cite{meng2021sdedit, zhao2022egsde, hertz2022prompt, wu2022unifying, couairon2022diffedit, bar2022text2live, kawar2023imagic, kim2022diffusionclip, mokady2023null, lugmayr2022repaint, wang2022zero, chung2022diffusion} introduce advanced inpainting or image-to-image editing method employing various user control. We propose a general probabilistic formulation for image editing and analyze the difference between SDE and ODE when the prior distribution mismatch is due to manipulation or domain transformation during inference. Therefore, our formulation applies to all the above methods. However, training-based methods like~\cite{mou2023t2i, zhang2023adding} without mismatch, are excluded from our framework. 

Inspired by the fast sampling of ODE solvers in image generation, ODE solvers are widely adopted in diffusion-based image editing methods. Our theoretical and empirical evidence suggests that SDE is preferable in editing. Besides, existing SDE-based methods~\citep{meng2021sdedit,wu2022unifying} focus on a specific task but we demonstrate the superiority and versatility of SDE in general image editing.

\textbf{Image dragging methods.} Recently, \cite{pan2023drag} introduced an interactive Point-based image editing method followed by~\cite{shi2023dragdiffusion,mou2023dragondiffusion}. In comparison, \cite{pan2023drag} is a GAN~\citep{goodfellow2014generative, karras2019style} based method, implying its editing capabilities are confined to specific data, such as lions and cats. In contrast, \cite{shi2023dragdiffusion, mou2023dragondiffusion} are diffusion-based methods, possessing the capability for open-set editing. However, both of them employ ODE-based formulation, making it possible to further improve. 

To our knowledge, SDE-Drag is the first SDE-based method for dragging open-set images. Besides, SDE-Drag manipulates the latent variable in a simple and straightforward copy-and-paste manner instead of performing optimization in the latent space as in all prior work~\citep{pan2023drag,shi2023dragdiffusion, mou2023dragondiffusion}.

\section{Theoretical Analysis}
\label{appendix:proof}

\subsection{The rationale to minimize $KL(\tilde{p}_0 || p_0)$}
\label{app:more_discuss}

The final objective of image editing is to analyze the editing distribution $\tilde p_0$ and data distribution $q_0$ under certain divergence. In this section, we explain why making $\tilde p_0$ and $p_0$ close is meaningful both theoretically and empirically.

Theoretically, we indeed assume that the diffusion model characterizes the true score functions, namely $KL(p_0 || q_0)=0$ in Sec.~\ref{sec:theory}. Then our theory on $KL(\tilde p_0|| p_0)$ holds for $KL(\tilde p_0||q_0)$ as desired. Intuitively, the assumption can be relaxed to a bounded approximation error of the score function, i.e., $\mathbb E_{q_t}[ \|s_{\mathbf \theta}(\mathbf x_t, t) - \nabla_{\mathbf x_t} \ln q_t \|^2]  < \epsilon$ based on the latest theoretical work~\citep{chen2022sampling, lee2023convergence, chen2023restoration}. In particular, if $\epsilon \rightarrow 0$, then the total variation distance (TV) for $q_0$ and $p_0$ tends to zero, namely, $TV(q_0, p_0) \rightarrow 0$, making $p_0$ an accurate approximation for $q_0$.

Empirically, experimental results suggest that $p_0$ is a good surrogate for $q_0$, especially compared to $\tilde p_0$. For instance, Stable Diffusion~\citep{rombach2022high} can generate high-fidelity images in human perception. Besides, Tab.~\ref{tab:img_edit_ddib} shows that the FID for sampling (namely $p_0$) is significantly lower than the FID for editing (ODE baseline, namely $\tilde p_0$), suggesting that $p_0$ is much closer to $q_0$ than $\tilde p_0$, making it meaningful to minimize $KL(\tilde p_0|| p_0)$.
 
\subsection{Assumptions}
 
Throughout this section, we adopt the regularity assumptions in~\citet[Assumption A.1]{DBLP:conf/icml/LuZB0LZ22}.
These assumptions are technical and guarantee the existence of the solution to ScoreSDEs, and make the integration by parts and the Fokker-Planck equations valid. For completeness, we list these
assumptions in this section.

For simplicity, in the Appendix sections, we use $\nabla(\cdot)$ to denote $\nabla_{\vx}(\cdot)$ and omit the subscript $\vx$. And we denote $\nabla\cdot\vh(\vx) \coloneqq \tr(\nabla\vh(\vx))$.  

\begin{assumption}
\label{assumption:all}
We make the same assumptions as~\citet[Assumption A.1]{DBLP:conf/icml/LuZB0LZ22} and we include them here only for completeness:
\begin{enumerate}
    \item $q_0(\vx)\in \mathcal{C}^3$ and $\E_{q_0(\vx)}[\|\vx\|_2^2]<\infty$.
    \item $\forall t\in [0,T]:\vf(\cdot,t)\in \mathcal{C}^2$. And $\exists C>0$, $\forall \vx\in\R^d,t\in [0,T]:\|\vf(\vx,t)\|_2\leq C(1+\|\vx\|_2)$.
    \item $\exists C>0,\forall \vx,\vy \in\R^d:\|\vf(\vx,t)-\vf(\vy,t)\|_2\leq C\|\vx-\vy\|_2$.
    \item $g\in \mathcal{C}$ and $\forall t\in [0,T],|g(t)|>0$.
    \item For any open bounded set $\mathcal{O}$, $\int_0^T\int_{\mathcal{O}}\|q_t(\vx)\|_2^2+d\cdot g(t)^2\|\nabla q_t(\vx)\|_2^2 d \vx dt<\infty$.
    \item $\exists C>0,\forall \vx\in\R^d,t\in [0,T]:\|\nabla q_t(\vx)\|_2^2\leq C(1+\|\vx\|_2)$.
    \item $\exists C>0,\forall \vx,\vy \in\R^d:\|\nabla\log q_t(\vx)-\nabla\log q_t(\vy)\|_2\leq C\|\vx-\vy\|_2$.
    \item $\exists C>0,\forall \vx\in\R^d,t\in [0,T]:\|\vs_{\theta}(\vx,t)\|_2\leq C(1+\|\vx\|_2)$.
    \item $\exists C>0,\forall \vx,\vy\in\R^d:\|\vs_{\vtheta}(\vx,t)-\vs_{\vtheta}(\vy,t)\|_2\leq C\|\vx-\vy\|_2$.
    \item Novikov’s condition: $\E\left[\exp\left(\frac{1}{2}\int_0^T\|\nabla\log q_t(\vx)-\vs_{\vtheta}(\vx,t)\|_2^2 dt \right) \right]<\infty$.
    \item $\forall t\in [0,T],\exists k>0:q_t(\vx)=O(e^{-\|\vx\|_2^k})$, $p_t^{SDE}(\vx)=O(e^{-\|\vx\|_2^k})$, $p_t^{ODE}(\vx)=O(e^{-\|\vx\|_2^k})$ as $\|\vx\|_2\rightarrow\infty$.
\end{enumerate}
\end{assumption}

\subsection{Analyses of SDE and ODE}
\label{app:proof_sampler}

\begin{theorem}[Contraction of SDEs]\label{thm:sde}
    Let $p_{t}$ and $\tilde{p}_{t}$ be the marginal distributions of two SDEs (see Eq.~(\ref{equ:reverse_sde})) at time $t$ respectively. For any $0 \le s < t \le T$, if $p_{t} \neq \tilde p_{t}$, then
    \begin{align}
        \KL (\tilde p_s\Vert p_s) = \KL (\tilde p_t\Vert {p}_t) - \int_{s}^t g(\tau)^2 \Fisher(\tilde p_\tau\Vert {p}_\tau) d\tau
       < \KL (\tilde p_{t}\Vert {p}_{t}),
\end{align}
where $\KL(\cdot \Vert \cdot)$ denote the KL divergence and $\Fisher(\cdot \Vert \cdot)$ denote the Fisher divergence.
\end{theorem}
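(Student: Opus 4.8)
The plan is to establish the pointwise differential identity $\frac{d}{dt}\KL(\tilde p_t\Vert p_t) = g(t)^2\,\Fisher(\tilde p_t\Vert p_t)\ge 0$ and then integrate it over $[s,t]$. The first step is to exploit the exact-score assumption $p_t = q_t$ to rewrite the reverse-time SDE in Eq.~(\ref{equ:reverse_sde}): since the noise-prediction and score parametrizations are related by $\vepsilon_{\theta}(\vx_t,t) = -\sqrt{1-\alpha_t}\,\nabla\log p_t(\vx_t)$, the drift becomes $f(t)\vx - g^2(t)\nabla\log p_t(\vx)$. The crucial observation is that this is \emph{one fixed} vector field, common to both processes: $\tilde p_t$ is the marginal of the very same SDE merely started from a mismatched prior $\tilde p_T\neq p_T$. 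Hence $p_t$ and $\tilde p_t$ are two solutions of a single linear Fokker--Planck equation that differ only through their terminal-time data, which is what makes a clean relative-entropy dissipation argument possible.

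Next I would write down, accounting for the backward direction of the reverse-time SDE, the Fokker--Planck equation satisfied by each marginal $u_t\in\{p_t,\tilde p_t\}$, namely $\partial_t u_t = -\nabla\cdot\big([f(t)\vx - g^2(t)\nabla\log p_t]\,u_t\big) - \tfrac12 g^2(t)\Delta u_t$, and check it is consistent with $p_t$ being the marginal of the forward process. Differentiating the KL divergence and using mass conservation $\int \partial_t\tilde p_t\,d\vx = 0$ to discard one term, I obtain $\frac{d}{dt}\KL(\tilde p_t\Vert p_t) = \int \partial_t\tilde p_t\,\log\tfrac{\tilde p_t}{p_t}\,d\vx - \int \tfrac{\tilde p_t}{p_t}\,\partial_t p_t\,d\vx$. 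Substituting the two Fokker--Planck equations and integrating by parts (legitimate since the decay bounds in Assumption~\ref{assumption:all} annihilate the boundary terms), the drift contributions from the two integrals cancel against each other, and the remaining diffusion contributions assemble into $g(t)^2\int \tilde p_t\,\big\|\nabla\log\tfrac{\tilde p_t}{p_t}\big\|^2\,d\vx = g(t)^2\,\Fisher(\tilde p_t\Vert p_t)$, in the paper's Fisher-divergence normalization. Integrating this identity over $\tau\in[s,t]$ yields exactly $\KL(\tilde p_s\Vert p_s) = \KL(\tilde p_t\Vert p_t) - \int_s^t g(\tau)^2\Fisher(\tilde p_\tau\Vert p_\tau)\,d\tau$.

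Finally, the strict inequality follows because the integrand is nonnegative and in fact strictly positive: $g(\tau)>0$ by Assumption~\ref{assumption:all}, and $\tilde p_\tau\neq p_\tau$ must hold throughout $(s,t]$, since if $\tilde p_\tau = p_\tau$ at any $\tau$ then uniqueness of the Fokker--Planck (equivalently SDE) solution would force $\tilde p_t = p_t$, contradicting the hypothesis; hence $\Fisher(\tilde p_\tau\Vert p_\tau)>0$ on a set of positive measure and the subtracted integral is positive. I expect the main obstacle to be the integration-by-parts bookkeeping: getting the backward-time signs of the Fokker--Planck equation correct, verifying that each boundary term genuinely vanishes under the decay and regularity hypotheses, and confirming the cancellation of the drift terms that leaves a clean Fisher divergence. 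The sign of that surviving term is precisely what distinguishes this contractive behavior from the ODE case of Theorem~\ref{thm:main_ode}, where the corresponding dissipation term is absent.
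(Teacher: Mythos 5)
Your main computation is exactly the paper's proof: the paper likewise treats the two marginals as solutions of Fokker--Planck equations driven by one common drift (it keeps the generic shared network $\vs_\theta$ rather than invoking exact scores, writing each equation in continuity form with velocity fields $\vh,\tilde\vh$ that differ only by $\tfrac12 g^2\nabla\log$ of the respective density), differentiates the KL divergence, integrates by parts under the regularity assumptions, and obtains $\partial_t \KL(\tilde p_t\Vert p_t)=g(t)^2\Fisher(\tilde p_t\Vert p_t)$, which it then integrates over $[s,t]$. Your sign bookkeeping for the reverse-time Fokker--Planck equation is correct, and the exact-score specialization is harmless though unnecessary: all the argument needs is that the two SDEs share one drift and differ only in their prior. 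Note only that the paper's surviving term is $\tfrac12 g^2(t)\int\tilde p_t\|\nabla\log(\tilde p_t/p_t)\|^2\,d\vx$, so its $\Fisher$ carries a factor $\tfrac12$ relative to the integral you wrote; this is a normalization convention, not an error.

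The one step that does not hold as you stated it is the justification of the \emph{strict} inequality. You argue that $\tilde p_\tau=p_\tau$ at some $\tau\in(s,t)$ would, by ``uniqueness of the Fokker--Planck (equivalently SDE) solution,'' force $\tilde p_t=p_t$. But uniqueness propagates along the direction in which these processes evolve, i.e., from larger to smaller times: agreement at $t$ implies agreement at every $\tau<t$, not conversely. What you need is the reverse implication, which amounts to backward uniqueness, i.e., injectivity of the transition kernel $K_{t\to\tau}$ on probability measures --- a genuinely more delicate property that cannot be cited as plain well-posedness. Fortunately, strictness follows without it from what you have already proved: if $\int_s^t g(\tau)^2\Fisher(\tilde p_\tau\Vert p_\tau)\,d\tau=0$, then the nonnegative integrand vanishes for a.e.\ $\tau$, and (using positivity of the marginal densities) $\Fisher(\tilde p_\tau\Vert p_\tau)=0$ forces $\tilde p_\tau=p_\tau$, hence $\KL(\tilde p_\tau\Vert p_\tau)=0$, for a.e.\ $\tau\in[s,t]$; but your differential identity then makes $\tau\mapsto\KL(\tilde p_\tau\Vert p_\tau)$ constant on $[s,t]$, equal to $\KL(\tilde p_t\Vert p_t)>0$ since $\tilde p_t\neq p_t$ --- a contradiction. (Alternatively: time-continuity of the marginals plus $\Fisher(\tilde p_t\Vert p_t)>0$ at the endpoint already gives positivity of the integral.) The paper itself asserts the strict inequality without elaboration, so with this local repair your argument is complete and coincides with its proof.
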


The proof shares the same spirit with previous works~\citep{lyu2012interpretation,DBLP:conf/icml/LuZB0LZ22}. In fact, the result is a special case of Proposition C.1. in ~\cite{DBLP:conf/icml/LuZB0LZ22}. We add proof here for completeness.  

\begin{proof} 
We consider a genreal form of $\vf(\vx_t, t)$ and  $\vf(\vx_t, t) = f(t) \vx_t$ in Eq.~(\ref{equ:forward}) is a special case. The two reverse SDEs share the same score model $\theta$ while starting from two different prior distributions $p_t$ and $\tilde{p}_t$ respectively. The process of both reverse SDEs is the same as follows: 
    \begin{align}
        d \vx_t = [\vf(\vx_t, t) - g(t)^2 \vs_\theta(\vx_t, t)] dt + g(t) d\bar{\vw}_t.
    \end{align}
    However, the induced marginal distributions $p_t$ and $\tilde{p}_t$ are different for any $t\in (0, T]$ because of the different priors, and by the Fokker-Planck equation, we have
\begin{align}
\label{eqn:fp-equation-in-kl-proof}
     \frac{\partial p_t(\vx)}{\partial t} = - \nabla_\vx  \cdot (\vh(\vx, t) p_t(\vx)) ~~\text{ and }~~     \frac{\partial \tilde{p}_t(\vx)}{\partial t} = - \nabla_\vx  \cdot (\tilde{\vh}(\vx, t) \tilde{p}_t(\vx)),
\end{align}
where $\nabla_\vx  \cdot$ is the divergence operator, and 
    \begin{align}
     \vh(\vx, t) &\triangleq    \vf(\vx, t) - g(t)^2 \vs_\theta(\vx, t) + \frac{1}{2}g(t)^2 \nabla_\vx \log p_t(\vx), \\
     \tilde{\vh}(\vx, t)    &\triangleq \vf(\vx, t) - g(t)^2 \vs_\theta(\vx, t) + \frac{1}{2}g(t)^2 \nabla_\vx \log \tilde{p}_t(\vx),
    \end{align}

Expanding the time derivative of the KL divergence, we obtain
\begin{align}
     \frac{\partial  \KL (\tilde p_t\Vert p_t)}{\partial t} & =  \frac{\partial}{\partial t} \int \tilde p_t(\vx)\log \frac{\tilde p_t(\vx)}{ p_t(\vx)} d\vx \nonumber \\
     & =  \int \frac{\partial \tilde p_t(\vx)}{\partial t }  \log \frac{\tilde p_t(\vx)}{ p_t(\vx)}   d\vx -  \int  \frac{\tilde p_t(\vx)}{ p_t(\vx)}  \frac{\partial  p_t(\vx)}{\partial t }  d\vx \nonumber \\
      & = -  \int \nabla_\vx  \cdot (\tilde \vh(\vx, t) \tilde p_t(\vx))    \log \frac{\tilde p_t(\vx)}{ p_t(\vx)}   d\vx +   \int  \frac{\tilde p_t(\vx)}{ p_t(\vx)}   \nabla_\vx  \cdot ({\vh}(\vx, t)  p_t(\vx))  d\vx \nonumber \\
     & =  \int  (\tilde \vh(\vx, t) \tilde p_t(\vx))^\top   \nabla_\vx   \log \frac{\tilde p_t(\vx)}{ p_t(\vx)}   d\vx - \int    ({\vh}(\vx, t)  p_t(\vx))^\top \nabla_\vx \frac{\tilde p_t(\vx)}{ p_t(\vx)}    d\vx  \label{eq:ibp} \\
  & =  \int  \tilde p_t(\vx) [\tilde \vh(\vx, t)^\top - {\vh}(\vx, t)^\top] [\nabla_\vx \log  \tilde p_t(\vx) - \nabla_\vx \log  p_t(\vx)]   d\vx \nonumber  \\
   & =  \frac{1}{2} g(t)^2 \int \tilde p_t(\vx)  \Vert\nabla_\vx \log  \tilde p_t(\vx) - \nabla_\vx \log  p_t(\vx)\Vert^2_2   d\vx \nonumber \\
   & = g(t)^2 \Fisher(\tilde p_t\Vert p_t). \label{eq:time_derivative}
\end{align}
Eq.~(\ref{eq:ibp}) holds because of integration by parts with mild regularity assumptions (see Assumption A.1 in~\cite{DBLP:conf/icml/LuZB0LZ22}). Based on Eq.~(\ref{eq:time_derivative}), the KL divergence between $p_s$ and $\tilde{p}_s$ is given by:   
\begin{align}
        \KL (\tilde p_s\Vert p_s)
       &  = \KL (\tilde p_t\Vert p_t) - \int_{s}^t  \frac{\partial  \KL (\tilde p_{\tau}\Vert p_{\tau})}{\partial \tau} d\tau \nonumber \\
       & = \KL (\tilde p_t\Vert p_t) - \int_{s}^t g(\tau)^2 \Fisher(\tilde p_{\tau}\Vert p_{\tau}) d\tau \nonumber \\
       & < \KL (\tilde p_t\Vert p_t). \nonumber
\end{align}

\end{proof}

\begin{theorem}[Invariance of ODEs] \label{thm:ode}
    Let $\tilde p_{t}$ and $ p_{t}$ denote the marginal distributions of two ODEs (see Eq.~(\ref{equ:reverse_ode}) at time $t$ repectively. For any $0 \le s < t \le T$, it holds that
    \begin{align}
        \KL (\tilde p_s\Vert p_s)
       = \KL (\tilde p_{t}\Vert p_{t}).
\end{align}
\end{theorem}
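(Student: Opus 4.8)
The plan is to mirror the proof of Theorem~\ref{thm:sde}, computing the time derivative $\frac{\partial}{\partial t}\KL(\tilde p_t \Vert p_t)$ and showing it vanishes identically rather than equaling a nonnegative Fisher-divergence term. The decisive structural difference from the SDE case is that the probability-flow ODE drift depends only on the fixed score model $\vs_\vtheta$ and not on the marginal being transported. Writing the ODE of Eq.~(\ref{equ:reverse_ode}) in the general form $d\vx_t = \vh(\vx_t,t)\,dt$ with the shared velocity field $\vh(\vx,t) \triangleq \vf(\vx,t) - \tfrac{1}{2}g(t)^2 \vs_\vtheta(\vx,t)$, both marginals obey the \emph{same} continuity equation $\frac{\partial p_t}{\partial t} = -\nabla_\vx\cdot(\vh\, p_t)$ and $\frac{\partial \tilde p_t}{\partial t} = -\nabla_\vx\cdot(\vh\, \tilde p_t)$. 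This contrasts with the SDE proof, where the two Fokker--Planck velocities $\vh$ and $\tilde\vh$ differed precisely by the distribution-dependent term $\tfrac{1}{2}g(t)^2(\nabla_\vx\log\tilde p_t - \nabla_\vx\log p_t)$, which is what produced the strictly positive integrand.

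First I would expand the derivative exactly as in the chain leading to Eq.~(\ref{eq:ibp}), substitute the two continuity equations, and integrate by parts under the regularity conditions of Assumption~\ref{assumption:all} to reach
\begin{align*}
\frac{\partial}{\partial t}\KL(\tilde p_t\Vert p_t) = \int (\vh\,\tilde p_t)^\top \nabla_\vx\log\tfrac{\tilde p_t}{p_t}\,d\vx - \int (\vh\, p_t)^\top \nabla_\vx\tfrac{\tilde p_t}{p_t}\,d\vx.
\end{align*}
Rewriting $\nabla_\vx\tfrac{\tilde p_t}{p_t} = \tfrac{\tilde p_t}{p_t}(\nabla_\vx\log\tilde p_t - \nabla_\vx\log p_t)$ turns the second integrand into $\tilde p_t\,\vh^\top(\nabla_\vx\log\tilde p_t - \nabla_\vx\log p_t)$, which is exactly the first integrand because both processes are driven by the single velocity $\vh$. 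The two terms therefore cancel and $\frac{\partial}{\partial t}\KL(\tilde p_t\Vert p_t) = 0$ for every $t$; integrating from $s$ to $t$ yields $\KL(\tilde p_s\Vert p_s) = \KL(\tilde p_t\Vert p_t)$, as claimed.

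I expect the only genuine subtlety --- the same one already present in Theorem~\ref{thm:sde} --- to be justifying the integration by parts and the interchange of differentiation and integration; the smoothness and sub-Gaussian tail conditions of Assumption~\ref{assumption:all} (in particular the control on $p_t^{ODE}$ and $\tilde p_t^{ODE}$) are what make the boundary terms vanish. Conceptually, the statement is just that KL divergence is invariant under a common invertible deterministic flow: the ODE induces a single transport map $\Phi_t$ acting identically on both $p_t$ and $\tilde p_t$, and relative entropy is invariant under a shared change of variables. The infinitesimal cancellation above is the Eulerian form of this fact, and it makes transparent that the contraction in Theorem~\ref{thm:sde} originates entirely from the distribution-dependent score term that is absent in the deterministic dynamics.
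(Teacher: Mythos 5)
Your proposal is correct and follows essentially the same route as the paper: both proofs compute $\frac{\partial}{\partial t}\KL(\tilde p_t \Vert p_t)$ via the Fokker--Planck (continuity) equation and integration by parts under Assumption~\ref{assumption:all}, and observe that the resulting integrand vanishes identically because the two ODEs share the common drift $\vh(\vx,t)=\vf(\vx,t)-\tfrac{1}{2}g(t)^2\vs_\vtheta(\vx,t)$, i.e., $\vh=\tilde\vh$ kills the factor that produced the Fisher divergence in the SDE case. Your closing remark that this is the infinitesimal form of KL invariance under a shared deterministic transport map is a nice conceptual addition, but the argument itself coincides with the paper's.
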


The proof of Theroem~\ref{thm:ode} is similar to Theroem~\ref{thm:sde}. In particular, it is easy to check that $\frac{\partial  \KL (\tilde p_t\Vert p_t)}{\partial t} = 0$, which is a special case of Theorem 3.1 in~\cite{DBLP:conf/icml/LuZB0LZ22}. 
We add proof here for completeness. 

\begin{proof}
    In the ODE setting, $\tilde \tilde p_t$ and $\tilde p_t$ follow the Fokker-Planck equation Eq.~(\ref{eqn:fp-equation-in-kl-proof}) with $\vh(\vx, t) = \tilde \vh(\vx, t) = \vf(\vx, t) - \frac{1}{2}g(t)^2 \vs_\theta(\vx, t)$. 
    In view of Eq.~(\ref{eq:time_derivative}), we know that 
\begin{align*}
     \frac{\partial  \KL (\tilde p_t\Vert p_t)}{\partial t} 
  & =  \int  \tilde p_t(\vx) [\vh(\vx, t)^\top - \tilde{\vh}(\vx, t)^\top] [\nabla_\vx \log  \tilde p_t(\vx) - \nabla_\vx \log  p_t(\vx)]   d\vx 
    = 0.
\end{align*}
Therefore, it holds that $\KL (\tilde p_s\Vert p_s)  = \KL (\tilde p_t\Vert p_t)$.

\end{proof}

\subsection{Convergence of SDEs}
\label{app:convergence}

Following the main text, we have $f(x, t) = -\frac{1}{2}g^2(t) x$ in the following analysis. 
Below we introduce a widely used functional inequality, and we refer the interested readers to \citet{bakry2014analysis} for more details.
We say a distribution $p$ satisfies the log-Sobolev inequality (LSI) if there exists $c_{\rm LSI}(p) > 0$ such that the following holds for every distribution $q$:
\begin{equation}
    D_{\rm KL}(q \| p) \leq \frac{c_{\rm LSI}(p)}{2} \E_q \left \| \nabla \log \frac{q}{p} \right \|^2,
\end{equation}
and the smallest constant $c_{\rm LSI}(p)$ is called the log-Sobolev constant. 

\begin{proposition}
\label{prop: exp convergence}
    Suppose that the LSI holds for the data distribution $q_s$ with $c_{\rm LSI}(q_s) \geq 1$ and $p_{t} = q_{t}$. For any $0 \le s < t \le T$, if $p_{t} \neq \tilde p_{t}$, then
    \begin{equation}
        \KL(\tilde p_{s} \| p_{s}) 
        \leq \exp\left( -\frac{1}{c_{\rm LSI}(q_s)} \int_s^{t} g^2(\tau) d\tau \right)
        \KL(\tilde p_{t} \| p_{t}).
    \end{equation}
\end{proposition}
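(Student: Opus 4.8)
The plan is to convert the exact dissipation identity established in the proof of Theorem~\ref{thm:sde} into a one-sided differential inequality for the KL divergence and then integrate it via Gr\"onwall. Writing $u(\tau) \coloneqq \KL(\tilde p_\tau \Vert p_\tau)$, Eq.~(\ref{eq:time_derivative}) gives $u'(\tau) = g(\tau)^2\, \Fisher(\tilde p_\tau \Vert p_\tau)$, where in the paper's convention $\Fisher(\tilde p_\tau \Vert p_\tau) = \tfrac12\,\E_{\tilde p_\tau}\Vert \nabla \log(\tilde p_\tau / p_\tau)\Vert_2^2$. Since $p_\tau = q_\tau$ by assumption, applying the log-Sobolev inequality for $q_\tau$ to the pair $(\tilde p_\tau, q_\tau)$ yields $\KL(\tilde p_\tau \Vert p_\tau) \le c_{\rm LSI}(q_\tau)\, \Fisher(\tilde p_\tau \Vert p_\tau)$, i.e. $\Fisher(\tilde p_\tau \Vert p_\tau) \ge u(\tau)/c_{\rm LSI}(q_\tau)$. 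Substituting into the identity produces $u'(\tau) \ge \big(g(\tau)^2 / c_{\rm LSI}(q_\tau)\big)\, u(\tau)$.

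The step I expect to be the crux is that this inequality involves the log-Sobolev constant $c_{\rm LSI}(q_\tau)$ at \emph{every} intermediate time $\tau \in [s,t]$, whereas the hypothesis controls only $c_{\rm LSI}(q_s)$. To close the gap I would prove that the log-Sobolev constant is non-increasing along the forward process, $c_{\rm LSI}(q_\tau) \le c_{\rm LSI}(q_s)$ for $\tau \ge s$, which is exactly where the standing assumption $c_{\rm LSI}(q_s) \ge 1$ is used. Concretely, the VP relation $f(\vx,t) = -\tfrac12 g(t)^2 \vx$ lets me identify the forward marginals with a single time-homogeneous Ornstein--Uhlenbeck (OU) semigroup whose stationary law is the standard Gaussian $\gamma$ (with $c_{\rm LSI}(\gamma)=1$): setting $\theta(t) = \tfrac12\int_0^t g(\tau)^2\, d\tau$, one checks that $q_t$ is the OU flow run for time $\theta(t)$ from $q_0$, so by the semigroup property $q_\tau = q_s P_{\theta(\tau)-\theta(s)}$ for $\tau \ge s$. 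Invoking the known evolution of the log-Sobolev constant along the OU semigroup, $c_{\rm LSI}(\mu P_\theta) = e^{-2\theta}\, c_{\rm LSI}(\mu) + (1-e^{-2\theta})$, and writing $\Delta\theta = \theta(\tau)-\theta(s) \ge 0$, gives $c_{\rm LSI}(q_\tau) = 1 + e^{-2\Delta\theta}\big(c_{\rm LSI}(q_s)-1\big) \le c_{\rm LSI}(q_s)$, the last inequality holding precisely because $c_{\rm LSI}(q_s) \ge 1$. This OU identification together with the semigroup LSI-evolution formula are the only non-routine inputs.

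With the monotonicity in hand I would weaken the differential inequality by replacing $c_{\rm LSI}(q_\tau)$ with the larger constant $c_{\rm LSI}(q_s)$, obtaining $\frac{d}{d\tau}\log u(\tau) \ge g(\tau)^2 / c_{\rm LSI}(q_s)$ (here $u>0$ on $[s,t]$: if $u(s)=0$ then $\tilde p_s = p_s$, and since $\tilde p$ and $p$ are marginals of the same SDE the Fokker--Planck evolution would force $\tilde p_t = p_t$, contradicting the hypothesis). Integrating from $s$ to $t$ gives $\log u(t) - \log u(s) \ge \frac{1}{c_{\rm LSI}(q_s)}\int_s^t g(\tau)^2\, d\tau$, which rearranges to $\KL(\tilde p_s \Vert p_s) \le \exp\!\big(-\frac{1}{c_{\rm LSI}(q_s)}\int_s^t g(\tau)^2\, d\tau\big)\, \KL(\tilde p_t \Vert p_t)$, as claimed. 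Everything beyond the two OU inputs is the dissipation identity of Theorem~\ref{thm:sde} combined with a Gr\"onwall-type integration.
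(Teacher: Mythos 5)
Your proof is correct, and its skeleton is the same as the paper's: the dissipation identity of Eq.~(\ref{eq:time_derivative}) from Theorem~\ref{thm:sde}, the log-Sobolev inequality applied at each intermediate time $\tau\in[s,t]$, and a Gr\"onwall-type integration. The one place you diverge is exactly the step you called the crux: controlling $c_{\rm LSI}(q_\tau)$ by $c_{\rm LSI}(q_s)$. The paper imports this wholesale from \citet[Lemma E.7]{lee2022convergence}, which states that LSI propagates along the forward process with constant at most $\max\{c_{\rm LSI}(q_s),1\}\le c_{\rm LSI}(q_s)$; you instead re-derive it from the OU time-change $\theta(t)=\tfrac12\int_0^t g^2$ together with the behavior of LSI constants under scaling and independent convolution. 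Since that is essentially how the cited lemma is proved, your argument is the same route with the citation unpacked --- self-contained where the paper's is not, at the cost of invoking the convolution subadditivity of LSI constants as a known fact.

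Two imprecisions should be fixed, though neither invalidates the proof. First, the OU evolution formula is an inequality, not an identity: scaling gives $c_{\rm LSI}(aX)=a^2c_{\rm LSI}(X)$ and subadditivity under independent sums gives $c_{\rm LSI}(\mu P_\theta)\le e^{-2\theta}c_{\rm LSI}(\mu)+(1-e^{-2\theta})$; equality can fail, but the direction you need is the one that holds, and it still yields $c_{\rm LSI}(q_\tau)\le c_{\rm LSI}(q_s)$ under $c_{\rm LSI}(q_s)\ge 1$. Second, your justification that $u>0$ on $[s,t]$ --- that $\tilde p_s=p_s$ together with the Fokker--Planck evolution would force $\tilde p_t=p_t$ --- is a \emph{backward}-uniqueness claim for a parabolic equation (the reverse SDE determines earlier-time marginals from later-time ones, not the converse) and is not immediate. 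It is also unnecessary: since $u'(\tau)=g^2(\tau)\Fisher(\tilde p_\tau\Vert p_\tau)\ge 0$, the function $u$ is non-decreasing, so either $u>0$ on all of $[s,t]$ and your log-Gr\"onwall applies, or $u$ vanishes somewhere and then $u(s)=0$, making the claimed bound trivial. Cleaner still, integrate the differential inequality with the integrating factor $\exp\bigl(-c_{\rm LSI}^{-1}(q_s)\int_s^{\tau}g^2(r)\,dr\bigr)$, as the paper does, which never requires dividing by $u$.
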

\begin{proof}
    By \citet[Lemma E.7]{lee2022convergence}, we see that LSI also holds for $q_{\tau}$ with $\tau \in [s, t]$ and $c_{\rm LSI}(q_{\tau}) \leq \max\left\{ c_{\rm LSI}(q_s), 1  \right\} \leq c_{\rm LSI}(q_s)$.
    From Eq.~(\ref{eq:time_derivative}) and the PI, we know
    \begin{align*}
        \frac{d}{d\tau} \KL(\tilde p_{\tau} \| p_{\tau})
        &= g^2(\tau) \E_{\tilde p_{\tau}} \left \| \nabla \log \left (\frac{\tilde p_{\tau}}{p_{\tau}} \right ) \right \|^2
        \geq \frac{g^2(\tau)}{ c_{\rm LSI}(q_s)} \KL(\tilde p_{\tau} \| p_{\tau}),
    \end{align*}
    which implies that 
    \begin{align*}
        \frac{d}{d\tau} \left ( e^{ c_{\rm LSI}^{-1}(q_s) \int_{\tau}^{t}g^2(u)du }\KL(\tilde p_{\tau} \| p_{\tau}) \right ) \geq 0.
    \end{align*}
    Therefore, the conclusion follows by integrating over $\tau$. 
\end{proof}

\subsection{Analysis of Cycle-SDE}
\label{app:proof_cycle}

We first present a useful lemma.
\begin{lemma}
\label{lemma:equ_con_mar}
Assume that for any arbitrary values of $\vx$, $p(\vx)>0$, $q(\vx)>0$, $q(\vx| \vy)>0$ amd $q(\vx| \vy)>0$. If $p(\vx|\vy)=q(\vx|\vy)$ and $p(\vy|\vx)=q(\vy|\vx)$ holds for all $\vx$ and for almost every $\vy$, we have $p(\vx)=q(\vx)$ for all $\vx$. 
\end{lemma}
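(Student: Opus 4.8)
The plan is to read both hypotheses as two different factorizations of the ratio of the joint densities $p(\vx,\vy)$ and $q(\vx,\vy)$, and then to notice that these two factorizations force the ratio to be constant. Because every density in sight is assumed strictly positive, all the quotients I write down are well-defined. First I would use Bayes' rule $p(\vx,\vy)=p(\vx\mid\vy)\,p(\vy)$ together with the hypothesis $p(\vx\mid\vy)=q(\vx\mid\vy)$ to obtain
\begin{align}
    \frac{p(\vx,\vy)}{q(\vx,\vy)} = \frac{p(\vy)}{q(\vy)},
\end{align}
so the joint ratio depends on $\vy$ only. Symmetrically, decomposing $p(\vx,\vy)=p(\vy\mid\vx)\,p(\vx)$ and invoking $p(\vy\mid\vx)=q(\vy\mid\vx)$ gives
\begin{align}
    \frac{p(\vx,\vy)}{q(\vx,\vy)} = \frac{p(\vx)}{q(\vx)},
\end{align}
so the same ratio depends on $\vx$ only.

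The key step is then to combine these two identities. They yield $p(\vx)/q(\vx)=p(\vy)/q(\vy)$ for all $\vx$ and almost every $\vy$, i.e.\ a function of $\vx$ alone equals a function of $\vy$ alone on essentially the whole product space. I would conclude that both sides must be a common constant: fixing a single reference point $\vy_0$ at which both conditional identities hold, one gets $p(\vx)/q(\vx)=p(\vy_0)/q(\vy_0)=:c$ for every $\vx$, hence $p(\vx)=c\,q(\vx)$ pointwise.

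Finally I would pin down $c$ by normalization: integrating $p(\vx)=c\,q(\vx)$ over $\vx$ and using $\int p(\vx)\,d\vx=\int q(\vx)\,d\vx=1$ forces $c=1$, giving $p(\vx)=q(\vx)$ for all $\vx$, as claimed. The only point I expect to require care is the measure-theoretic bookkeeping around the phrase ``for almost every $\vy$'': one must check that the null set on which the conditional identities might fail does not obstruct the choice of the reference point $\vy_0$. This is immediate, since the complement of a null set has full measure and is in particular nonempty under the standing positivity assumptions, so a valid $\vy_0$ always exists. All remaining manipulations are elementary rearrangements of Bayes' rule and carry no hidden difficulty.
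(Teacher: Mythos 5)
Your proof is correct and takes essentially the same route as the paper's: both use the definition of conditional densities to reduce the two hypotheses to the separated identity $p(\vx)q(\vy)=q(\vx)p(\vy)$ (your ratio form $p(\vx)/q(\vx)=p(\vy)/q(\vy)$ is the same statement under the positivity assumptions) and then conclude by normalization. The only cosmetic difference is the endgame: the paper integrates the identity over $\vy$ directly, while you fix a reference point $\vy_0$ to get $p = c\,q$ and then integrate over $\vx$ to force $c=1$.
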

\begin{proof}
By the definition of conditional distribution, for all $\vx$ and almost every $\vy$, we have
    \begin{align*}
    \frac{p(\vx,\vy)}{p(\vy)} = \frac{q(\vx,\vy)}{q(\vy)}  \textrm{ and } \frac{p(\vx,\vy)}{p(\vx)} = \frac{q(\vx,\vy)}{q(\vx)},
\end{align*}
which implies that
\begin{align*}
p(\vx)q(\vy) = q(\vx)p(\vy),
\end{align*}
We finish the proof by taking integral w.r.t. $\vy$ on both sides
\begin{align*} 
    p(\vx) = \int p(\vx)q(\vy) d \vy = \int q(\vx)p(\vy) d \vy = q(\vx).
\end{align*}
\end{proof} 

The SDE inversion algorithm does not follow the SamplingSDE process. However, it still reduces the KL divergence between two marginal distributions induced by two processes from different prior distributions as the time approaches zero. This is formally characterized by the following theorem.

\begin{theorem}[Contraction of Cycle-SDEs]
\label{thm:sde_inversion}
 Let $p_{t}$ and $\tilde{p}_{t}$ be the marginal distributions of two Cycle-SDEs (e.g., see Eq.~(\ref{eq:dis_ddim}) with $\eta = 0$ and $\bar{\vw}_s'$ from Eq.~(\ref{eq:sde_inversion})) at time $t$ respectively. For any $0 \le s < t \le T$, if $p_{t} \neq \tilde{p}_{t}$, then
\begin{align}
    \KL (\tilde p_{s}\Vert {p}_{s})
       < \KL (\tilde p_{t}\Vert {p}_{t}).
\end{align}
\end{theorem}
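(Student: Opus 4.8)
The plan is to prove the contraction by applying the chain rule for the KL divergence to the joint law of a pair of times, exploiting that the two Cycle-SDEs share the same reverse-time transition kernel while starting from mismatched priors $\tilde p_t \neq p_t$. Concretely, for any $0 \le s < t \le T$ I would consider the joint distributions $P(\vx_s, \vx_t)$ and $\tilde P(\vx_s, \vx_t)$ that the two processes induce on the pair $(\vx_s, \vx_t)$, and decompose $\KL(\tilde P \Vert P)$ in the two possible orders. This is the standard route to a data-processing-type inequality, as anticipated in Sec.~\ref{sec:theory}.

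First I would establish the key structural fact that the reverse-time conditional $\vx_s \mid \vx_t$ is identical for both processes. Although the Cycle-SDE reuses the recorded noise $\bar{\vw}'$ rather than drawing fresh Gaussian noise — which is exactly what makes a direct quantitative analysis intractable — under the exact-score assumption the recorded noise, conditioned on the current state along the reference trajectory, is a standard Gaussian independent of that state: since $(\vx_s,\vx_t)$ is generated by forward noising from data, one has $\vx_s \mid \vx_t \sim q(\vx_s\mid\vx_t) = \mathcal N(\vmu(\vx_t),\sigma^2\mI)$, so $\bar{\vw}' = (\vx_s - \vmu(\vx_t))/\sigma \sim \mathcal N(\vzero,\mI)$ independent of $\vx_t$. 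Consequently the Cycle-SDE reverse step coincides in distribution with the ordinary DDPM reverse Gaussian kernel $\mathcal N(\vmu(\vx_t),\sigma^2\mI)$, which depends only on the pretrained model and the noise schedule, not on the prior. Composing these per-step kernels shows that the multi-step reverse conditional $B(\vx_s\mid\vx_t)$ is the same function for all $s<t$, i.e.\ $P(\vx_s\mid\vx_t) = \tilde P(\vx_s\mid\vx_t) = B(\vx_s\mid\vx_t)$.

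Given this, the argument closes quickly. Conditioning on $\vx_t$ and using the shared reverse kernel kills the conditional term, so $\KL(\tilde P\Vert P) = \KL(\tilde p_t\Vert p_t)$. Conditioning instead on $\vx_s$ yields $\KL(\tilde P\Vert P) = \KL(\tilde p_s\Vert p_s) + \E_{\tilde p_s}\!\big[\KL(\tilde P(\vx_t\mid\vx_s)\Vert P(\vx_t\mid\vx_s))\big]$. Equating the two decompositions gives
\begin{align}
\KL(\tilde p_t\Vert p_t) = \KL(\tilde p_s\Vert p_s) + \E_{\tilde p_s}\!\big[\KL(\tilde P(\vx_t\mid\vx_s)\Vert P(\vx_t\mid\vx_s))\big],
\end{align}
and since the expectation is nonnegative this immediately delivers $\KL(\tilde p_s\Vert p_s) \le \KL(\tilde p_t\Vert p_t)$.

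For the strict inequality I would argue by contradiction using Lemma~\ref{lemma:equ_con_mar}. If equality held, the nonnegative expectation above must vanish, forcing the forward-time conditionals to agree, $\tilde P(\vx_t\mid\vx_s) = P(\vx_t\mid\vx_s)$ for a.e.\ $\vx_s$. Together with the already-established equality of the reverse-time conditionals $\tilde P(\vx_s\mid\vx_t) = P(\vx_s\mid\vx_t)$, Lemma~\ref{lemma:equ_con_mar} then forces $\tilde p_t = p_t$, contradicting the hypothesis $\tilde p_t \neq p_t$; hence the inequality is strict. I expect the main obstacle to be the structural claim that the reverse conditional is genuinely shared despite the correlated recorded noise: one must verify carefully that reusing the recorded $\bar{\vw}'$ rather than fresh noise does not break the identification of the Cycle-SDE reverse step with the data-independent DDPM Gaussian kernel — precisely the subtlety the paper flags about the correlation of the $\bar{\vw}'_s$.
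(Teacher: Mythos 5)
Your proposal is correct and follows essentially the same route as the paper's proof: the two-way chain-rule (data-processing) decomposition of the KL divergence over the joint law of $(\vx_s,\vx_t)$ with a shared reverse-time kernel, nonnegativity of the forward-conditional term, and the same contradiction argument via Lemma~\ref{lemma:equ_con_mar} for strictness. The only difference is that you attempt to justify the shared-kernel structural fact (via the Gaussianity of the recorded noise), whereas the paper simply builds it into the definition of the joint distributions $p_{t,s}$ and $\tilde p_{t,s}$, noting only that the Gaussian noise in Algorithm~\ref{alg:cycle_sde} guarantees the positivity assumptions of the lemma.
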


\begin{proof}
We first introduce two joint distributions of $\vx_t$ and $\vx_{s}$ as follows:
\begin{align*}
    p_{t,s}(\vx_t, \vx_{s}) = p_{s|t}(\vx_{s}|\vx_t) p_{t}(\vx_t) ~~ \textrm{ and } ~~
    \tilde{p}_{t, s}(\vx_t, \vx_{s}) = p_{s|t}(\vx_{s}|\vx_t) \tilde{p}_{t}(\vx_t).
\end{align*}
where the conditional distribution $p_{s|t}(\vx_{s}|\vx_t)$ is defined by Eq.~(\ref{eq:dis_ddim}) with $\bar \vw_s$ obtained from Algorithm~\ref{alg:cycle_sde}. 
We denote the conditional distributions by $p_{t|s} = p_{t,s} / p_s$ and $\tilde{p}_{t|s} = \tilde{p}_{t|s} / p_s$. 

Then, according to the chain rule for KL divergence~\cite{cover1999elements}, we have that:
\begin{align*}
\KL (\tilde p_{t,s}\Vert {p}_{t, s}) 
    & =  \KL \left(\tilde p_{t} \ \Vert \ {p}_{t} \right) +   \underbrace{\mathbb{E}_{p_{t}}  \left[ \KL \left( \tilde p_{s|t}  \ \Vert \ {p}_{s|t} \right) \right] }_{=0} \\
    & =  \KL \left(\tilde p_{s} \ \Vert \ {p}_{s} \right)  +  \underbrace{\mathbb{E}_{p_{s}}\left[ \KL \left( \tilde p_{t|s}  \ \Vert \ {p}_{t|s} \right) \right]}_{\ge 0},
\end{align*}
which implies that $\KL (\tilde p_{s}\Vert {p}_{s}) \le \KL (\tilde p_{t}\Vert {p}_{t})$. This is also known as the data processing inequality of KL divergence~\citep{cover1999elements}. 

The remaining part of the proof shows that $\mathbb{E}_{p_{s}}\left[ \KL \left( \tilde p_{t|s}  \ \Vert \ {p}_{t|s} \right) \right] > 0$ by contradiction. In fact, if $\mathbb{E}_{p_{s}}\left[ \KL \left( \tilde p_{t|s}  \ \Vert \ {p}_{t|s} \right) \right] = 0$, 
then $p_{t | s}( \cdot | \vx_s ) = \tilde p_{t | s}( \cdot | \vx_s)$ holds almost surely, and 
according to Lemma~\ref{lemma:equ_con_mar}, we have $p_t = \tilde{p}_t$, which is a contradiction.  The assumptions in Lemma~\ref{lemma:equ_con_mar} hold because of the added Gaussian noise in Algorithm~\ref{alg:cycle_sde}. %
\end{proof}

\section{Toy example}
\label{app:toy}

In this section, we conduct a toy simulation on the Gaussian mixture data where the true score function has a closed form and the log-Sobolev inequality holds to illustrate our theoretical results (Theorem~\ref{thm：main_sde}, Theorem~\ref{thm:main_ode} and Proposition~\ref{prop: exp convergence}) clearer.

\textbf{Data distribution.} We generate examples from a binary mixture of Gaussian distributions, that is, $q_0(x_0) = \frac{1}{2} \mathcal N(x_0 \vert -\mu, \sigma^2) + \frac{1}{2} \mathcal N(x_0 \vert \mu, \sigma^2)$, where $\mu = 0.5$ and $\sigma=0.2$. Its score function has an analytic form, we derive it in the following for completeness. We know that $x_t = \sqrt{\alpha_t}x_0 + \sqrt{1-\alpha_t} \epsilon_t$, where $\epsilon_t \sim \mathcal{N}(0,1)$. Then, by changing of variable of probability density, we have
\begin{align*}
&\sqrt{\alpha_t}x_0 \sim \frac{1}{2} \mathcal N(-\sqrt{\alpha_t}\mu, \alpha_t\sigma^2) + \frac{1}{2} \mathcal N(\sqrt{\alpha_t} \mu, \alpha_t\sigma^2),\\
&\sqrt{1-\alpha_t} \epsilon_t \sim \mathcal{N}(0,1-\alpha_t).
\end{align*}
Combining these, we have
\begin{align*}
q_t(x_t) &=  \frac{1}{2} \mathcal N(x_t \vert -\sqrt{\alpha_t}\mu, \alpha_t\sigma^2 + 1-\alpha_t) + \frac{1}{2} \mathcal N(x_t \vert \sqrt{\alpha_t}\mu, \alpha_t\sigma^2 + 1-\alpha_t)\\
&  \triangleq \frac{1}{2} \mathcal N(x_t \vert -\mu_t, \sigma_t^2) + \frac{1}{2} \mathcal N(x_t \vert \mu_t, \sigma_t^2)\\
&=  \frac{1}{2\sqrt{2\pi\sigma_t^2}} \exp(-\frac{(x_t - \mu_t)^2}{2\sigma_t^2}) + \frac{1}{2\sqrt{2\pi\sigma_t^2}} \exp(-\frac{(x_t + \mu_t)^2}{2\sigma_t^2}).
\end{align*}
Therefore, we can obtain the true score function as follows:
\begin{align*}
s(x_t, t) &= \nabla_{x_t} \log q_t(x_t) \\
&= -\frac{1}{q_t(x_t)} \left( \frac{1}{2\sqrt{2\pi\sigma_t^2}} \exp(-\frac{(x_t - \mu_t)^2}{2\sigma_t^2}) \frac{x_t - \mu_t}{\sigma_t^2} + \frac{1}{2\sqrt{2\pi\sigma_t^2}} \exp(-\frac{(x_t + \mu_t)^2}{2\sigma_t^2})\frac{x_t + \mu_t}{\sigma_t^2} \right)\\
&= -\frac{  \exp(-\frac{(x_t - \mu_t)^2}{2\sigma_t^2}) \frac{x_t - \mu_t}{\sigma_t^2} + \exp(-\frac{(x_t + \mu_t)^2}{2\sigma_t^2})\frac{x_t + \mu_t}{\sigma_t^2} }{\exp(-\frac{(x_t - \mu_t)^2}{2\sigma_t^2}) + \exp(-\frac{(x_t + \mu_t)^2}{2\sigma_t^2})} \\
&= \frac{\mu_t}{\sigma_t^2} \frac{  \exp(-\frac{(x_t - \mu_t)^2}{2\sigma_t^2}) - \exp(-\frac{(x_t + \mu_t)^2}{2\sigma_t^2})}{\exp(-\frac{(x_t - \mu_t)^2}{2\sigma_t^2}) + \exp(-\frac{(x_t + \mu_t)^2}{2\sigma_t^2})} - \frac{x_t}{\sigma_t^2} \\
&= \frac{\mu_t}{\sigma_t^2} \frac{  \exp(\frac{2x_t\mu_t}{\sigma_t^2}) - 1}{\exp(\frac{2x_t \mu_t}{\sigma_t^2}) + 1} - \frac{x_t}{\sigma_t^2} \\
&= \frac{\mu_t}{\sigma_t^2} \tanh(\frac{\mu_t x_t}{\sigma_t^2}) - \frac{x_t}{\sigma_t^2},
\end{align*}
where we recall that $\mu_t = \sqrt{\alpha_t}\mu$ and $\sigma_t^2 = \alpha_t \sigma^2+ 1 - \alpha_t$. In our experiment, $\alpha_t$ is set by the cosine schedule in~\cite{nichol2021improved}.

Besides, the log-Sobolev inequality holds for $q_0$ with $c_{\rm LSI}(q_0) \leq \sigma \left (1 + \frac{1}{4} \left ( e^{\frac{4\mu^2}{\sigma}} + 1 \right ) \right )$ (see \citet[Sec.~4.1]{schlichting2019poincare} for details).

\textbf{Sampler.} Based on the true score function $s_t(x_t)$, we can derive $\epsilon(x_t, t) = -\sqrt{1 - \alpha_t} s(x_t, t)$ and adopt DDPM (i.e., Eq.~\ref{eq:dis_ddim} with $\eta=1$) as the SDE sampler, DDIM(i.e., Eq.~\ref{eq:dis_ddim} with $\eta=0$) as the ODE sampler. We sample data from different prior distributions including Gaussian distributions $\mathcal{N}(0, 1)$, $\mathcal{N}(2, 2^2)$ and and uniform distribution $\mathcal{U}[-2, 2]$.

\textbf{Results and discussion.} As shown in Fig.~\ref{fig:toy_data}, when the prior distribution is $\mathcal{N}(0, 1)$, getting benefit from the analytic score function $s(x_t, t)$, both SDE and ODE sampler recover $q_0$. However, when the prior distribution mismatch with the standard Gaussian distribution, the SDE sampler succeeds in recovering $q_0$ while the ODE sampler fails.

The simulation results illustrate our theory (Theorem~\ref{thm：main_sde}, Theorem~\ref{thm:main_ode} and Proposition~\ref{prop: exp convergence}) more clearly. On the one hand, Theorem~\ref{thm：main_sde} and Proposition~\ref{prop: exp convergence} state that $\KL(\tilde{p}_0 \Vert {p}_0) 	\approx \KL(\tilde{p}_0 \Vert {q}_0)$ is exponentially smaller than $\KL(\tilde{p}_T \Vert {p}_T)$, which means that though the prior mismatch, the SDE sampler can find $\tilde{p}_0$ that is similar to $q_0$. However, in terms of the ODE sampler, Theorem~\ref{thm:main_ode} guarantees that $\KL(\tilde{p}_t \Vert p_t)$ remains unchanged during the sampling process, which means that the ODE sample can never find $\tilde{p}_0$ that is similar to $q_0$.

\begin{figure}[t]
    \centering
    \begin{subfigure}{0.32\textwidth}
        \includegraphics[width=\linewidth]{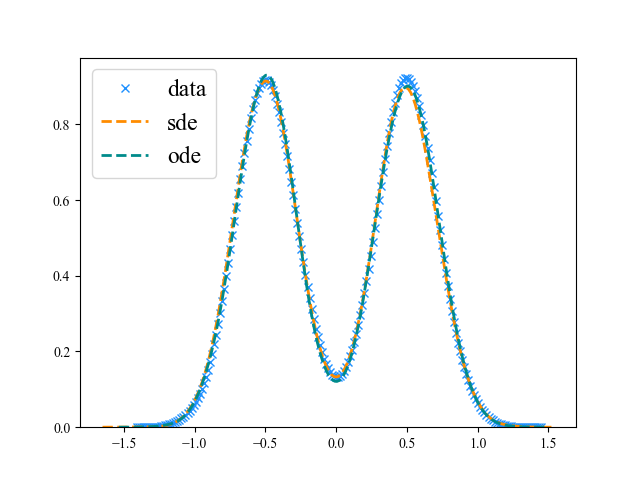}
        \caption{$\tilde{p}_{T}(\vx_T) = \mathcal{N}(0, 1)$}
    \end{subfigure}
    \begin{subfigure}{0.32\textwidth}
        \includegraphics[width=\linewidth]{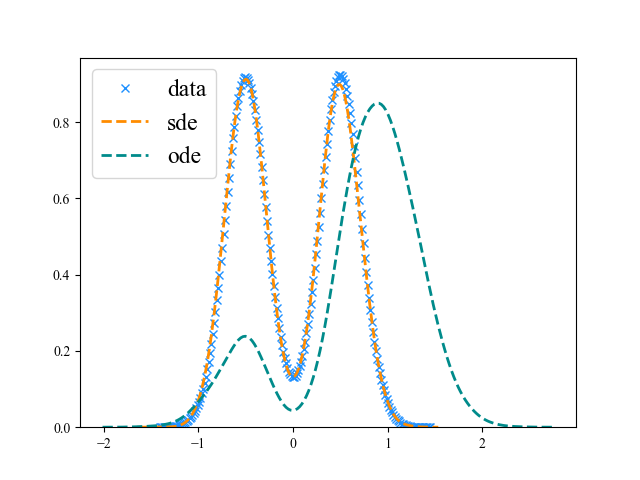}
        \caption{$\tilde{p}_{T}(\vx_T) = \mathcal{N}(2, 2^2)$}
    \end{subfigure}
    \begin{subfigure}{0.32\textwidth}
        \includegraphics[width=\linewidth]{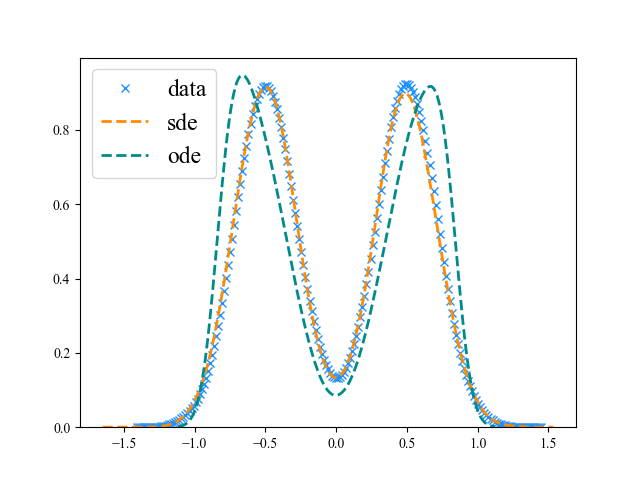}
        \caption{$\tilde{p}_{T}(\vx_T) = \mathcal{U}[-2, 2]$}
    \end{subfigure}
    \caption{
    \textbf{1D toy experiment.} (a) Both ODE and SDE samplers match the data if $\tilde{p}_{T}(\vx_T) = \mathcal{N}(0, 1)$. (b-c) ODE fails to recover the data distribution while SDE succeeds though the prior distribution mismatch with $\tilde{p}_{T}(\vx_T) \neq \mathcal{N}(0, 1)$.}
    \label{fig:toy_data}
\end{figure} 

\section{Experimental Details}
\label{app:imple_details}

We present experimental details in this section.

\subsection{Inpainting}
\label{appendix:inpainting}

We first provide an overview of the inpainting tasks and then detail the experimental settings of Section~\ref{sub:inpainting}.

In the realm of image generation, ODEs converge more rapidly than SDEs~\citep{song2020score, song2020denoising,lu2022dpm, lu2022dpm-plus, karras2022elucidating}. As a result, ODEs have been more prevalently utilized in inpainting tasks compared to SDEs. To our knowledge, we are the first to conduct a systematic comparison between SDEs and ODEs in the context of inpainting, demonstrating that SDE can achieve superior results in inpainting tasks.

Inpainting-SDE and inpainting-ODE are equivalent to the inpainting methods used by SDEdit and Stable Diffusion respectively, which are described in Appendix~\ref{app:instance}. 

We adopt Stable Diffusion 1.5 as our foundational model and consistently use the prompt ``\texttt{photograph of a beautiful empty scene, highest quality settings}'' for all images following~\cite{rombach2022high}. For classifier-free guidance~\citep{ho2022classifier}, we utilize a commonly accepted scale of $7.5$. In addition, we employ the mask provided by~\cite{li2022mat} in evaluation for simplicity.

\subsection{DDIB}
\label{appendix:ddib}
In this section, we summarize DDIB~\citep{su2022dual} and detail the experimental settings of Appendix~\ref{app:more_i2i}.

DDIB is an image-to-image translation method that leverages the reversibility of ODE. Please refer to Appendix~\ref{app:instance} for more details about DDIB. To validate the superiority of SDE in general image editing methods, we replaced the ODE inversion and ODE solver in DDIB with the noise-adding and Cycle-SDE processes, respectively.

For our evaluations, we utilize the FID to measure the similarity between translated images and the target dataset, in addition to the SSIM and L2 metrics to gauge the resemblance between translated images and the source dataset. We set the scale of the classifier guidance to $1$~\citep{dhariwal2021diffusion}, consistent with~\cite{su2022dual}.

\subsection{DiffEdit}
\label{appendix:diffedit}
In this section, we give an overview of DiffEdit~\citep{couairon2022diffedit} and provide detailed experimental configurations of Section~\ref{sub:exp_i2i}.

DiffEdit is a method designed for image-to-image translation. Given a source image $\vx_0$ accompanied by a source prompt that describes it, and a target prompt that outlines the desired editing image, DiffEdit first generates a mask $M$ to highlight the editing area. Subsequently, the method uses the ODE inversion with the source prompt to obtain the latent representation $\vx_{t_0}$, where $t_0 \in (0, T]$ serves as a hyper-parameter. DiffEdit then employs the ODE sampling from $\vx_{t_0}$ employing the target prompt and the mask $M$ to perform inpainting. For the general probabilistic formulation of DiffEdit, please refer to Appendix~\ref{app:instance}. In our SDE implementation, both the ODE inversion and the ODE solver are substituted by the noise-adding and the Cycle-SDE process, respectively.

We adopt the evaluation pipeline from~\cite{couairon2022diffedit}. For each image in the COCO validation set, COCO-BISON identifies captions that are not directly paired but are similar to the original. Consequently, we possess a source image with an accompanying source prompt and a target prompt paired with a reference image for each edit. We utilize the Stable Diffusion 1.5~\citep{rombach2022high} as our base model and apply the default classifier-free guidance scale of $7.5$ for both DiffEdit-SDE and DiffEdit-ODE. The interval $[0, T]$ is discretized into $100$ steps with varying $t_0$. For example, with $t_0=0.5T$, both the inversion and sampling procedures consist of $50$ steps each.

\subsection{DragBench}
\label{appendix:dragbench}
In this section, we present DragBench, our newly introduced benchmark for image dragging. DragBench consists of an open set of 100 images, spanning 5 distinct categories, as detailed in Table~\ref{tab:dragbench}. Each image in the set is accompanied by a textual description and at least one pair of source-target points. Furthermore, out of the entire collection, 44 images were randomly chosen. For these selected images, masks were applied to regions that do not include the source-target points and visually should remain unchanged. We release DragBench on our project page.

\begin{table}[t!]
    \centering
    \caption{\textbf{Categories in DragBench with corresponding image counts}}
    \label{tab:dragbench}
    \begin{tabular}{ccccccc}
     \toprule
     Image type & art & animal & plant & natural landscape & AI-generated \\
     \midrule 
     Count & 5 & 50 & 8 & 20& 17\\     
      \bottomrule
    \end{tabular}
\end{table}

\subsection{User Study}
\label{appendix:user_study}
We provide more details about the user study of Section~\ref{sub:drag}. Fig.~\ref{fig:user_study_screenshot} presents a screenshot of the interface in the user study, which includes the guidelines and a sample question provided to the participants.

\begin{figure}[ht]
    \centering
    \begin{subfigure}{0.9\textwidth}
        \includegraphics[width=\linewidth]{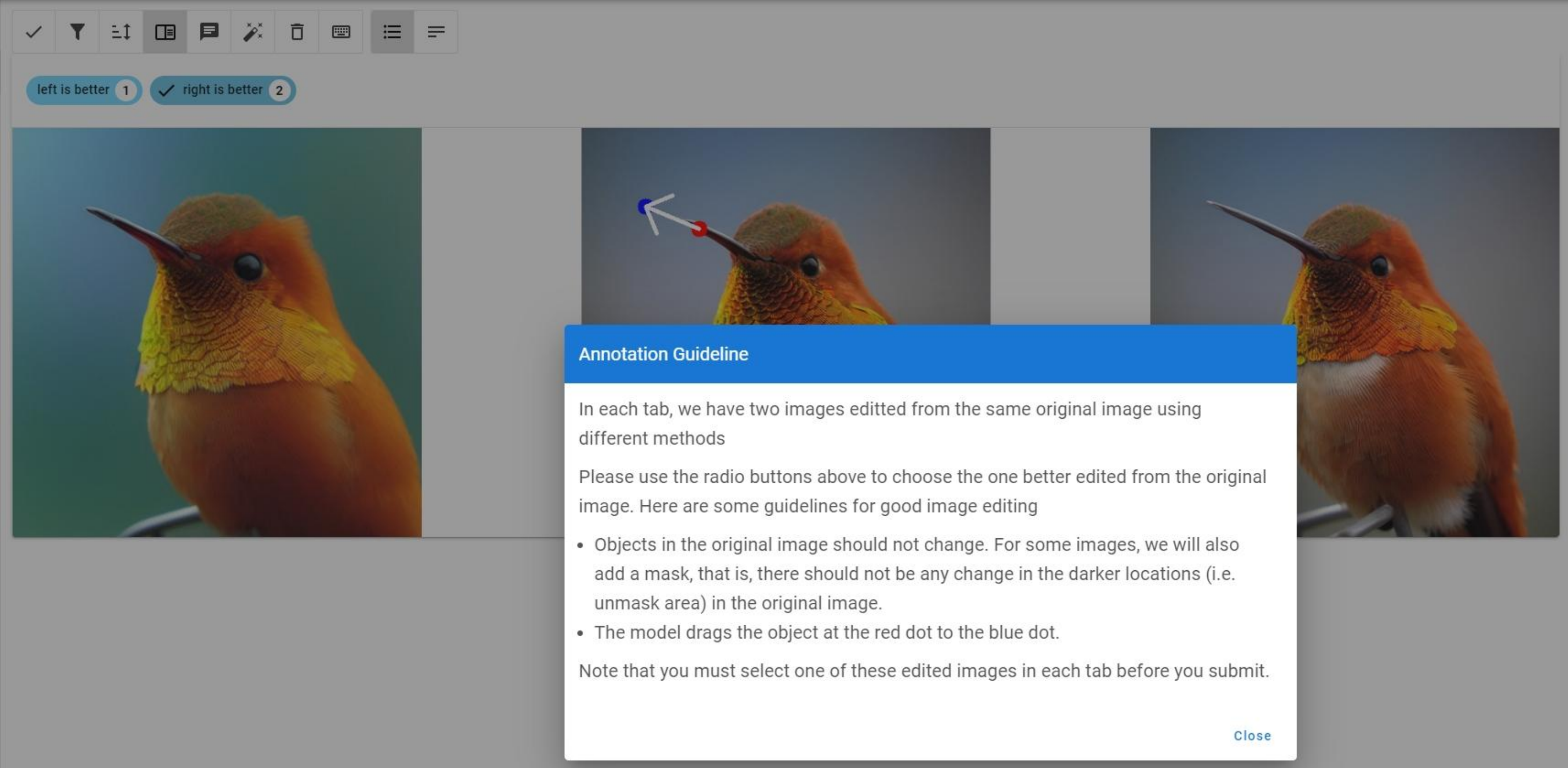}
    \end{subfigure}
    \begin{subfigure}{0.9\textwidth}
        \includegraphics[width=\linewidth]{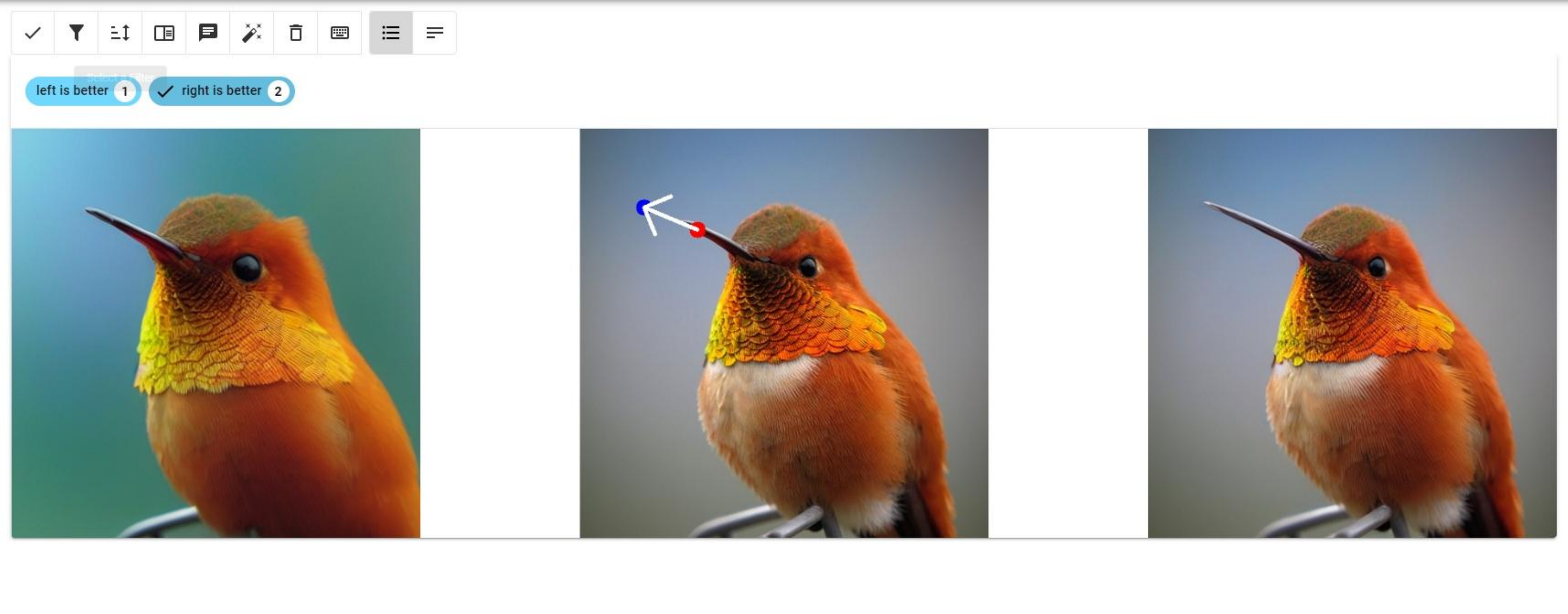}
    \end{subfigure}
    \caption{\textbf{Screenshot from the User Study.} The top and bottom row displays the guidelines and a sample question provided to the participants, respectively.}
    \label{fig:user_study_screenshot}
\end{figure}

\subsection{Dragging}
\label{sub:drag_implementation}
In this section, we provide a comprehensive overview of our proposed SDE-Drag and outline the implementation details of DragDiffusion and DragGAN.

We employ Stable Diffusion 1.5~\citep{rombach2022high} as the base model for both SDE-Drag and ODE-Drag. To enjoy relatively high classifier-free guidance (CFG) and numerical stability simultaneously, we linearly increase the CFG scale from $1$ to $3$ as time goes from
$0$ to $t_0$.

Subsequently, we delve into the impact of LoRA fine-tuning. As depicted in Figs.~(\ref{fig:ablation_lora_a}-\ref{fig:ablation_lora_c}), LoRA fine-tuning aids in retaining the core content of the image, particularly in arduous dragging tasks such as revealing the unseen facet of an object. Nonetheless, LoRA fine-tuning may result in overfitting to the input image, compromising its editability, as evidenced in Figs.~(\ref{fig:ablation_lora_e}-\ref{fig:ablation_lora_g}). To strike a balance between preserving the core content and avoiding overfitting, we adopted a dynamic LoRA scale strategy. Specifically, we reduce the LoRA scale from $1$ to $0.5$ as time goes from $0$ to $t_0$. Note that a LoRA scale of $0$ indicates reliance solely on the pre-trained U-Net weights, excluding LoRA parameters. Conversely, at a LoRA scale of $1$, all LoRA parameters are utilized. At higher values of $t$ (e.g., $t=t_0$), the denoising process determines the object's outline. Thus, to circumvent overfitting, we opt for a smaller LoRA scale. On the other hand, at lower $t$ values (e.g., $t=0$), the object's outline is already defined, and the denoising process merely augments details. Therefore, a larger LoRA scale at this stage doesn't pose a risk of overfitting. We assessed the impact of reducing the LoRA scale from $1$ to various values $\{0.7, 0.5, 0.3, 0\}$ across several images. Our observations revealed minimal overall differences, leading us to settle on $0.5$ as our choice. The influence of the dynamic LoRA scale is evident in Figs.~(\ref{fig:ablation_lora_d}, \ref{fig:ablation_lora_h}).

We set the fine-tuning learning rate at $2 \times 10^{-4}$, LoRA rank to $4$, and training steps to $100$. On a single A100 GPU, this takes about 20 seconds for a single image.

For the baseline implementation, we follow the official code provided by DragGAN~\citep{pan2023drag} and DragDiffusion~\citep{shi2023dragdiffusion} to edit the images in DragBench for user study and time comparison. For DragDiffusion, we stop optimization when all the source points are no more than $1$ pixel away from the corresponding target points and the maximum number of optimization steps is set to $40$ following the official code. In addition, following DragGAN, we stop optimization when all the source points are no more than $d$ pixel away from the corresponding target points. When the number of point pairs is no more than $5$, $d$ is set to $1$, otherwise it is set to $2$. The maximum number of optimization steps of DragGAN is set to $300$.

\section{Additional Results}

\subsection{Additional Results of Inpainting}
\label{appendix:additional_inpainting_result}
In this section, to comprehensively demonstrate the superiority of SDE, we present inpainting results using the second-order ODE and SDE solvers. As shown in Tab~\ref{tab:inpainting_order2}, the SDE algorithm continues to outperform the ODE approach in the context of second-order solvers across all settings. However, when juxtaposed with the first-order solvers, the performance of the second-order algorithms falls short. We conjecture that this underperformance primarily stems from pixel manipulation magnifying the well-known instability tied to high-order algorithms~\citep{lu2022dpm-plus}.

\begin{table}
\begin{center}
 \caption{\textbf{Results in inpainting under second order solver.}  Inpainting-ODE-2 and Inpainting-SDE-2 employ DPM-Solver++(2M) and SDE-DPM-Solver++(2M)~\citep{lu2022dpm-plus} respectively. Inpaint-SDE-2 outperforms Inpaint-ODE-2 in all settings.
}
 \label{tab:inpainting_order2}
 \vspace{-0.85em}
\begin{adjustbox}{max width=\textwidth}
  \begin{tabular}[t]{lcccccccccccc}
   \toprule
    & \multicolumn{6}{c}{Small Mask}& \multicolumn{6}{c}{Large Mask} \\
    \cmidrule(lr){2-7} \cmidrule(lr){8-13}
    & \multicolumn{3}{c}{FID $\downarrow$} & \multicolumn{3}{c}{LPIPS $\downarrow$} & \multicolumn{3}{c}{FID $\downarrow$}& \multicolumn{3}{c}{LPIPS $\downarrow$} \\
    \cmidrule(lr){2-4} \cmidrule(lr){5-7} \cmidrule(lr){8-10} \cmidrule(lr){11-13}
     $\#$ steps& 25 & 50  & 100  & 25 & 50 & 100  & 25 & 50  & 100  & 25 & 50 & 100 \\
    \midrule
    Inpainting-ODE-2 & 6.12 & 6.03 & 5.95 & 0.229 & 0.228 & 0.227 & 18.13 & 17.82 & 17.67 & 0.363 & 0.362 & 0.361 \\
    Inpainting-SDE-2 & \textbf{6.11} & \textbf{5.39} & \textbf{5.10} & \textbf{0.224} & \textbf{0.219} & \textbf{0.217} & \textbf{18.04} & \textbf{16.20} & \textbf{15.61} & \textbf{0.355} & \textbf{0.347} & \textbf{0.344}\\ 
    \bottomrule

\end{tabular}
\end{adjustbox}
\end{center}
\end{table}

\subsection{Additional results of image-to-image translation}
\label{app:more_i2i}
In this section, we present the results of another image-to-image translation method named  DDIB~\citep{su2022dual}, illustrating the superiority of SDE compared to the ODE baseline over the general image editing methods.

\paragraph{Setup.} For fairness and consistency, we employ the ImageNet dataset and ADM~\citep{dhariwal2021diffusion} trained on ImageNet in the evaluation following DDIB. We adopt the wide-use metrics FID, L2 and SSIM~\citep{wang2004image} in image-to-image translation~\citep{meng2021sdedit, zhao2022egsde, wu2022unifying} for quantitative analysis.  We keep all hyperparameters the same as DDIB for simplicity and fairness. Furthermore, in order to visually illustrate the difference between editing and sampling, we also conduct image-generation experiments on the target dataset, which is a sampling from Gaussian noise that employs the target dataset label. For more discussion about the DDIB experiment please refer to Appendix~\ref{appendix:ddib}.

\paragraph{Results.} As presented in Tab.~\ref{tab:img_edit_ddib}, for translations from Lion to Tiger and Cock to Bird, the DDIB-SDE (i.e., the SDE-based DDIB) significantly outperforms the DDIB-ODE (i.e., the ODE-based DDIB) in image fidelity, as gauged by the FID metric.  While achieving high image fidelity, DDIB-SDE preserves similar image faithfulness to DDIB-ODE as measured by the L2 and SSIM metrics. Please refer to Fig.~\ref{fig:ddib_case} for visualization results, in which DDIB-SDE shows a higher image fidelity in image-to-image translation.

For ODE, there is a significant performance difference between editing and sampling, with the editing FID being notably inferior. Conversely, SDE demonstrates more uniform performance, with the editing and sampling FID scores closely aligned. This observation robustly validates our motivation that SDE achieves superior consistency between editing and sampling.

\begin{table}
    \centering
    \caption{\textbf{Results in DDIB.} Editing and Sampling represent image-to-image translation and image generation, respectively. SDE significantly outperforms ODE under FID in image-to-image translation and exhibits a more consistent performance between editing and sampling.}
    \label{tab:img_edit_ddib}
    \begin{tabular}{rccc@{\hspace{1cm}}ccc}
     \toprule
     & \multicolumn{3}{c}{Lion $\rightarrow$ Tiger}& \multicolumn{3}{c}{Cock $\rightarrow$ Bird} \\
     \cmidrule(lr){2-7} 
     & FID $\downarrow$ & l2 $\downarrow$  & SSIM $\uparrow$ & FID $\downarrow$ & l2 $\downarrow$  & SSIM $\uparrow$ \\
     \midrule
     \multicolumn{7}{l}{\emph{Editing}} \\
       \quad DDIB-ODE  &  30.25  & 68.21 & 0.059 &   58.02 & 83.63 & 0.135  \\
       \quad DDIB-SDE  &  16.55  & 67.83 & 0.063 &   26.63 & 87.13 & 0.160 \\
      \midrule
     \multicolumn{7}{l}{\emph{Sampling}} \\
      \quad ODE  &  15.71  &  - & - & 29.51 &  - & -\\
      \quad SDE  &  17.56  &  - & - & 28.62 &  - & -\\
      \bottomrule
    \end{tabular}
\end{table}

\subsection{Additional Results of User study}
\label{app:additional_user_study}
In this section, we present additional user study results of SDE-Drag over ODE-Drag and DragGAN without LoRA. As shown in Fig.~\ref{fig:user_study_sde_wo_lora}, SDE-Drag consistently outperforms the two baselines.

\begin{figure}[t!]
    \centering
    \begin{subfigure}{0.4\textwidth}
        \includegraphics[width=\linewidth]{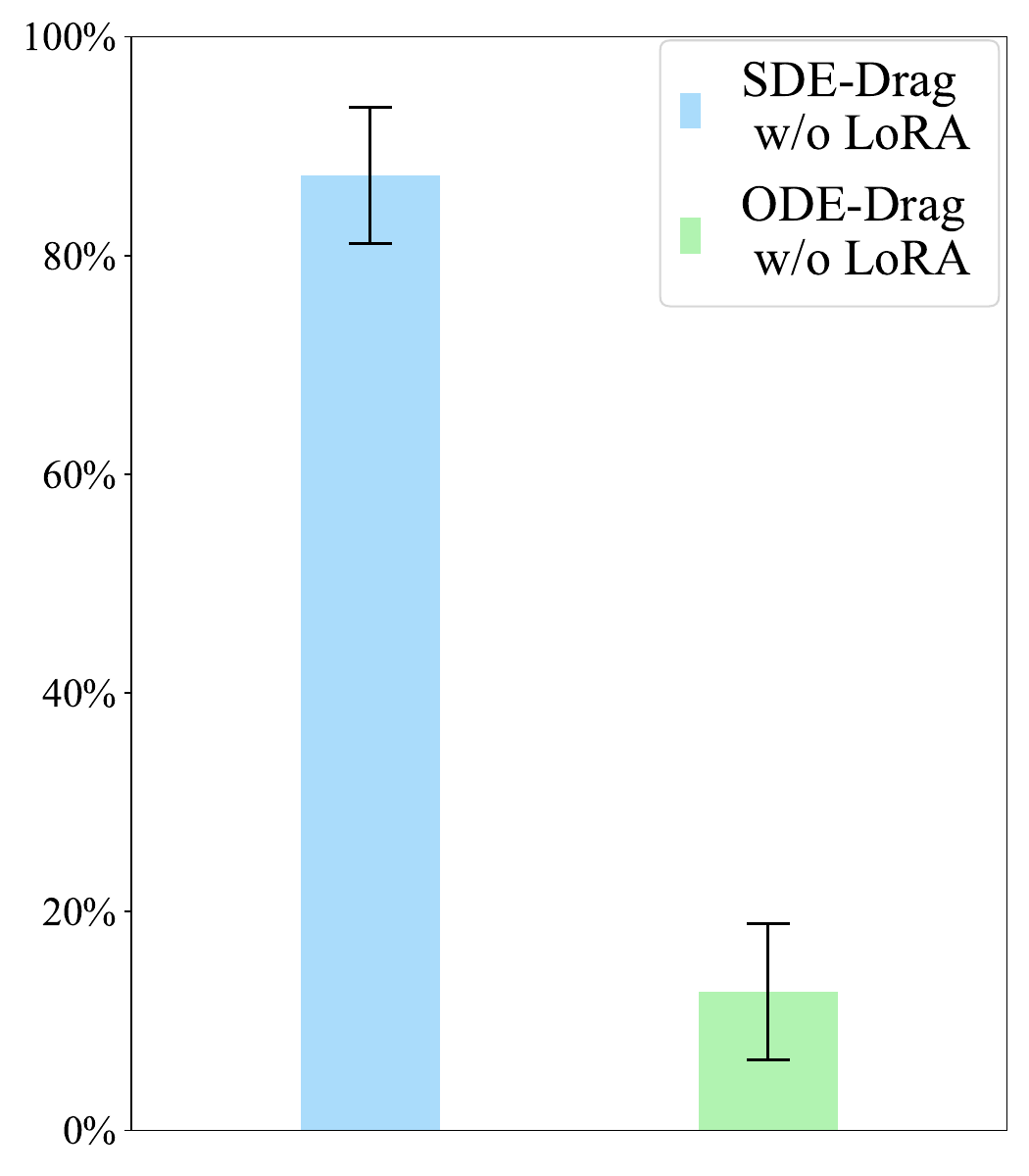}
        \caption{SDE w/o LoRA vs. ODE w/o LoRA}
        \label{fig:user_study_sde_ode_wo_lora}
    \end{subfigure}
    \begin{subfigure}{0.4\textwidth}
        \includegraphics[width=\linewidth]{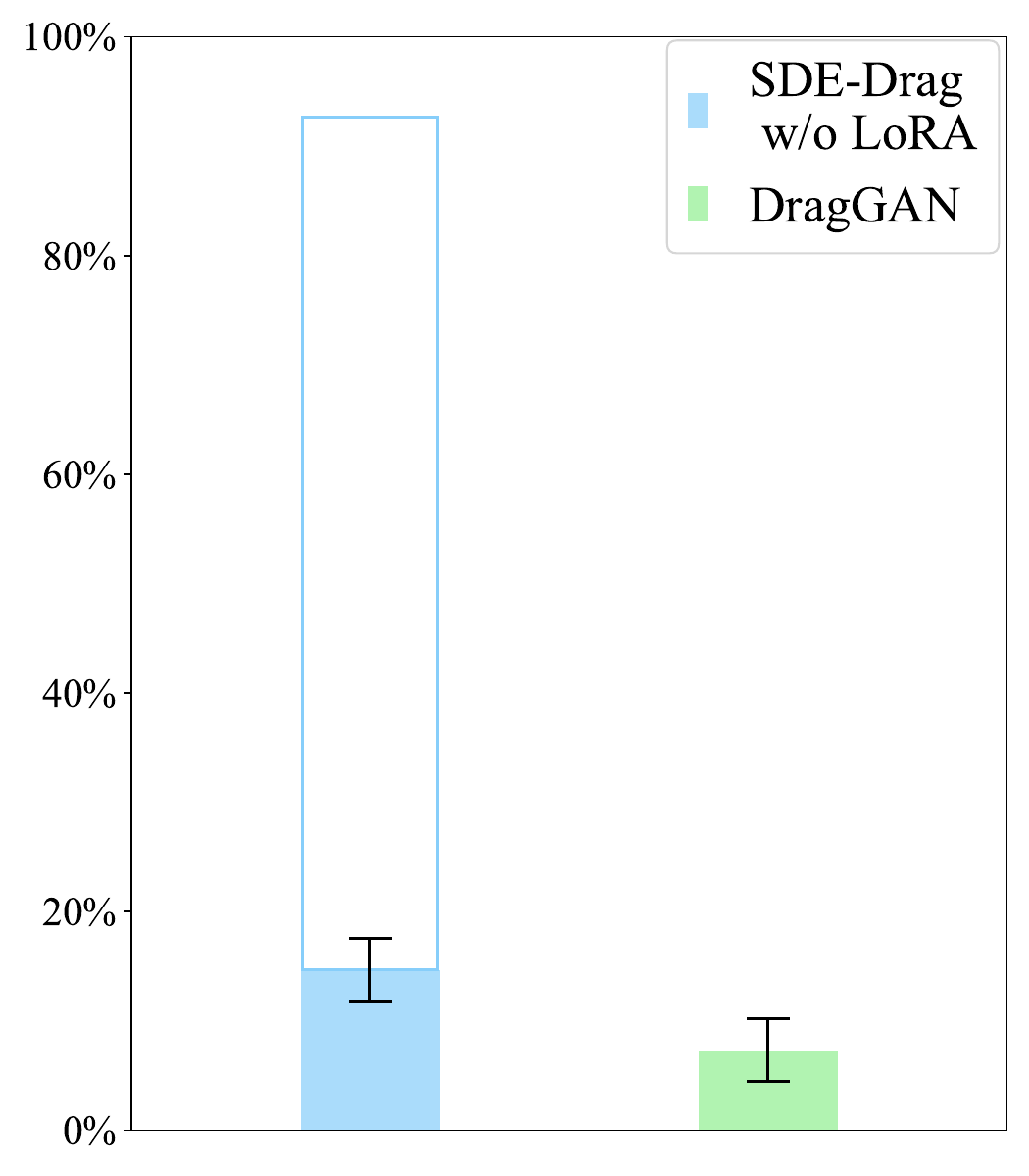}
        \caption{SDE-Drag w/o LoRA vs. DragGAN}
        \label{fig:user_study_sde_draggan_wo_lora}
    \end{subfigure}
    \caption{\textbf{Results of dragging without LoRA.} We present the preference rates (with $95 \%$ confidence intervals) of SDE-Drag over ODE-Drag and DragGAN without LoRA. SDE-Drag significantly outperforms all competitors. The blank box in (b) denotes the ratio of the open-domain images in DragBench that DragGAN cannot edit.} 
    \label{fig:user_study_sde_wo_lora}
\end{figure}

\subsection{Additional Results of Time Efficiency}
\label{app:time_efficiency}

In this section, we discuss the time efficiency. As shown in Table~\ref{tab:time_efficiency}, in all experiments, the SDE counterpart takes nearly the same
time as the direct ODE baseline.

\begin{table}[t]
\begin{center}
 \caption{\textbf{Additional results in per-image time costs.} The SDE counterpart takes nearly the same time as the direct ODE baseline. }
 \label{tab:time_efficiency}
  \begin{tabular}{ccccc}
   \toprule
          & Inpainting & DiffEdit   & DDIB       & Dragging        \\ \midrule
    SDE   & 3.06s      & 4.82s      & 299.20s    & 75.86s           \\ 
    ODE   & 3.11s      & 4.86s      & 300.18s    & 76.62s           \\ 
    Device& NVIDIA A100 & NVIDIA A100 & NVIDIA GeForce RTX 3090 & NVIDIA A100 \\
    \bottomrule
\end{tabular}
\end{center}
\end{table}

\section{Hyperparameter Analysis of SDE-Drag}
\label{appendix:ablation_drag}
We present more sensitivity analysis results of SDE-Drag in this section. Notably, SDE-Drag is not sensitive to most of the hyperparameters. To highlight the roles of individual parameters in SDE-Drag, we showcase images that respond differently to specific hyperparameter adjustments.

Fig.~\ref{fig:ablation_lora} shows the effect of finetuning LoRA. In addition, we display the analysis of hyperparameter $t_0$, $\alpha$, $\beta$ and $m$ in Fig.~\ref{fig:ablation_t}, Fig.~\ref{fig:ablation_alpha}, Fig.~\ref{fig:ablation_beta} and Fig.~\ref{fig:ablation_m}, respectively.

\begin{figure}[ht]
    \centering
    \begin{subfigure}{0.24\textwidth}
        \includegraphics[width=\linewidth]{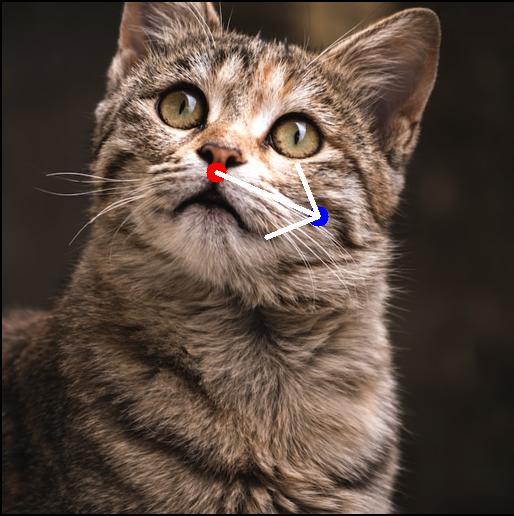}
        \caption{Input image}
        \label{fig:ablation_lora_a}
    \end{subfigure}
    \begin{subfigure}{0.24\textwidth}
        \includegraphics[width=\linewidth]{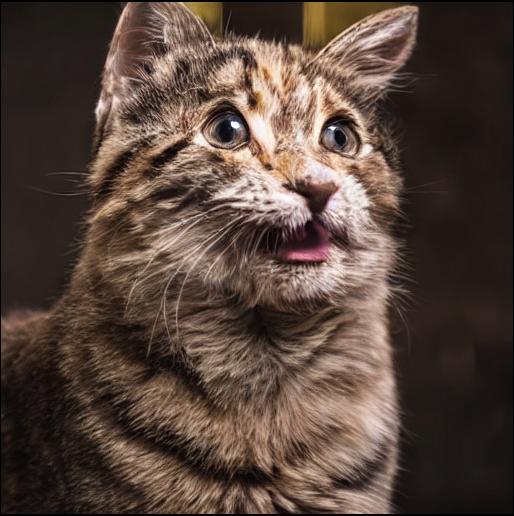}
        \caption{w/o fine-tuning LoRA}
        \label{fig:ablation_lora_b}
    \end{subfigure}
    \begin{subfigure}{0.24\textwidth}
        \includegraphics[width=\linewidth]{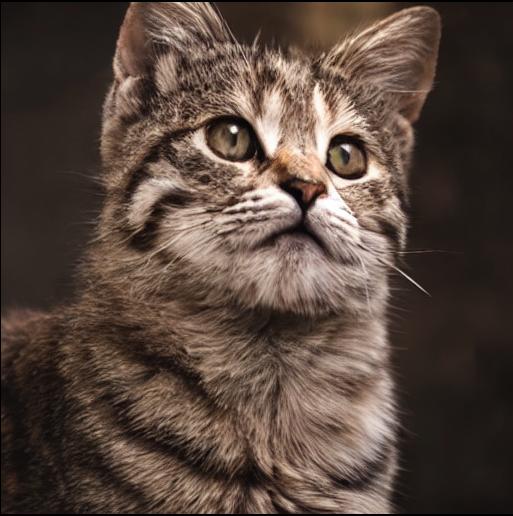}
        \caption{Constat LoRA scale $1$}
         \label{fig:ablation_lora_c}
    \end{subfigure}
    \begin{subfigure}{0.24\textwidth}
        \includegraphics[width=\linewidth]{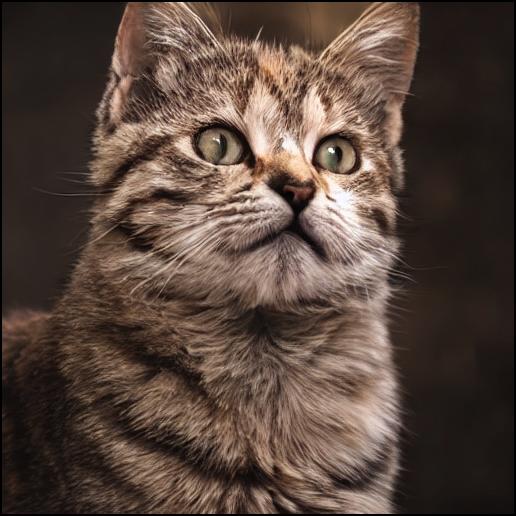}
        \caption{Dynamic LoRA scale}
         \label{fig:ablation_lora_d}
    \end{subfigure}
    \begin{subfigure}{0.24\textwidth}
        \includegraphics[width=\linewidth]{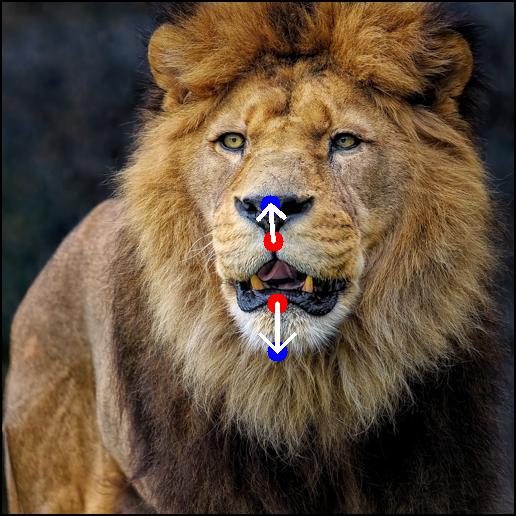}
        \caption{Input image}
         \label{fig:ablation_lora_e}
    \end{subfigure}
    \begin{subfigure}{0.24\textwidth}
        \includegraphics[width=\linewidth]{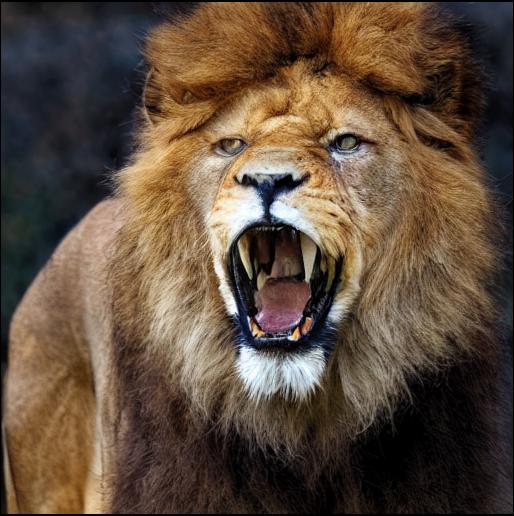}
        \caption{w/o fine-tuning LoRA}
         \label{fig:ablation_lora_f}
    \end{subfigure}
    \begin{subfigure}{0.24\textwidth}
        \includegraphics[width=\linewidth]{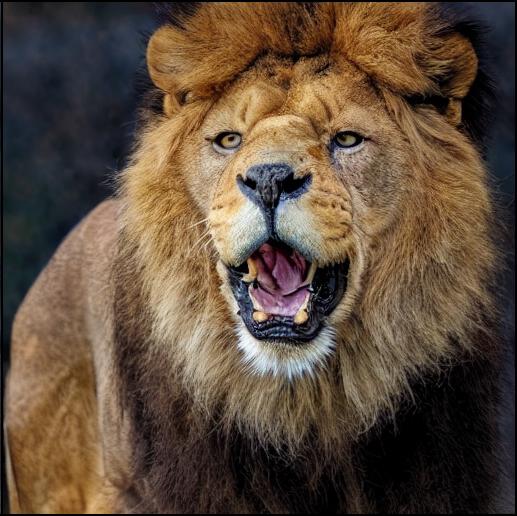}
        \caption{Constat LoRA scale $1$}
         \label{fig:ablation_lora_g}
    \end{subfigure}
    \begin{subfigure}{0.24\textwidth}
        \includegraphics[width=\linewidth]{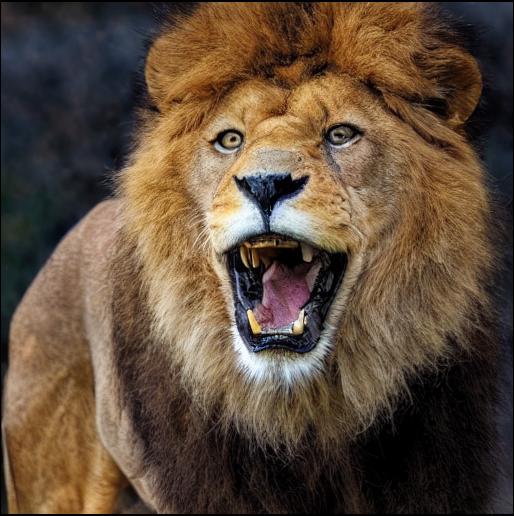}
        \caption{Dynamic LoRA scale}
         \label{fig:ablation_lora_h}
    \end{subfigure}
    \caption{\textbf{Analysis of LoRA fine-tuning}. (a-c) demonstrate that LoRA fine-tuning aids in preserving object consistency during editing, whereas (e-g) indicates a potential risk of overfitting to the input image. (d, h) illustrate how the dynamic LoRA scale strategy, as detailed in Appendix~\ref{sub:drag_implementation}, effectively balances between content retention and overfitting prevention.}
    \label{fig:ablation_lora}
\end{figure}

\begin{figure}[ht]
    \centering
    \begin{subfigure}{0.24\textwidth}
        \includegraphics[width=\linewidth]{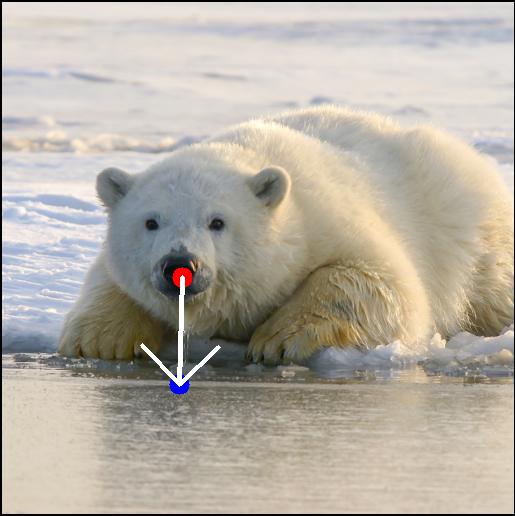}
        \caption{Input image}
    \end{subfigure}
    \begin{subfigure}{0.24\textwidth}
        \includegraphics[width=\linewidth]{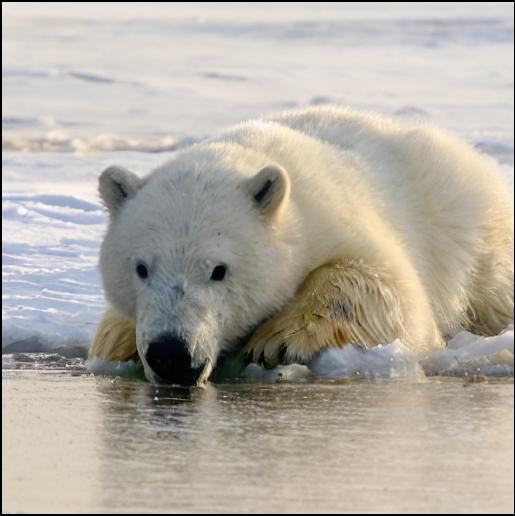}
        \caption{$t_0=0.4T$}
    \end{subfigure}
    \begin{subfigure}{0.24\textwidth}
        \includegraphics[width=\linewidth]{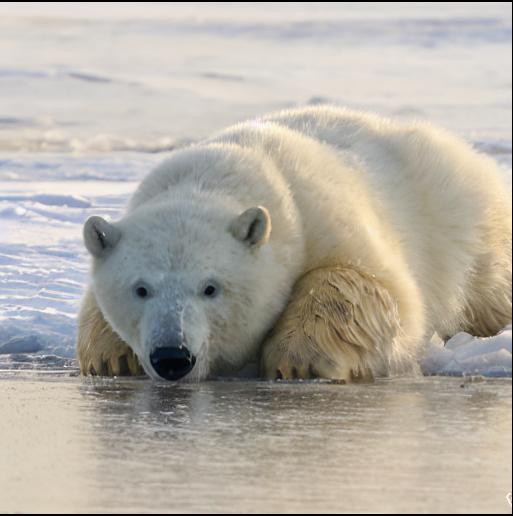}
        \caption{$t_0=0.6T$}
    \end{subfigure}
    \begin{subfigure}{0.24\textwidth}
        \includegraphics[width=\linewidth]{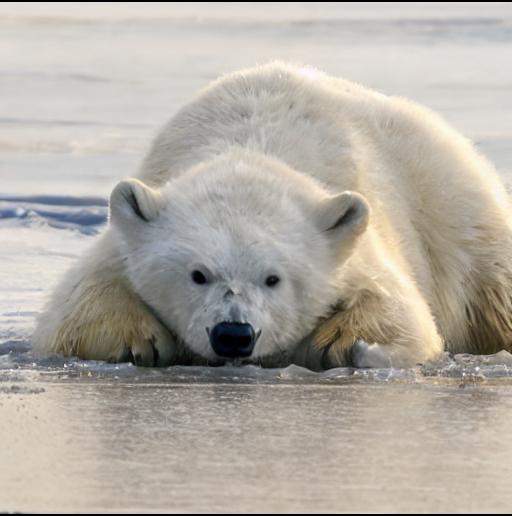}
        \caption{$t_0=0.8T$}
    \end{subfigure}
    \caption{\textbf{Analysis of hyper-parameter $t_0$}. A lower $t_0$ will lead to greater faithfulness to the input image but may compromise image quality. Conversely, a higher $t_0$ enhances image fidelity at the expense of faithfulness. However, in most of the cases, SDE-Drag is robust to $t_0$ between $0.5T$ and $0.7$ T so we set the default $t_0$ as $0.6$ T.}
    \label{fig:ablation_t}
\end{figure}

\begin{figure}[ht]
    \centering
    \begin{subfigure}{0.24\textwidth}
        \includegraphics[width=\linewidth]{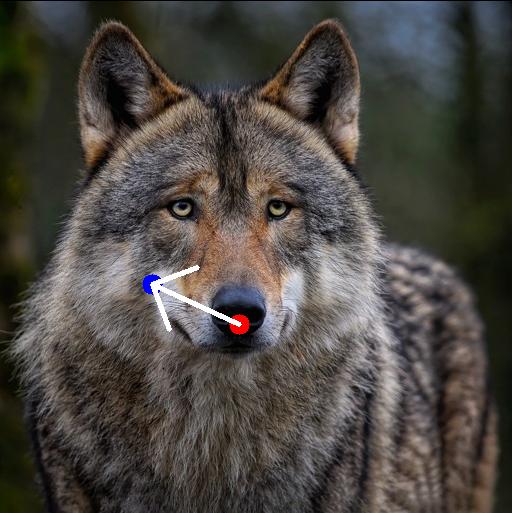}
        \caption{Input image}
    \end{subfigure}
    \begin{subfigure}{0.24\textwidth}
        \includegraphics[width=\linewidth]{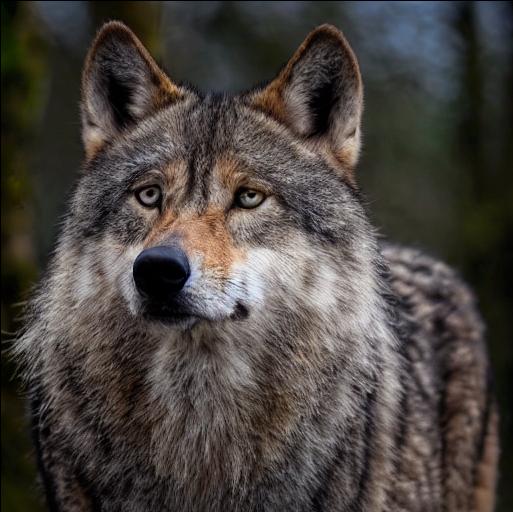}
        \caption{$\alpha=1.$}
    \end{subfigure}
    \begin{subfigure}{0.24\textwidth}
        \includegraphics[width=\linewidth]{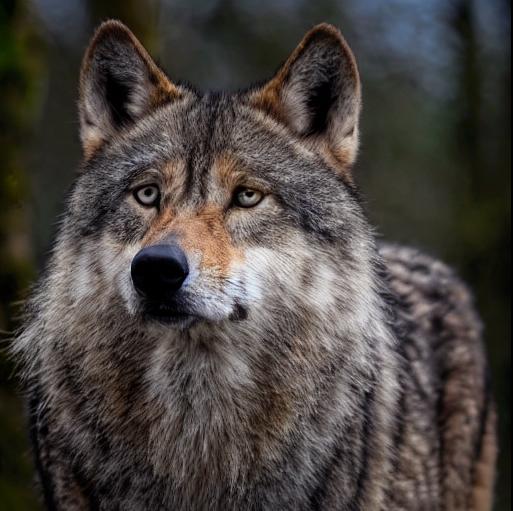}
        \caption{$\alpha=1.1$}
    \end{subfigure}
    \begin{subfigure}{0.24\textwidth}
        \includegraphics[width=\linewidth]{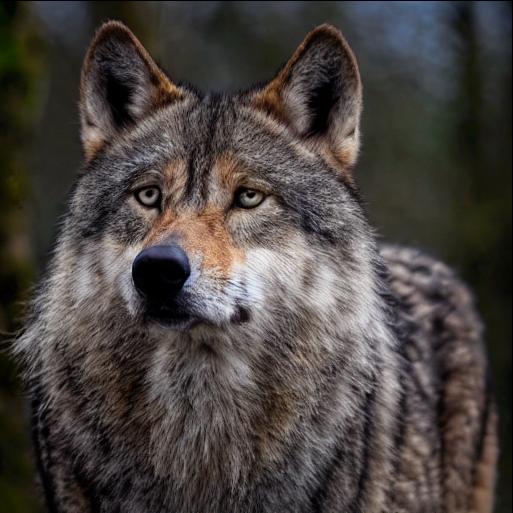}
        \caption{$\alpha=1.2$}
    \end{subfigure}
    \begin{subfigure}{0.24\textwidth}
        \includegraphics[width=\linewidth]{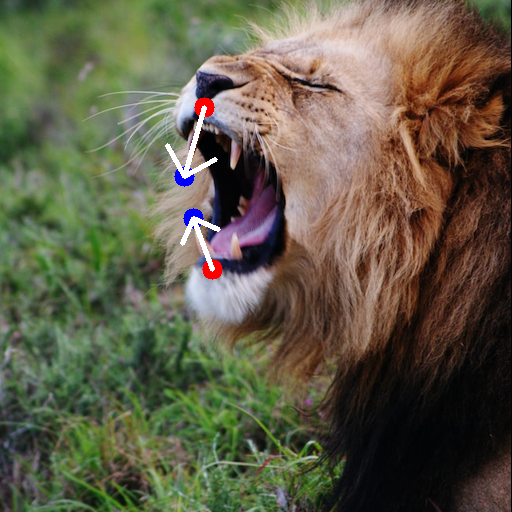}
        \caption{Input image}
    \end{subfigure}
    \begin{subfigure}{0.24\textwidth}
        \includegraphics[width=\linewidth]{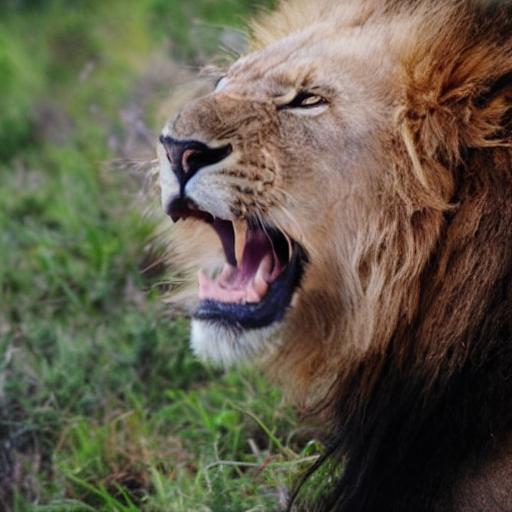}
        \caption{$\alpha=1.$}
    \end{subfigure}
    \begin{subfigure}{0.24\textwidth}
        \includegraphics[width=\linewidth]{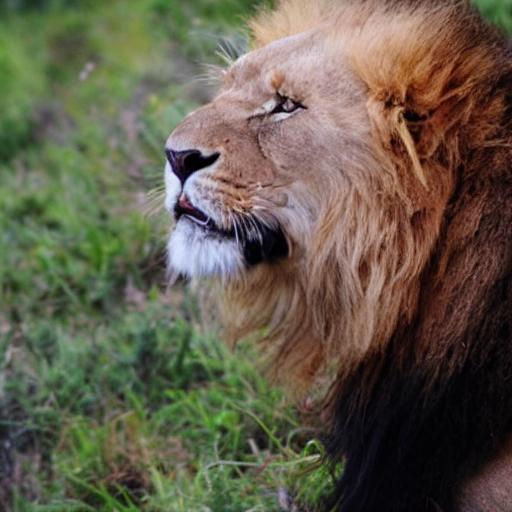}
        \caption{$\alpha=1.1$}
    \end{subfigure}
    \begin{subfigure}{0.24\textwidth}
        \includegraphics[width=\linewidth]{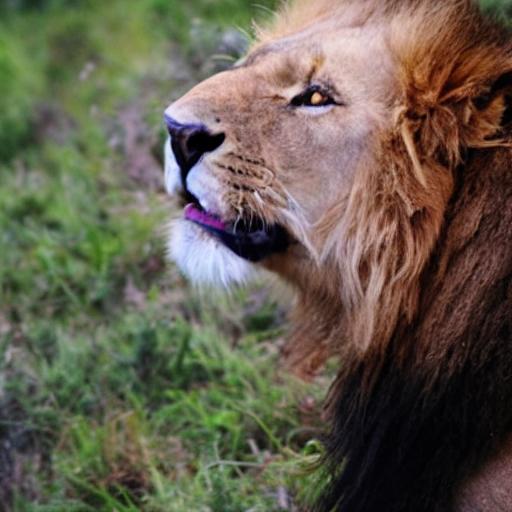}
        \caption{$\alpha=1.2$}
    \end{subfigure}
    \caption{\textbf{Analysis of hyper-parameter $\alpha$}. SDE-Drag is not sensitive to the value of $\alpha$ in most of the cases as shown in the top row. However, we observed that a small number of images achieve better editing results when $\alpha=1.1$ as evidenced in bottom line. Therefore, we set the default $\alpha$ to $1.1$.}
    \label{fig:ablation_alpha}
\end{figure}

\begin{figure}[ht]
    \centering
    \begin{subfigure}{0.3\textwidth}
        \includegraphics[width=\linewidth]{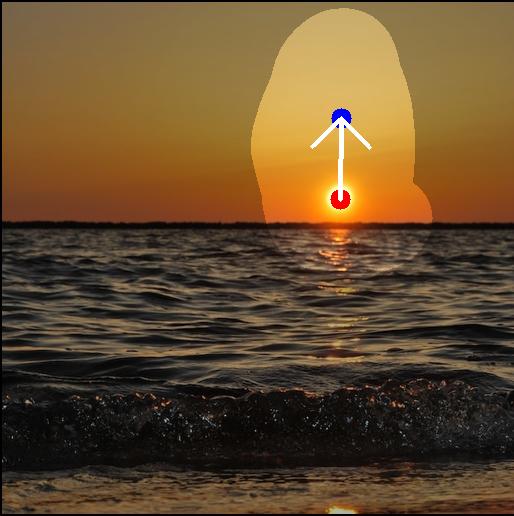}
        \caption{Input image}
    \end{subfigure}
    \begin{subfigure}{0.3\textwidth}
        \includegraphics[width=\linewidth]{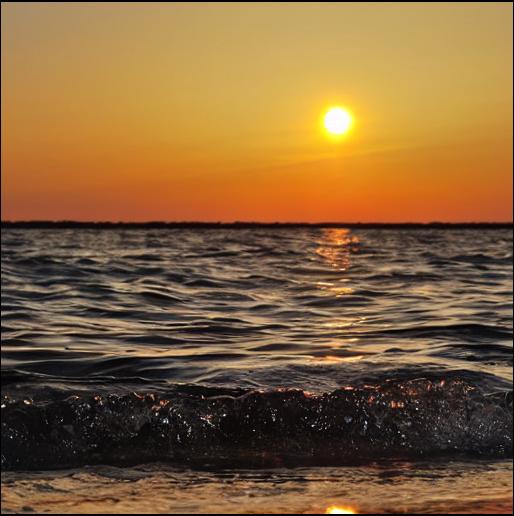}
        \caption{$\beta=0.3$}
    \end{subfigure}
    \begin{subfigure}{0.3\textwidth}
        \includegraphics[width=\linewidth]{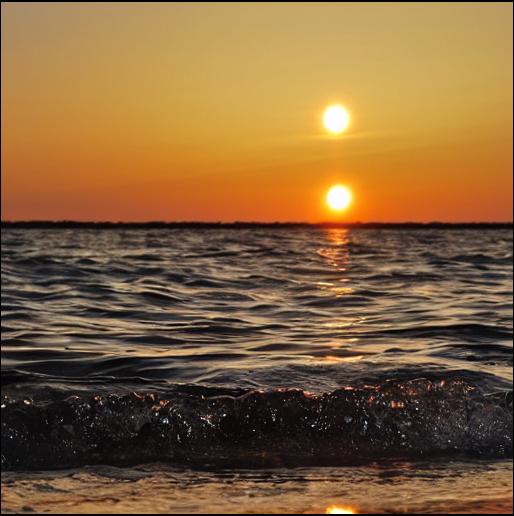}
        \caption{$\beta=1$}
    \end{subfigure}
    \caption{\textbf{Analysis of hyper-parameter $\beta$}. For clarity, we fix the hyper-parameter $m$ to $1$ in this figure. When $\beta=1$ the object at the source point is retained, which can be used for ``copying'' objects. A $\beta$ value between $0.1$ and $0.5$ is suitable for "dragging" and produces nearly identical editing outcomes. Consequently, we set the default $\beta$ as $0.3$. }
    \label{fig:ablation_beta}
\end{figure}

\begin{figure}[ht]
    \centering
    \begin{subfigure}{0.3\textwidth}
        \includegraphics[width=\linewidth]{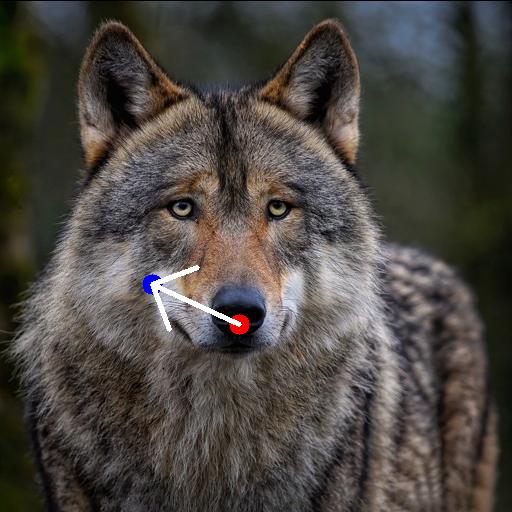}
        \caption{Input image}
    \end{subfigure}
    \begin{subfigure}{0.3\textwidth}
        \includegraphics[width=\linewidth]{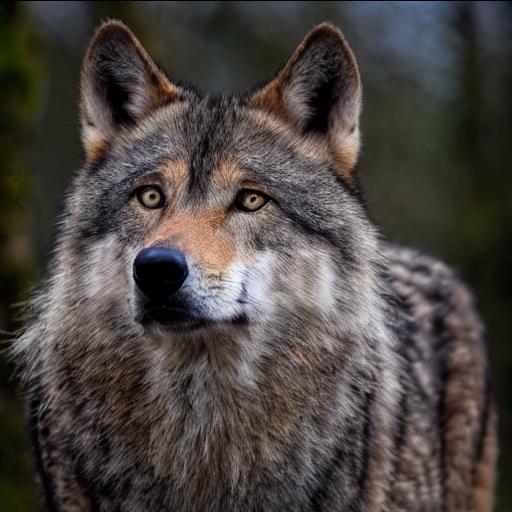}
        \caption{$m = \lceil ( s - t ) / 2 \rceil$}
    \end{subfigure}
    \begin{subfigure}{0.3\textwidth}
        \includegraphics[width=\linewidth]{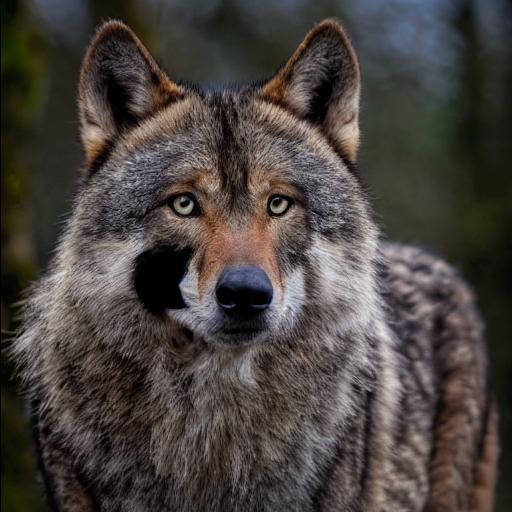}
        \caption{$m=1$}
    \end{subfigure}
    \caption{\textbf{Analysis of hyper-parameter $m$}. $m = \lceil \|a_s - a_t\| / 2 \rceil$ represent that the distance in each dargging operation is $2$ pixels. We found that $1$ pixel and $4$ pixels per operation also work well. However, it is challenging when $m=1$. }
    \label{fig:ablation_m}
\end{figure}

\section{Visualization Results}

\subsection{Details of DDIB}

As presented by Fig.~\ref{fig:ddib_case}, DDIB-SDE shows a higher image fidelity in image-to-image translation compared with DDIB-ODE.

\begin{figure}[t!]
    \centering
    \begin{subfigure}{0.3\textwidth}
        \includegraphics[width=\linewidth]{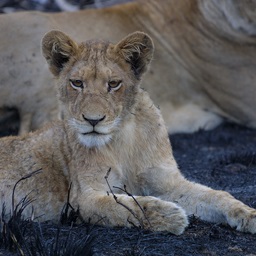}
        \caption{Source image}
    \end{subfigure}
    \begin{subfigure}{0.3\textwidth}
        \includegraphics[width=\linewidth]{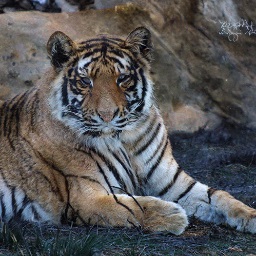}
        \caption{DDIB-SDE}
    \end{subfigure}
    \begin{subfigure}{0.3\textwidth}
        \includegraphics[width=\linewidth]{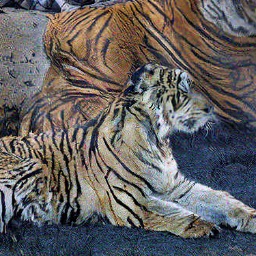}
        \caption{DDIB-ODE}
    \end{subfigure}
    \begin{subfigure}{0.3\textwidth}
        \includegraphics[width=\linewidth]{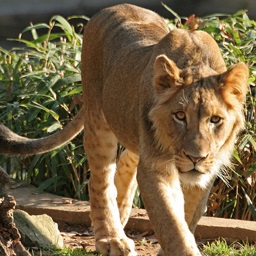}
        \caption{Source image}
    \end{subfigure}
    \begin{subfigure}{0.3\textwidth}
        \includegraphics[width=\linewidth]{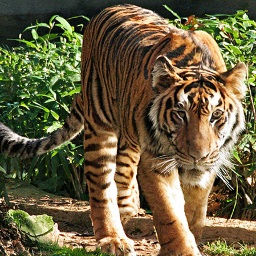}
        \caption{DDIB-SDE}
    \end{subfigure}
    \begin{subfigure}{0.3\textwidth}
        \includegraphics[width=\linewidth]{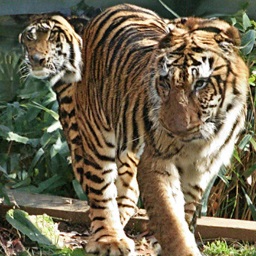}
        \caption{DDIB-ODE}
    \end{subfigure}
    \caption{
     \textbf{The visualization results of DDIB.} The source class is ``\texttt{lion}'', the target class is ``\texttt{tiger}''. DDIB-SDE achieves significantly better image fidelity compared to DDIB-ODE.}
    \label{fig:ddib_case}
\end{figure}

\subsection{Details of DiffEdit}
\label{app:diffedit}
In Table~\ref{tb:two doamin}, we present the quantitative metrics from the DiffEdit experiment, corresponding to Fig.~\ref{fig:diffedit} in Section~\ref{sub:exp_i2i}. Additionally, Fig.~\ref{fig:diffedit_case} offers a visual comparison between DiffEdit-SDE and DiffEdit-ODE, demonstrating that DiffEdit-SDE ensures superior image quality and better image-text alignment.

\begin{table}[t!]
\caption{\textbf{Quantitative results of DiffEdit}. For all evaluation metrics, DiffEdit-SDE consistently surpasses DiffEdit-ODE across different $t_0$.} 
\vspace{.2cm}
\label{tb:two doamin}
\centering
\renewcommand\arraystretch{1}
\begin{tabular}{lccc} 
\toprule
Model   & FID $\downarrow$   & Clip-Score  $\uparrow$  & LPIPS    $\downarrow$ \\
\midrule
\multicolumn{4}{c}{$t_0 = 0.3$}   \\ 
\midrule
DiffEdit-SDE  & \textbf{7.20} & \textbf{0.219} & \textbf{0.171} \\
DiffEdit-ODE  & 7.67 & 0.215 & 0.173  \\
\midrule
\multicolumn{4}{c}{$t_0 = 0.4$}   \\ 
\midrule
DiffEdit-SDE  & \textbf{7.29} & \textbf{0.224} & \textbf{0.181}  \\
DiffEdit-ODE  & 8.24 & 0.215 & 0.184 \\
\midrule
\multicolumn{4}{c}{$t_0 = 0.5$}    \\ 
\midrule
DiffEdit-SDE  & \textbf{7.57} & \textbf{0.228} & \textbf{0.195} \\
DiffEdit-ODE  & 9.42 & 0.215 & 0.199 \\
\midrule
\multicolumn{4}{c}{$t_0 = 0.6$}    \\ 
\midrule
DiffEdit-SDE   & \textbf{7.97} & \textbf{0.230} & \textbf{0.209} \\
DiffEdit-ODE   & 11.27 & 0.214 & 0.214 \\                                                       
\midrule
\multicolumn{4}{c}{$t_0 = 0.7$}                                                                          \\ 
\midrule
DiffEdit-SDE   & \textbf{8.65} & \textbf{0.232} & \textbf{0.219} \\
DiffEdit-ODE   & 13.51 & 0.212 & 0.228 \\
\midrule
\multicolumn{4}{c}{$t_0 = 0.8$}   \\ 
\midrule
DiffEdit-SDE   & \textbf{9.25} & \textbf{0.232} & \textbf{0.227} \\
DiffEdit-ODE   & 15.91 & 0.210 & 0.240 \\
\bottomrule
\end{tabular}
\end{table}

\begin{figure}[t!]
    \centering
    \begin{subfigure}{0.24\textwidth}
        \includegraphics[width=\linewidth]{imgs/diffedit/pick/source_58.pdf}
        \caption{Source image}
    \end{subfigure}
    \begin{subfigure}{0.24\textwidth}
        \includegraphics[width=\linewidth]{imgs/diffedit/pick/mask_image.pdf}
        \caption{Obtained mask image}
    \end{subfigure}
    \begin{subfigure}{0.24\textwidth}
        \includegraphics[width=\linewidth]{imgs/diffedit/pick/translated_58_sde.pdf}
        \caption{DiffEdit-SDE}
    \end{subfigure}
    \begin{subfigure}{0.24\textwidth}
        \includegraphics[width=\linewidth]{imgs/diffedit/pick/translated_58_ode.pdf}
        \caption{DiffEdit-ODE}
    \end{subfigure}
    \begin{subfigure}{0.24\textwidth}
        \includegraphics[width=\linewidth]{imgs/diffedit/pick/source_9005.pdf}
        \caption{Source image}
    \end{subfigure}
    \begin{subfigure}{0.24\textwidth}
        \includegraphics[width=\linewidth]{imgs/diffedit/pick/mask_image_9005.pdf}
        \caption{Obtained mask image}
    \end{subfigure}
    \begin{subfigure}{0.24\textwidth}
        \includegraphics[width=\linewidth]{imgs/diffedit/pick/translated_sde_9005.pdf}
        \caption{DiffEdit-SDE}
    \end{subfigure}
    \begin{subfigure}{0.24\textwidth}
        \includegraphics[width=\linewidth]{imgs/diffedit/pick/translated_ode_9005.pdf}
        \caption{DiffEdit-ODE}
    \end{subfigure}
    \caption{
     \textbf{Visualization results of Diffedit.} For the top row, the source prompt is ``\texttt{Young female sitting on a bench near a small stagnant lake}''  and the target prompt is ``\texttt{"A man sitting on a wooden bench near a lake.}''
     For the bottom row, the source prompt is ``\texttt{A man sitting down at a table using a computer.}'', and the target prompt is ``\texttt{An older man sitting at a desk looking at a laptop.}''. DiffEdit-SDE achieves significantly better image fidelity and image-text alignment compared to DiffEdit-ODE.}
    \label{fig:diffedit_case}
\end{figure}

\subsection{Drag Compare}
\label{app:drag compare}
In this section, we provide visual comparisons of SDE-Drag with DragDiffusion and DragGAN in Fig.~\ref{fig:drag_compare_sde_dragdiff} and Fig.~\ref{fig:drag_compare_sde_draggan}, respectively.

\begin{figure}[t!]
    \centering
    \begin{subfigure}{0.32\textwidth}
        \includegraphics[width=\linewidth]{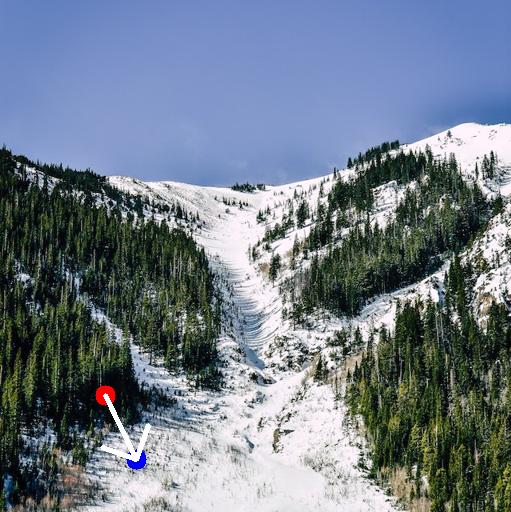}
    \end{subfigure}
    \begin{subfigure}{0.32\textwidth}
        \includegraphics[width=\linewidth]{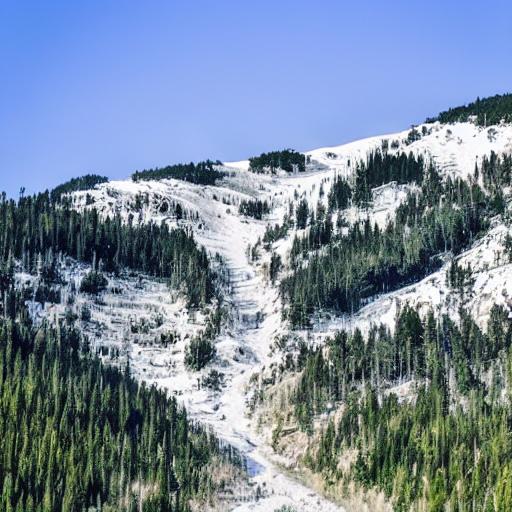}
    \end{subfigure}
    \begin{subfigure}{0.32\textwidth}
        \includegraphics[width=\linewidth]{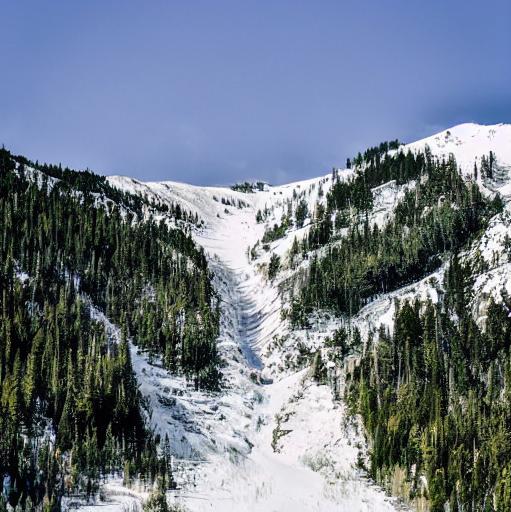}
    \end{subfigure}
    \begin{subfigure}{0.32\textwidth}
        \includegraphics[width=\linewidth]{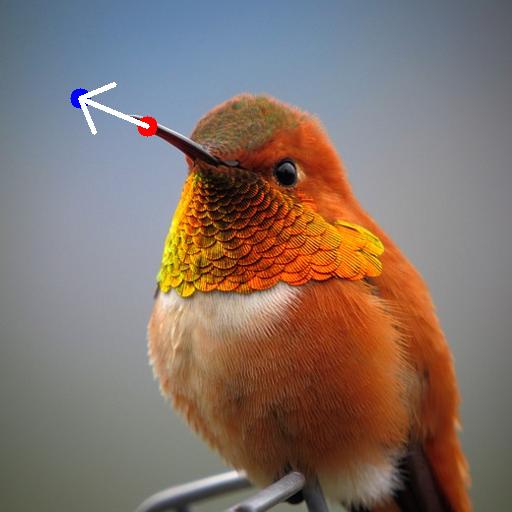}
    \end{subfigure}
    \begin{subfigure}{0.32\textwidth}
        \includegraphics[width=\linewidth]{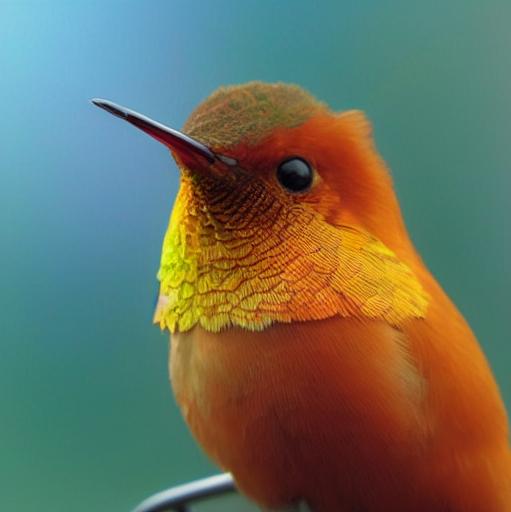}
    \end{subfigure}
    \begin{subfigure}{0.32\textwidth}
        \includegraphics[width=\linewidth]{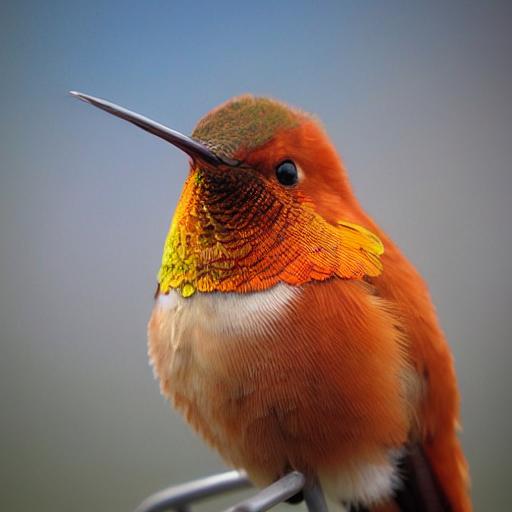}
    \end{subfigure}
    \begin{subfigure}{0.32\textwidth}
        \includegraphics[width=\linewidth]{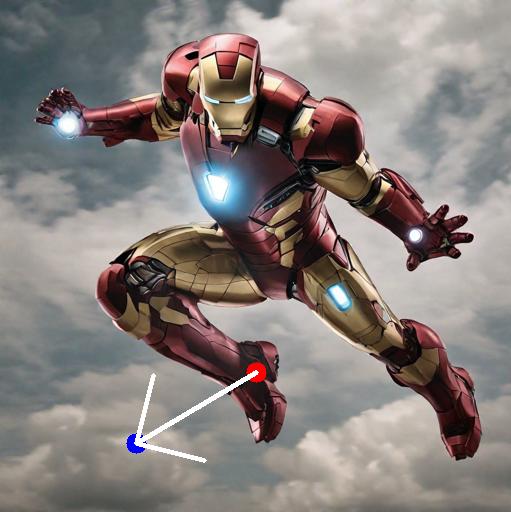}
    \end{subfigure}
    \begin{subfigure}{0.32\textwidth}
        \includegraphics[width=\linewidth]{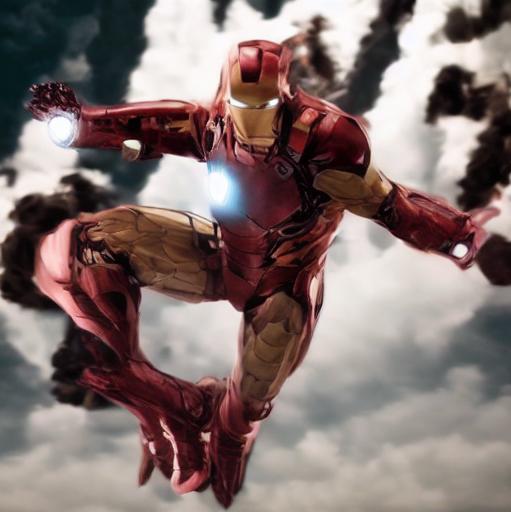}
    \end{subfigure}
    \begin{subfigure}{0.32\textwidth}
        \includegraphics[width=\linewidth]{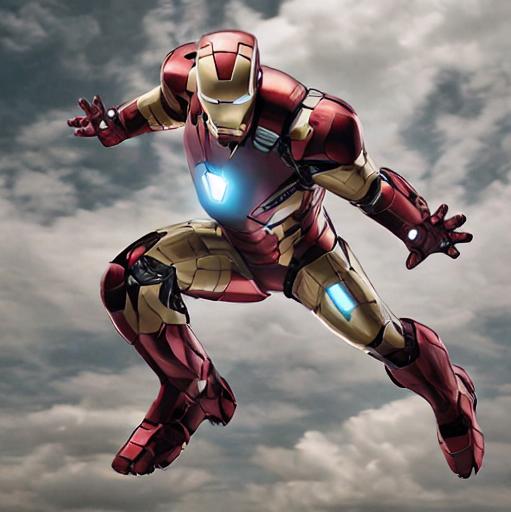}
    \end{subfigure}
    \begin{subfigure}{0.32\textwidth}
        \includegraphics[width=\linewidth]{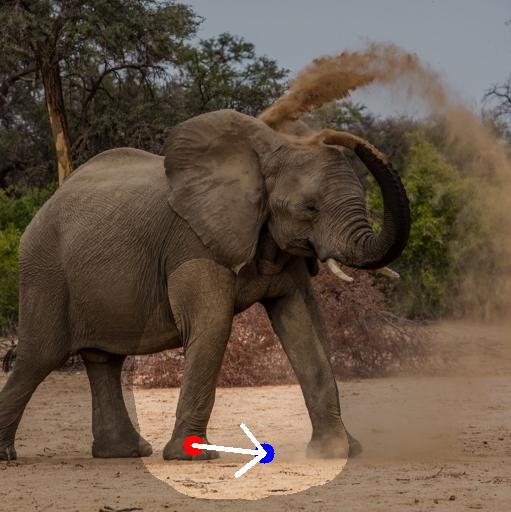}
    \end{subfigure}
    \begin{subfigure}{0.32\textwidth}
        \includegraphics[width=\linewidth]{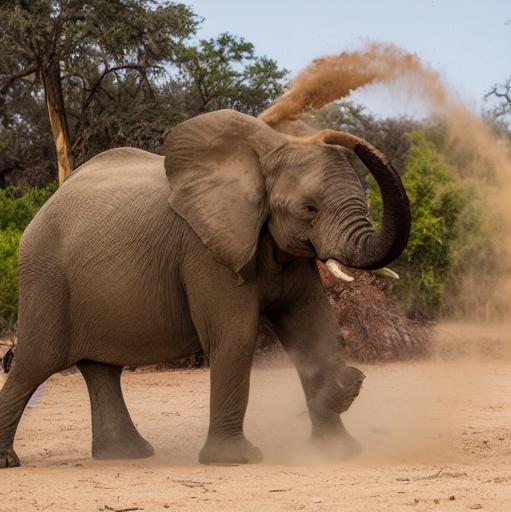}
    \end{subfigure}
    \begin{subfigure}{0.32\textwidth}
        \includegraphics[width=\linewidth]{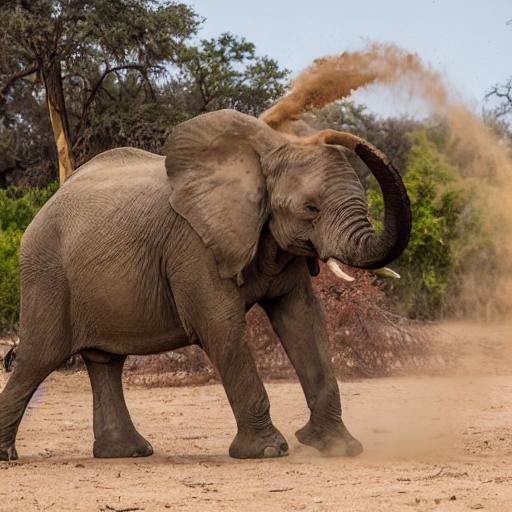}
    \end{subfigure}
    \caption{\textbf{Visual comparison between DragDiffusion and SDE-Drag.} The left column displays the original input images, while the center and right columns present edits from DragDiffusion and SDE-Drag, respectively.}
    \label{fig:drag_compare_sde_dragdiff}
\end{figure}

\begin{figure}[ht]
    \centering
    \begin{subfigure}{0.32\textwidth}
        \includegraphics[width=\linewidth]{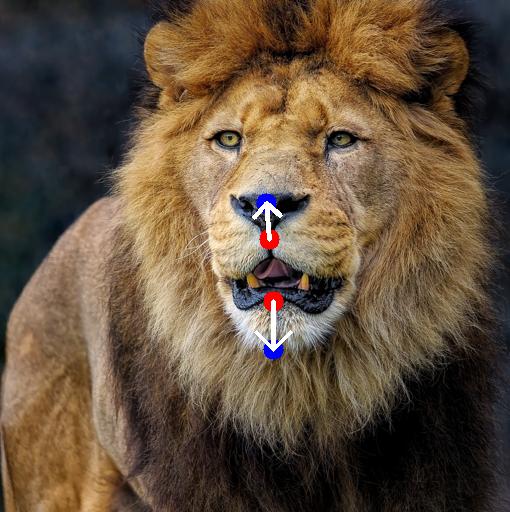}
    \end{subfigure}
    \begin{subfigure}{0.32\textwidth}
        \includegraphics[width=\linewidth]{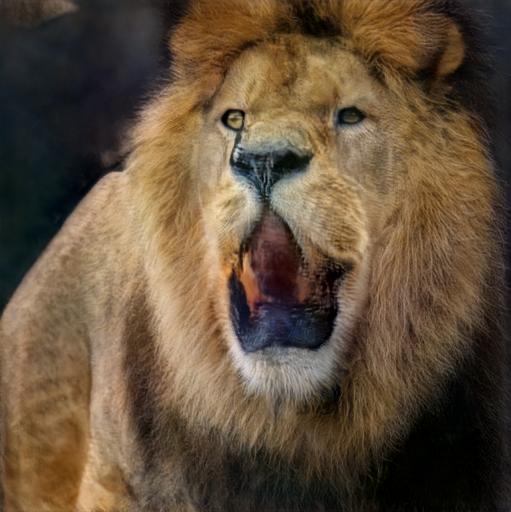}
    \end{subfigure}
    \begin{subfigure}{0.32\textwidth}
        \includegraphics[width=\linewidth]{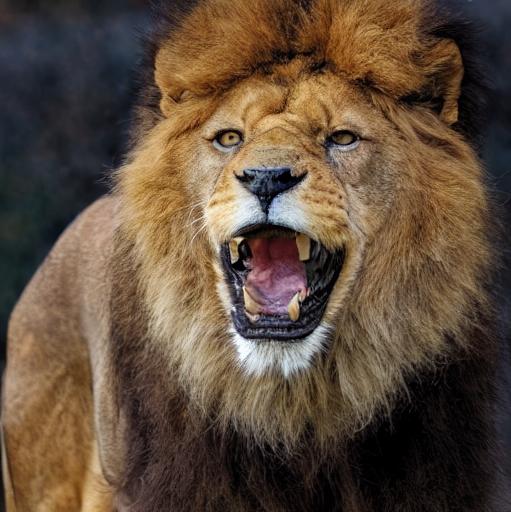}
    \end{subfigure}
    \begin{subfigure}{0.32\textwidth}
        \includegraphics[width=\linewidth]{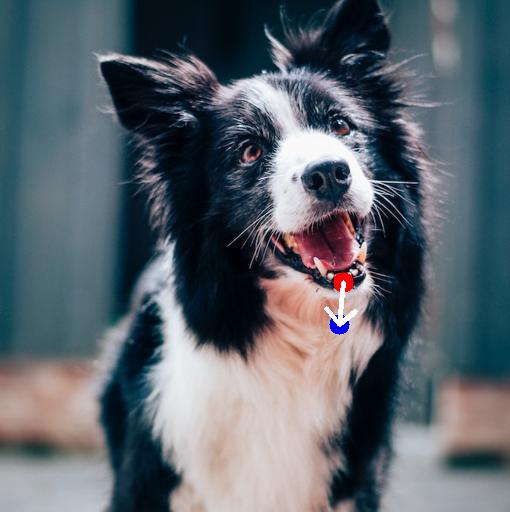}
    \end{subfigure}
    \begin{subfigure}{0.32\textwidth}
        \includegraphics[width=\linewidth]{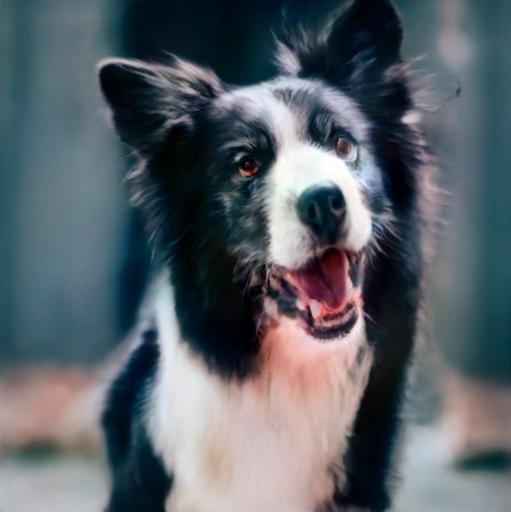}
    \end{subfigure}
    \begin{subfigure}{0.32\textwidth}
        \includegraphics[width=\linewidth]{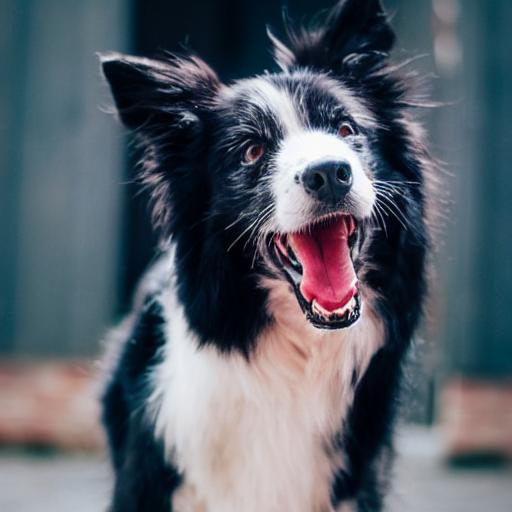}
    \end{subfigure}
    \caption{\textbf{Visual comparison between DragGAN and SDE-Drag.} The left column displays the original input images, while the center and right columns present edits from DragGAN and SDE-Drag, respectively.}
    \label{fig:drag_compare_sde_draggan}
\end{figure}

\subsection{Drag Fun}
\label{app:drag_fun}
In this section, we showcase the editing outcomes of SDE-Drag. The edits of real images are illustrated in Fig.~\ref{fig:drag_fun_real}. Edits of images generated by Stable Diffusion~\citep{rombach2022high, podell2023sdxl} are shown in Fig.~\ref{fig:drag_fun_sd}, and edits of DALL·E 3 generated images can be seen in Fig.~\ref{fig:drag_fun_dall}. Moreover, we highlight the methodology of dragging multiple points sequentially in Fig.~\ref{fig:drag_fun_continuous}.

\begin{figure}[t!]
    \centering
    \begin{subfigure}{0.22\textwidth}
        \includegraphics[width=\linewidth]{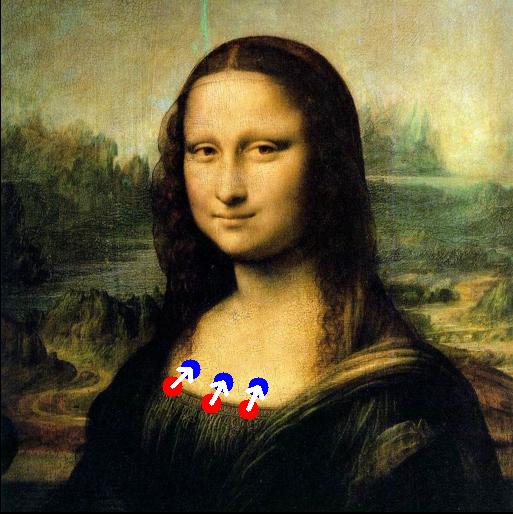}
    \end{subfigure}
    \begin{subfigure}{0.22\textwidth}
        \includegraphics[width=\linewidth]{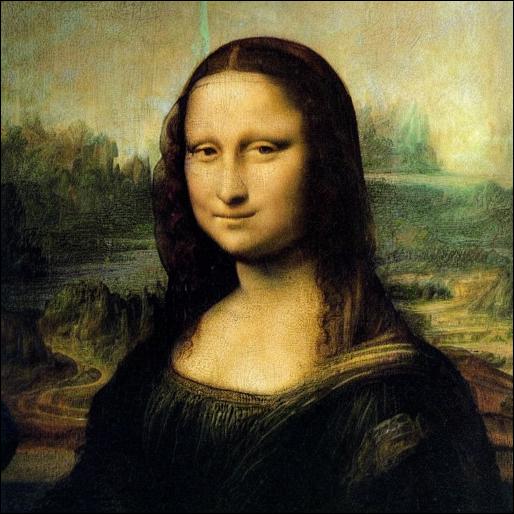}
    \end{subfigure}
    \hspace{0.3cm}
    \begin{subfigure}{0.22\textwidth}
        \includegraphics[width=\linewidth]{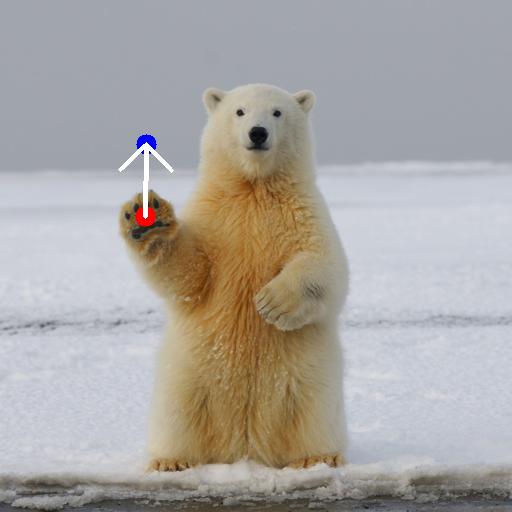}
    \end{subfigure}
    \begin{subfigure}{0.22\textwidth}
        \includegraphics[width=\linewidth]{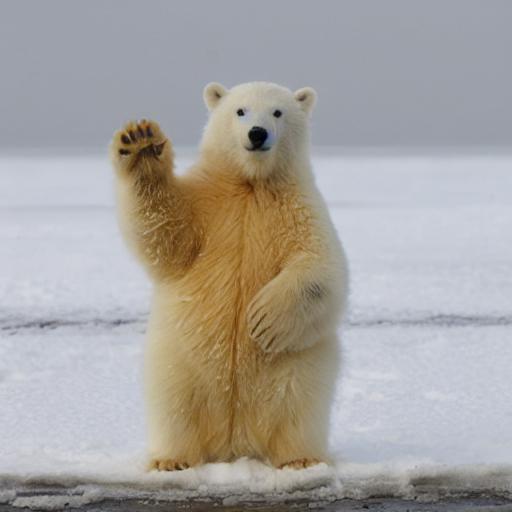}
    \end{subfigure}
    
    \begin{subfigure}{0.22\textwidth}
        \includegraphics[width=\linewidth]{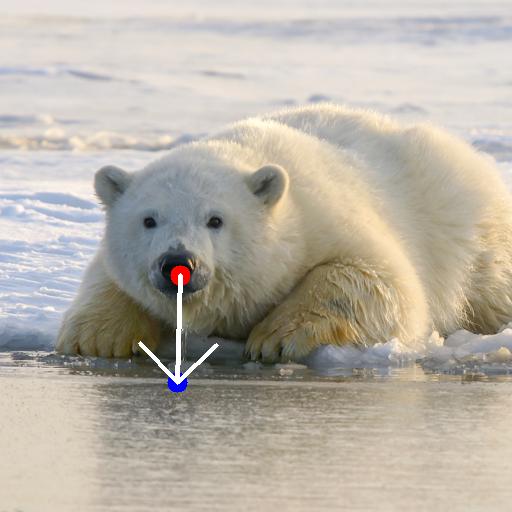}
    \end{subfigure}
    \begin{subfigure}{0.22\textwidth}
        \includegraphics[width=\linewidth]{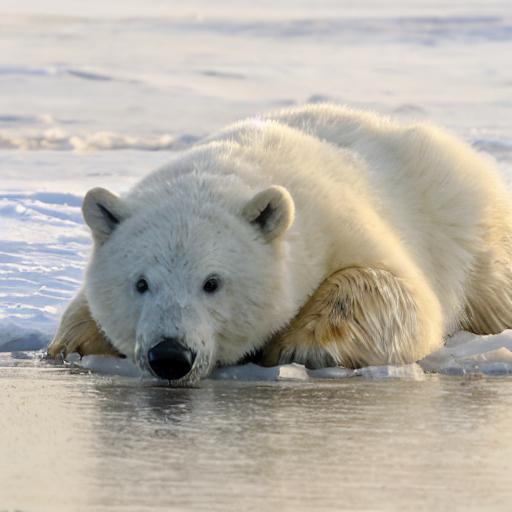}
    \end{subfigure}
    \hspace{0.3cm}
    \begin{subfigure}{0.22\textwidth}
        \includegraphics[width=\linewidth]{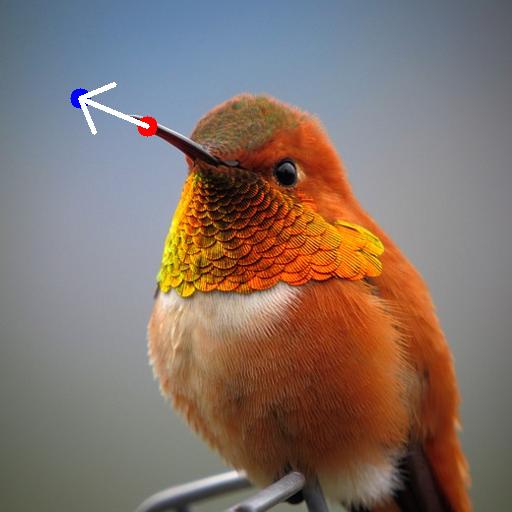}
    \end{subfigure}
    \begin{subfigure}{0.22\textwidth}
        \includegraphics[width=\linewidth]{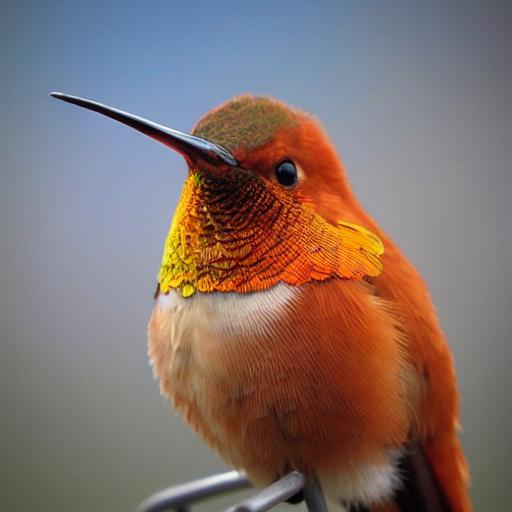}
    \end{subfigure}

    \begin{subfigure}{0.22\textwidth}
        \includegraphics[width=\linewidth]{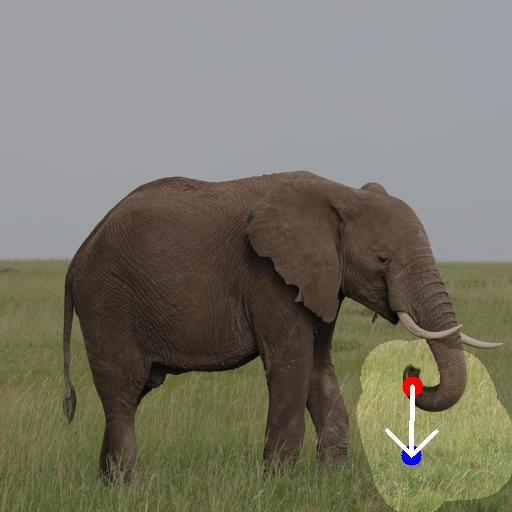}
    \end{subfigure}
    \begin{subfigure}{0.22\textwidth}
        \includegraphics[width=\linewidth]{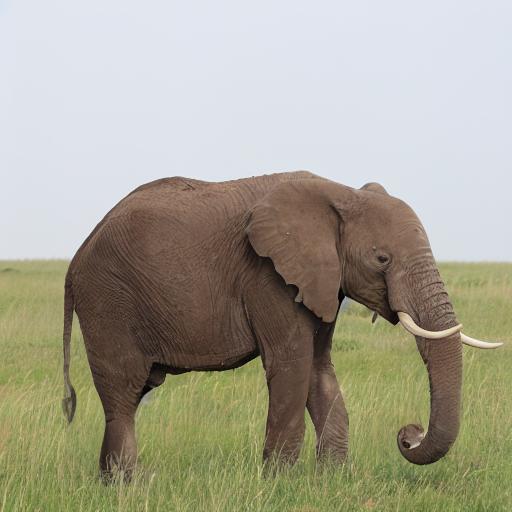}
    \end{subfigure}
    \hspace{0.3cm}
    \begin{subfigure}{0.22\textwidth}
        \includegraphics[width=\linewidth]{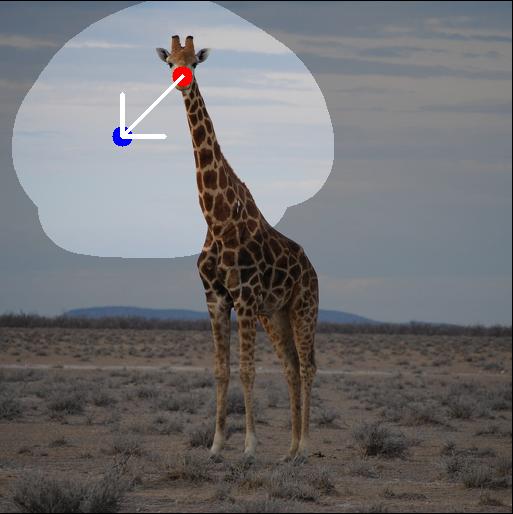}
    \end{subfigure}
    \begin{subfigure}{0.22\textwidth}
        \includegraphics[width=\linewidth]{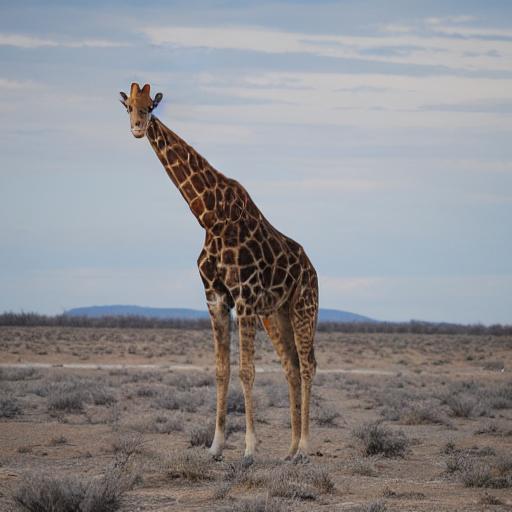}
    \end{subfigure}

    \begin{subfigure}{0.22\textwidth}
        \includegraphics[width=\linewidth]{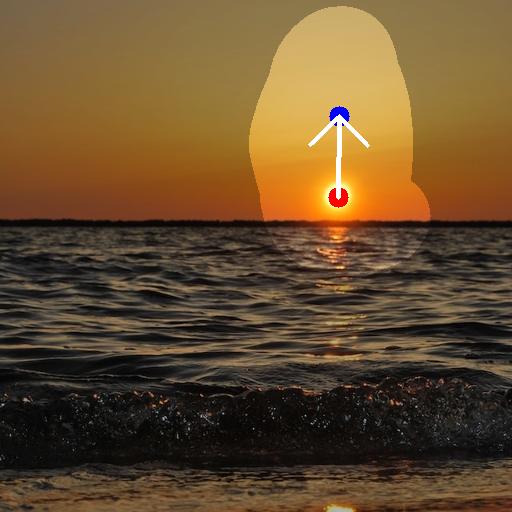}
    \end{subfigure}
    \begin{subfigure}{0.22\textwidth}
        \includegraphics[width=\linewidth]{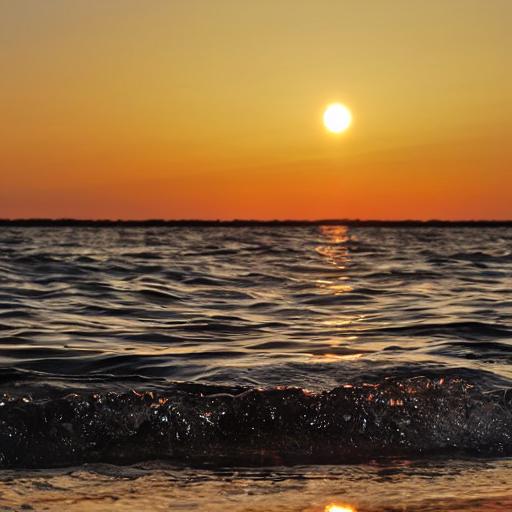}
    \end{subfigure}
    \hspace{0.3cm}
    \begin{subfigure}{0.22\textwidth}
        \includegraphics[width=\linewidth]{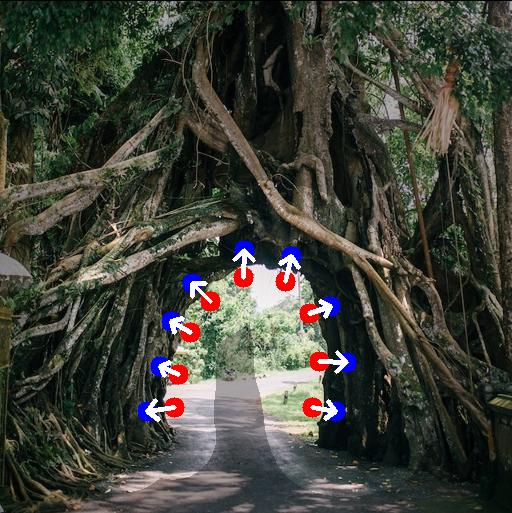}
    \end{subfigure}
    \begin{subfigure}{0.22\textwidth}
        \includegraphics[width=\linewidth]{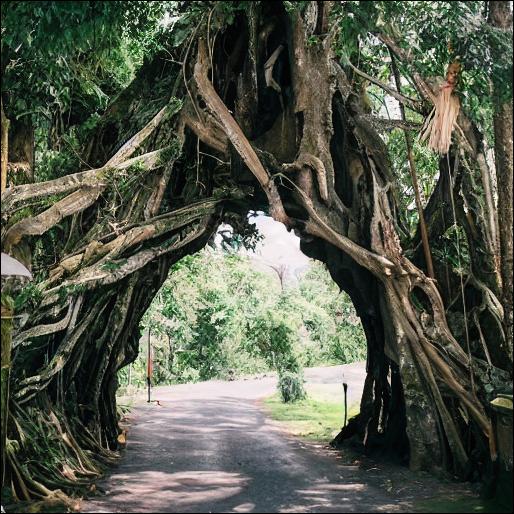}
    \end{subfigure}

    \begin{subfigure}{0.22\textwidth}
        \includegraphics[width=\linewidth]{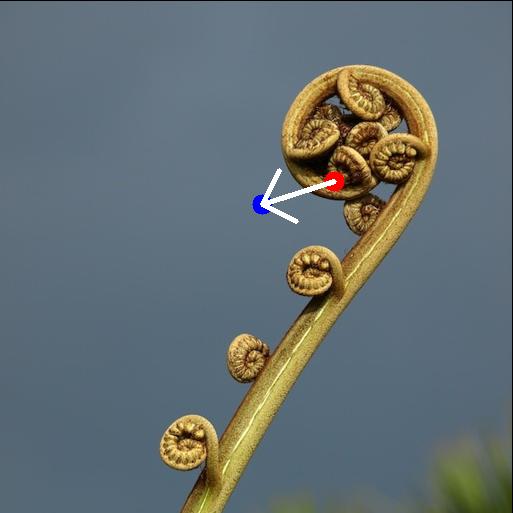}
    \end{subfigure}
    \begin{subfigure}{0.22\textwidth}
        \includegraphics[width=\linewidth]{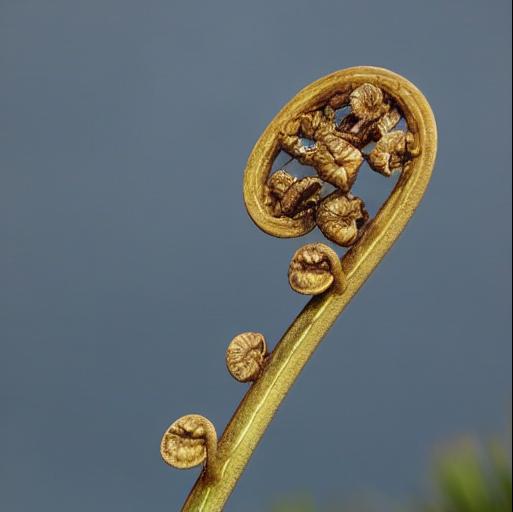}
    \end{subfigure}
    \hspace{0.3cm}
    \begin{subfigure}{0.22\textwidth}
        \includegraphics[width=\linewidth]{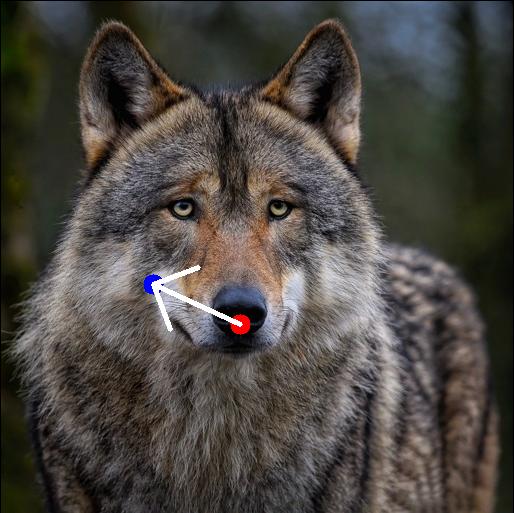}
    \end{subfigure}
    \begin{subfigure}{0.22\textwidth}
        \includegraphics[width=\linewidth]{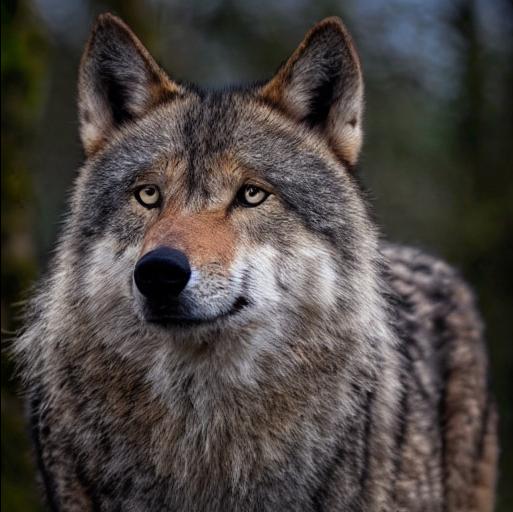}
    \end{subfigure}
    \caption{\textbf{Editing results of SDE-Drag in real images.}}
    \label{fig:drag_fun_real}
\end{figure}

\begin{figure}[t!]
    \centering
    \begin{subfigure}{0.22\textwidth}
        \includegraphics[width=\linewidth]{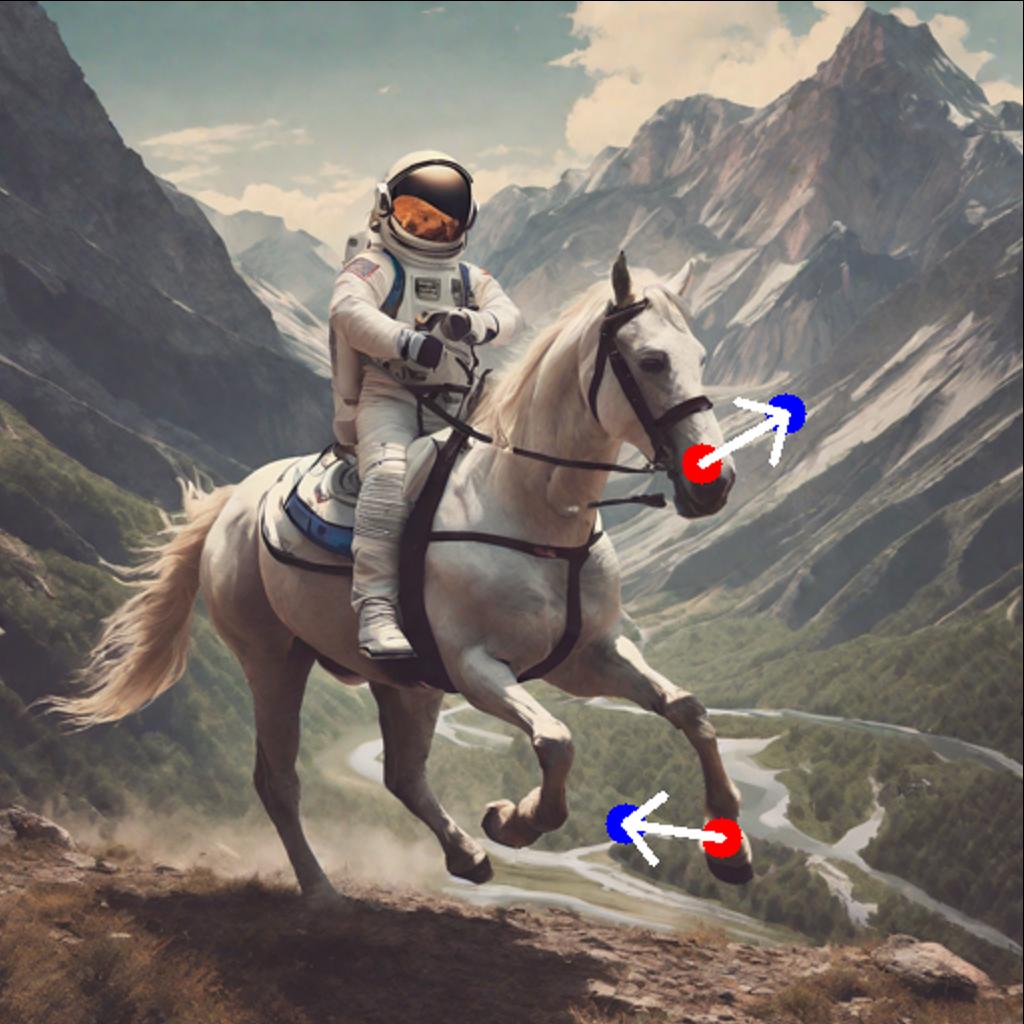}
    \end{subfigure}
    \begin{subfigure}{0.22\textwidth}
        \includegraphics[width=\linewidth]{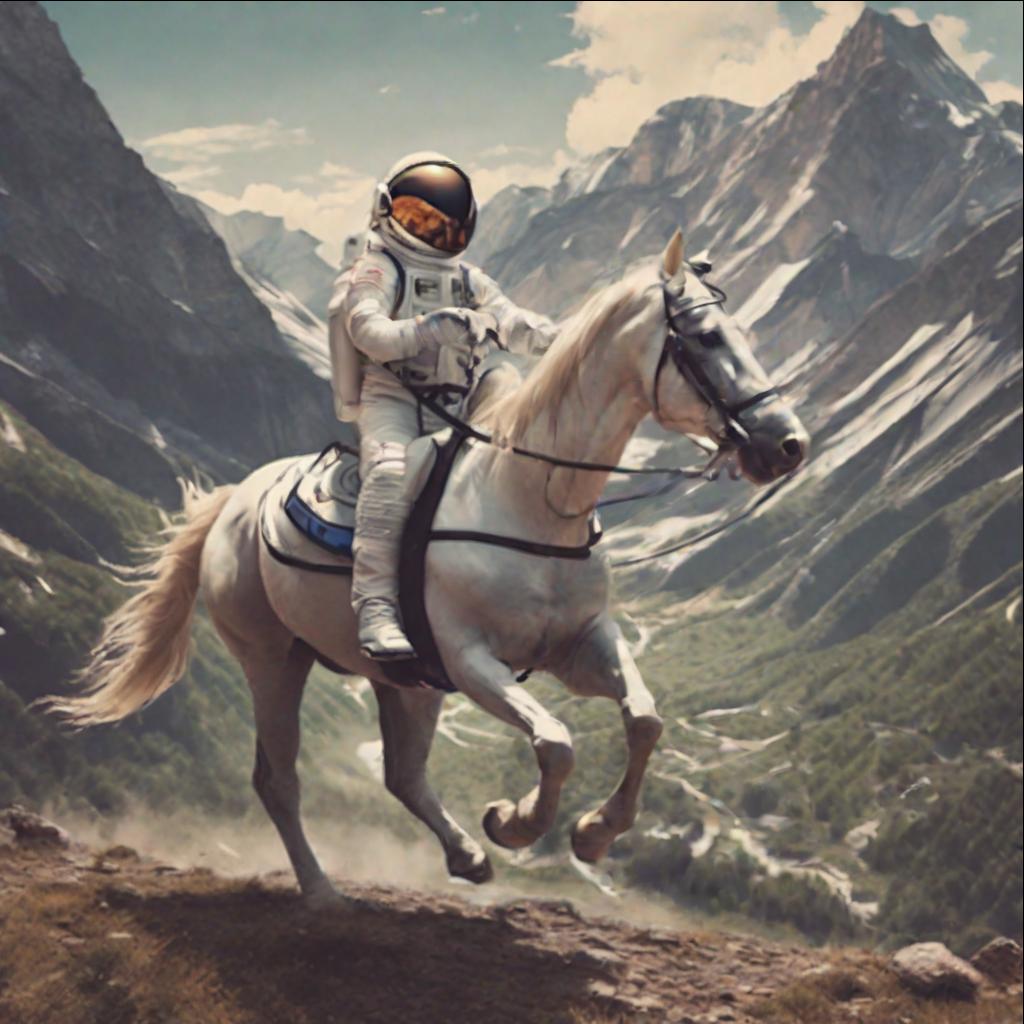}
    \end{subfigure}
    \hspace{0.3cm}
    \begin{subfigure}{0.22\textwidth}
        \includegraphics[width=\linewidth]{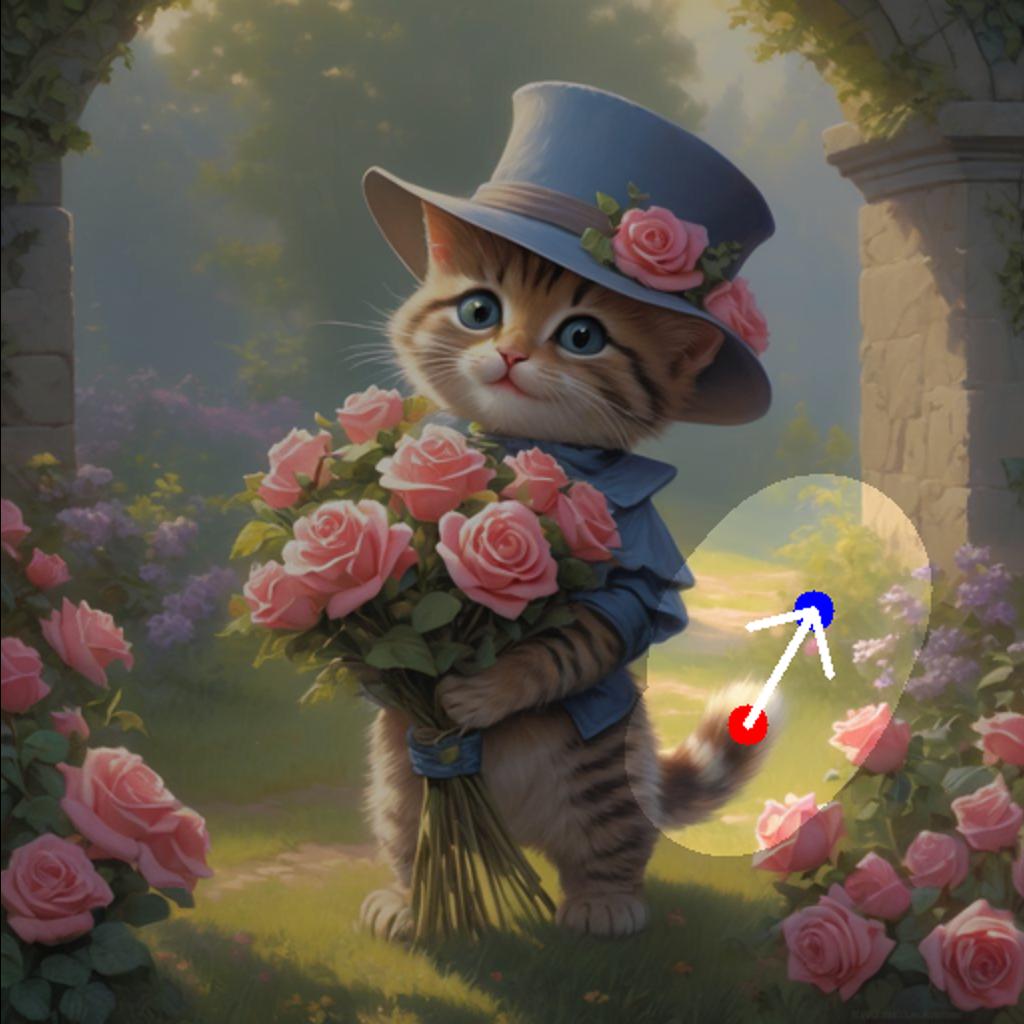}
    \end{subfigure}
    \begin{subfigure}{0.22\textwidth}
        \includegraphics[width=\linewidth]{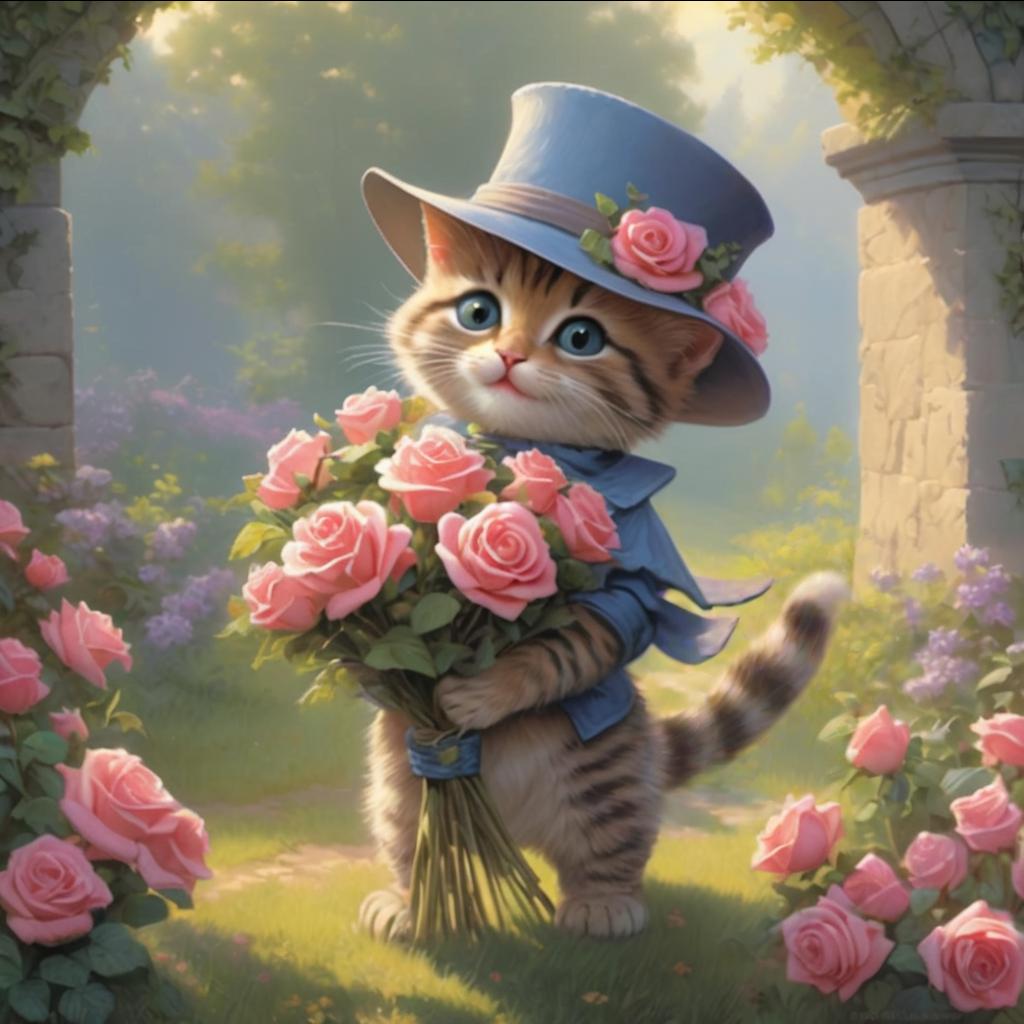}
    \end{subfigure}

    \begin{subfigure}{0.22\textwidth}
        \includegraphics[width=\linewidth]{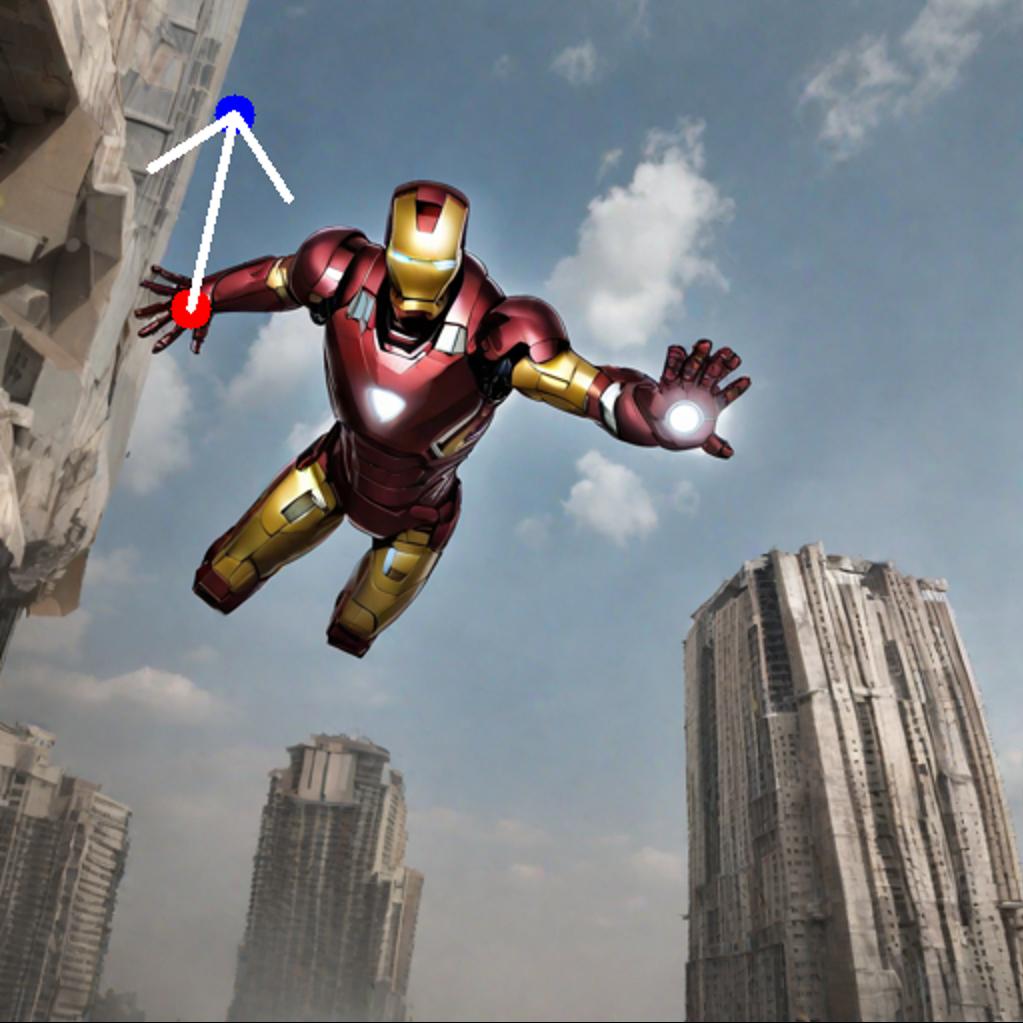}
    \end{subfigure}
    \begin{subfigure}{0.22\textwidth}
        \includegraphics[width=\linewidth]{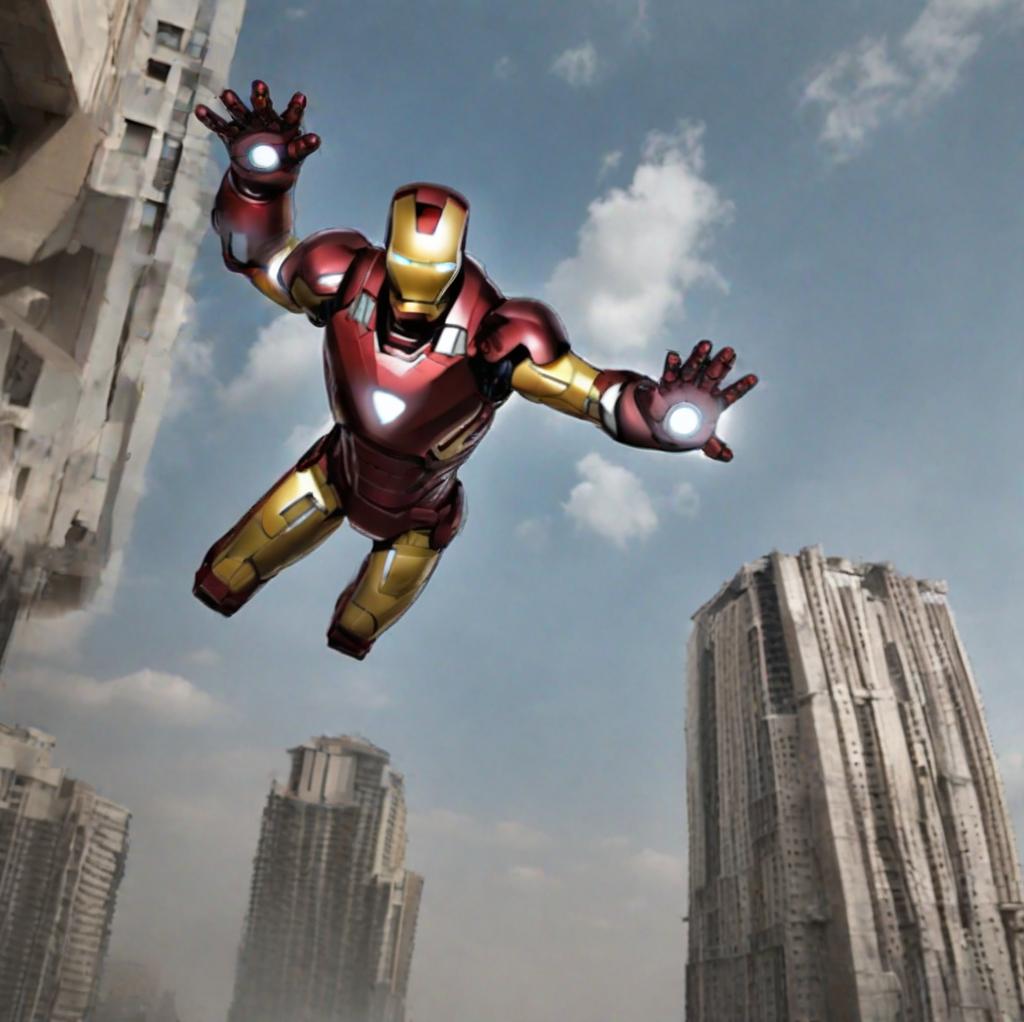}
    \end{subfigure}
    \hspace{0.3cm}
    \begin{subfigure}{0.22\textwidth}
        \includegraphics[width=\linewidth]{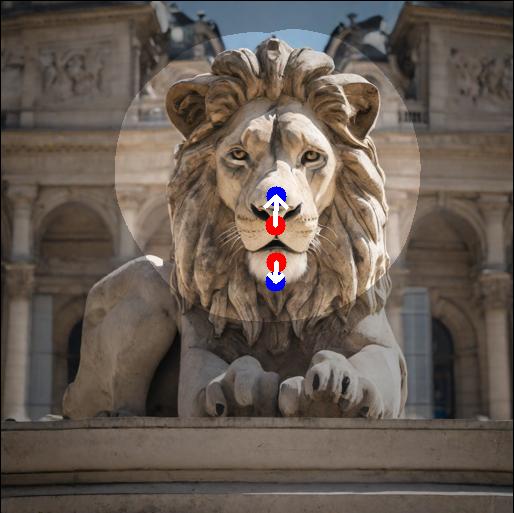}
    \end{subfigure}
    \begin{subfigure}{0.22\textwidth}
        \includegraphics[width=\linewidth]{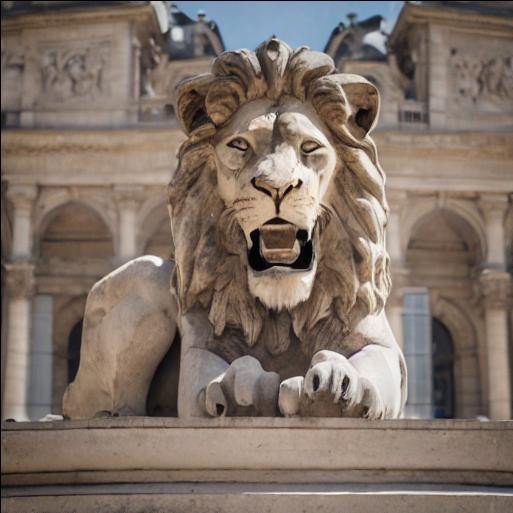}
    \end{subfigure}

    \begin{subfigure}{0.22\textwidth}
        \includegraphics[width=\linewidth]{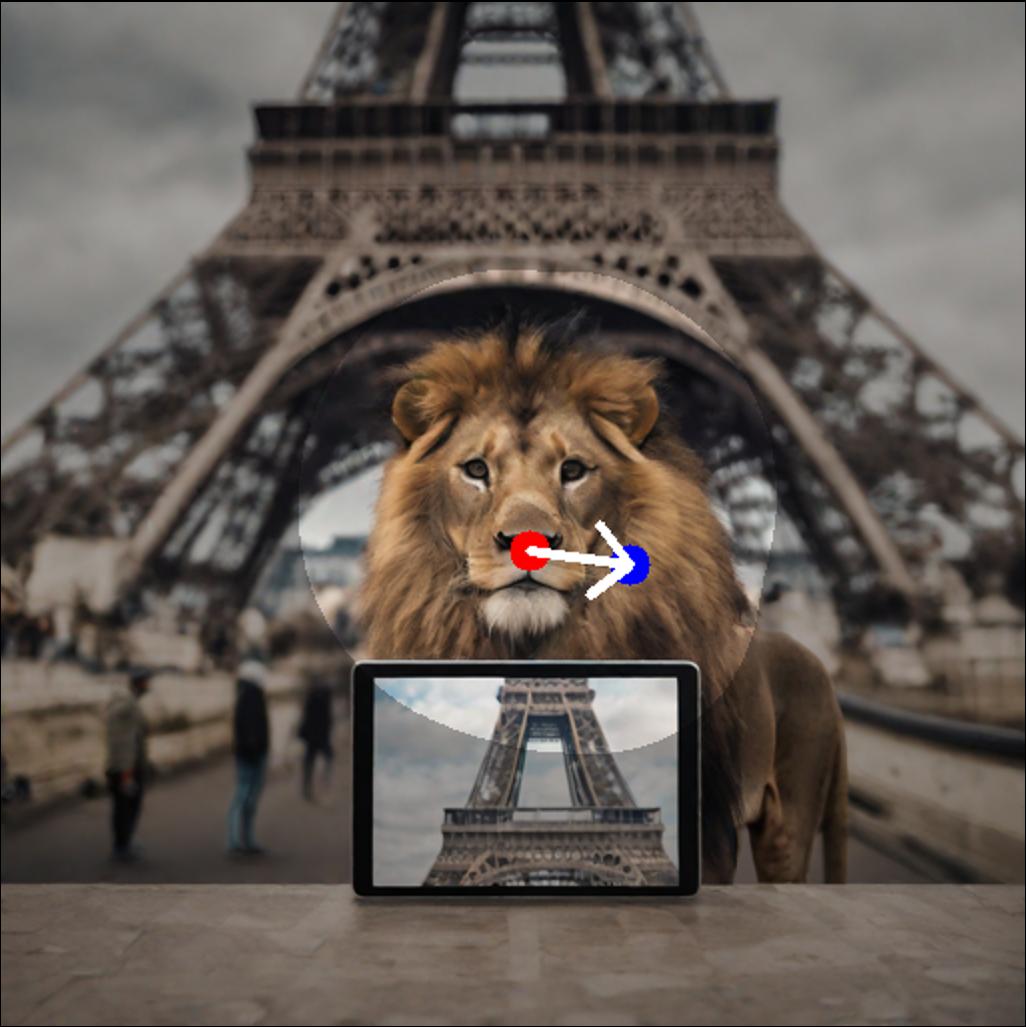}
    \end{subfigure}
    \begin{subfigure}{0.22\textwidth}
        \includegraphics[width=\linewidth]{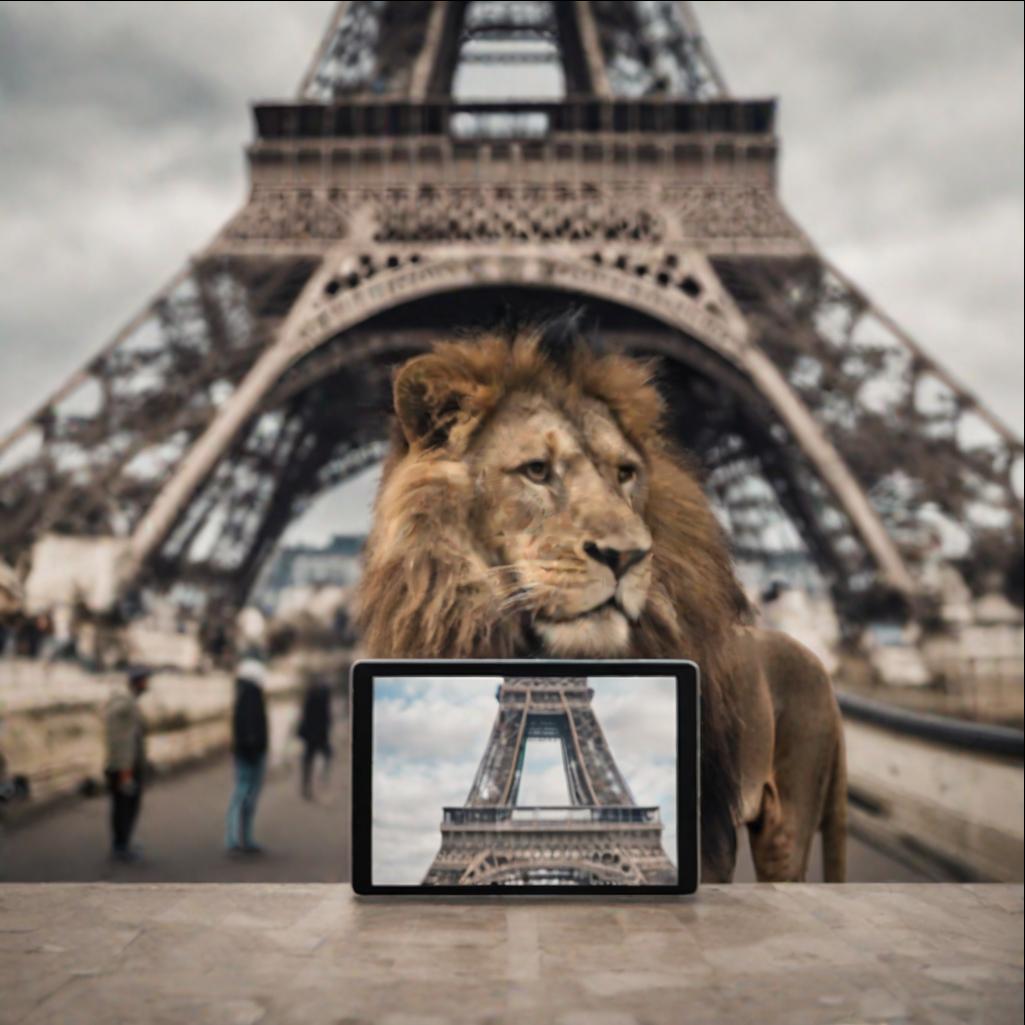}
    \end{subfigure}
    \hspace{0.3cm}
    \begin{subfigure}{0.22\textwidth}
        \includegraphics[width=\linewidth]{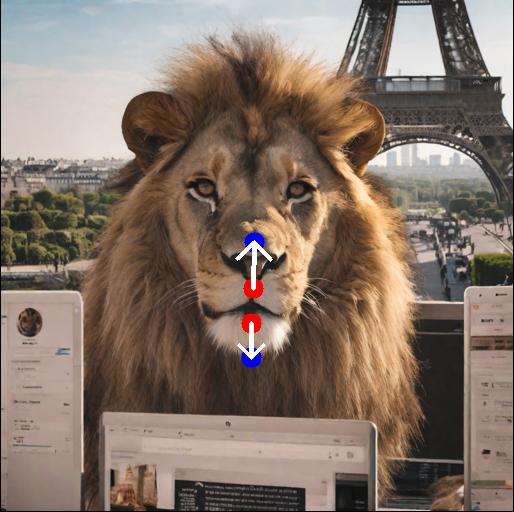}
    \end{subfigure}
    \begin{subfigure}{0.22\textwidth}
        \includegraphics[width=\linewidth]{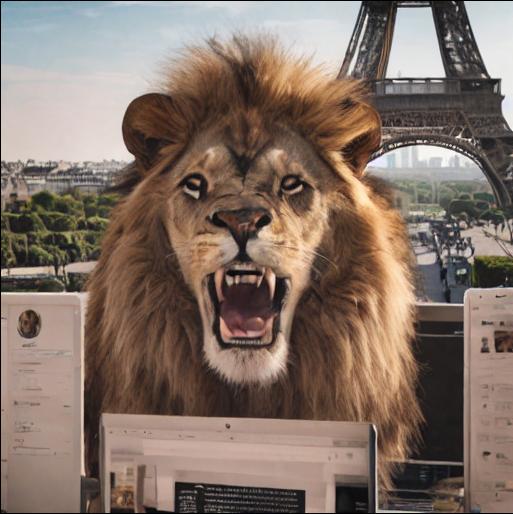}
    \end{subfigure}

    \begin{subfigure}{0.22\textwidth}
        \includegraphics[width=\linewidth]{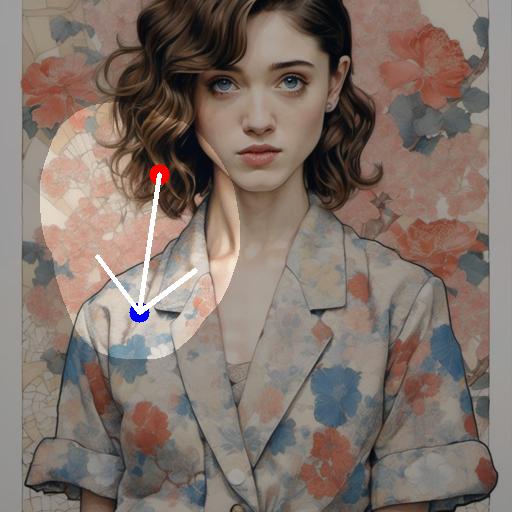}
    \end{subfigure}
    \begin{subfigure}{0.22\textwidth}
        \includegraphics[width=\linewidth]{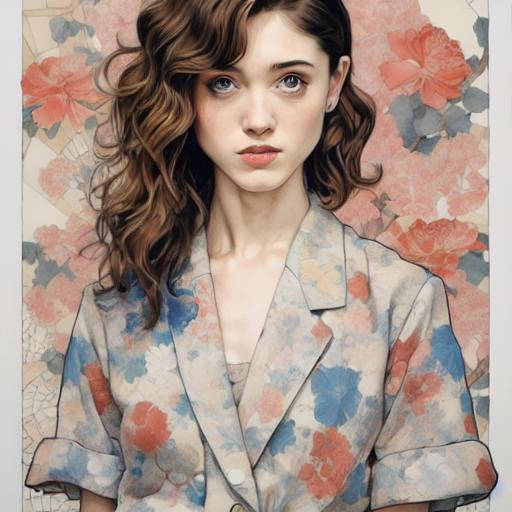}
    \end{subfigure}
    \hspace{0.3cm}
    \begin{subfigure}{0.22\textwidth}
        \includegraphics[width=\linewidth]{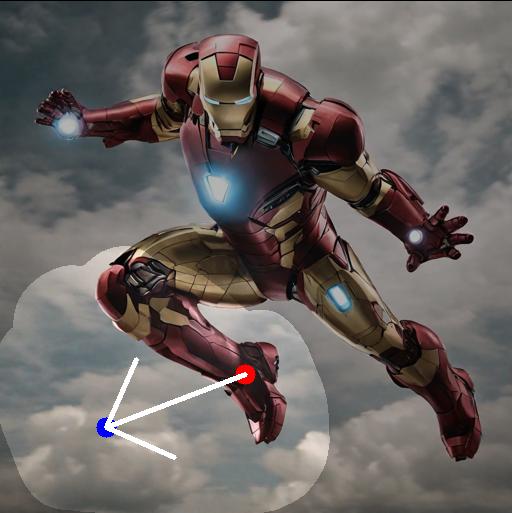}
    \end{subfigure}
    \begin{subfigure}{0.22\textwidth}
        \includegraphics[width=\linewidth]{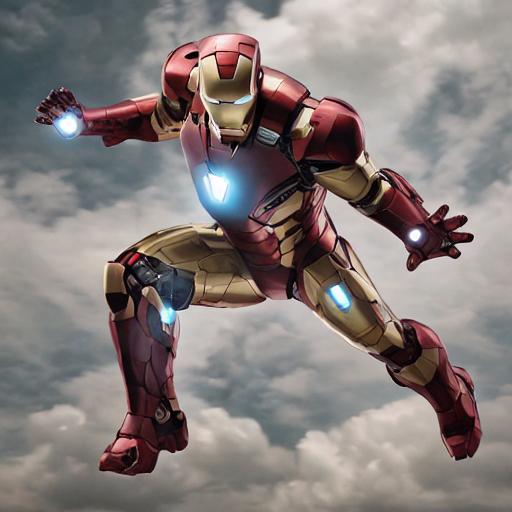}
    \end{subfigure}    
    \caption{\textbf{Editing results of SDE-Drag in images generated by Stable Diffusion.}}
    \label{fig:drag_fun_sd}
\end{figure}

\begin{figure}[t!]
    \centering
    \begin{subfigure}{0.22\textwidth}
        \includegraphics[width=\linewidth]{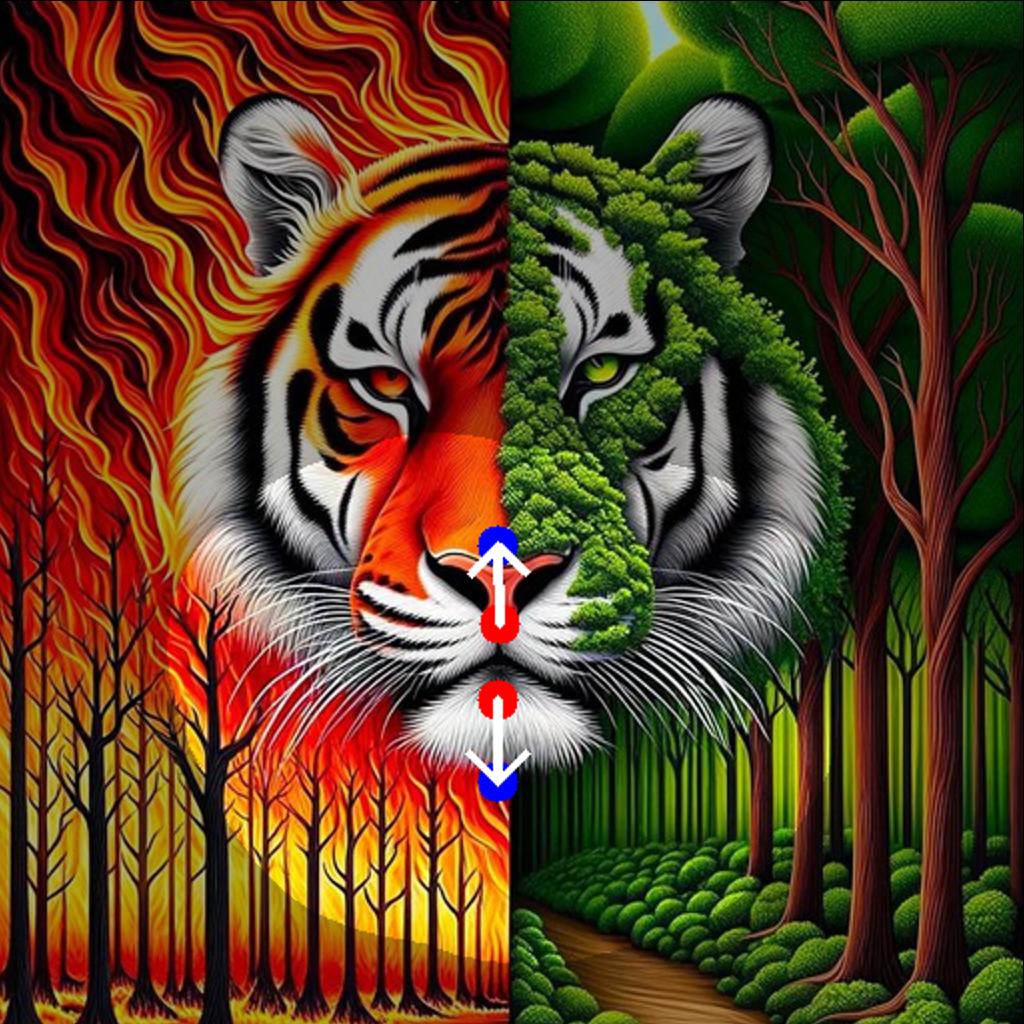}
    \end{subfigure}
    \begin{subfigure}{0.22\textwidth}
        \includegraphics[width=\linewidth]{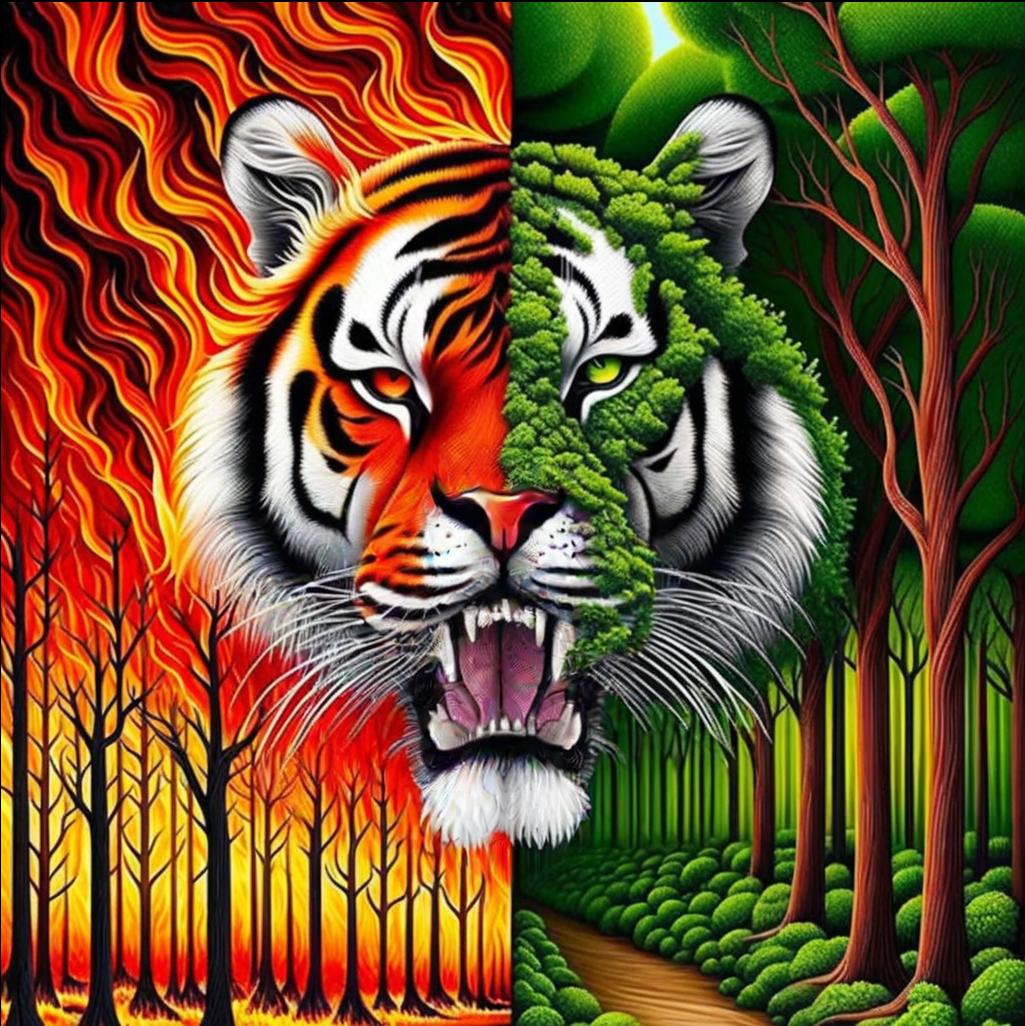}
    \end{subfigure}
    \hspace{0.3cm}
    \begin{subfigure}{0.22\textwidth}
        \includegraphics[width=\linewidth]{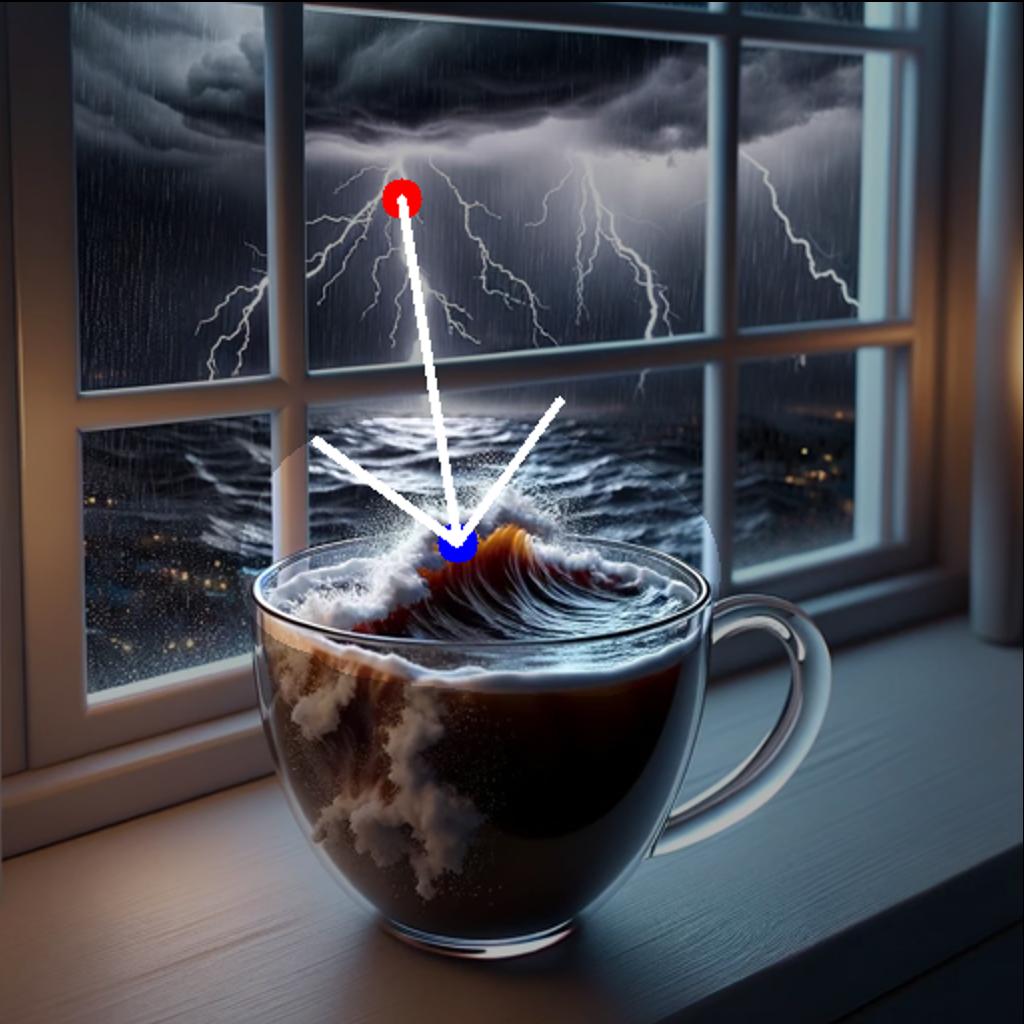}
    \end{subfigure}
    \begin{subfigure}{0.22\textwidth}
        \includegraphics[width=\linewidth]{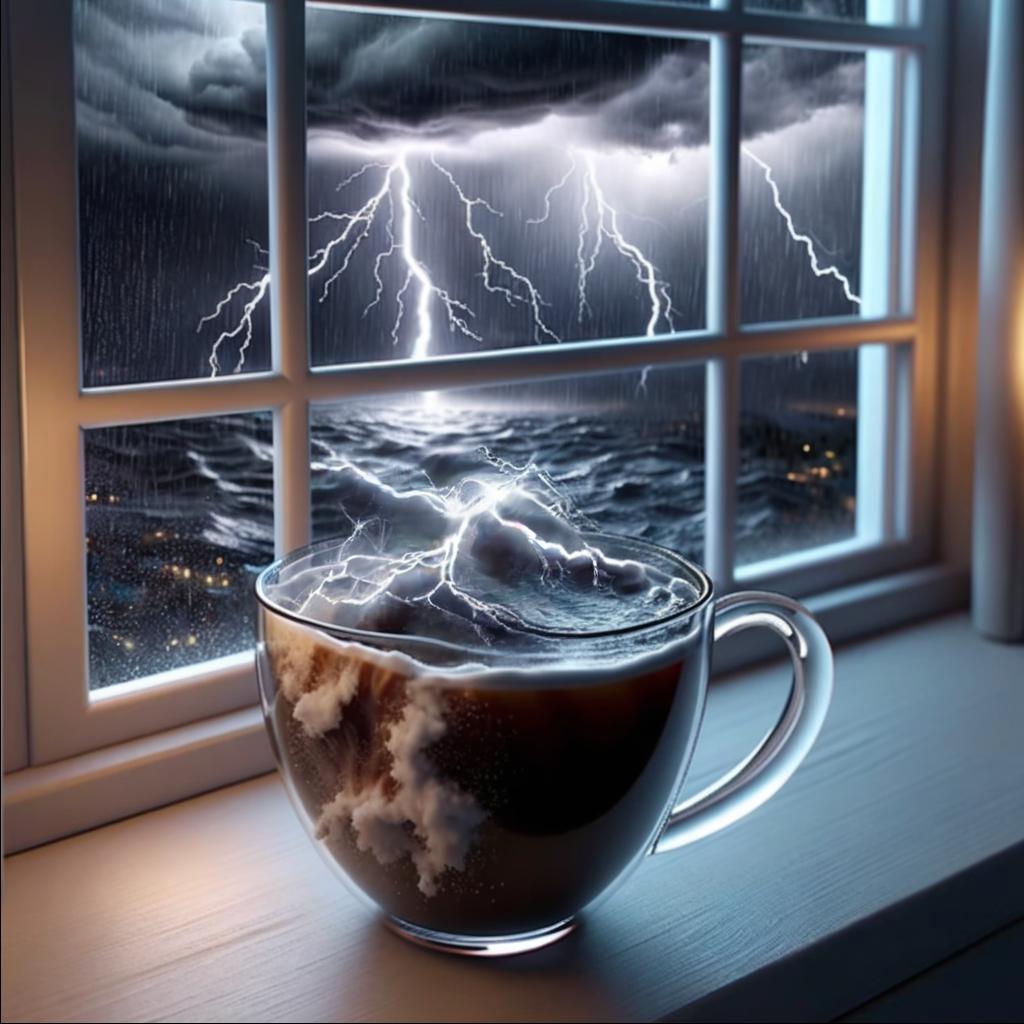}
    \end{subfigure}
    \caption{\textbf{Editing results of SDE-Drag in images generated by DALL·E 3.}}
    \label{fig:drag_fun_dall}
\end{figure}

\begin{figure}[t!]
    \centering
    \begin{subfigure}{0.22\textwidth}
        \includegraphics[width=\linewidth]{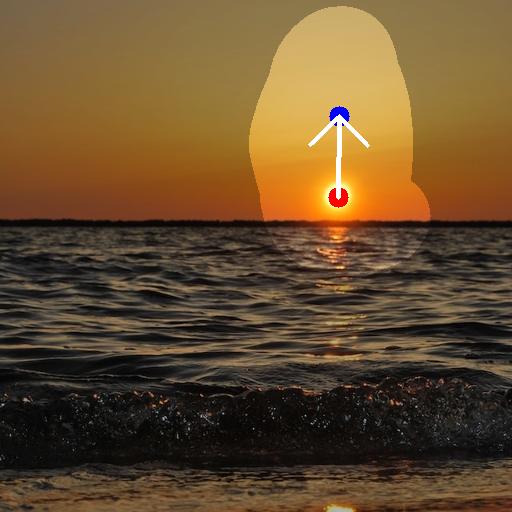}
    \end{subfigure}
    \begin{subfigure}{0.22\textwidth}
        \includegraphics[width=\linewidth]{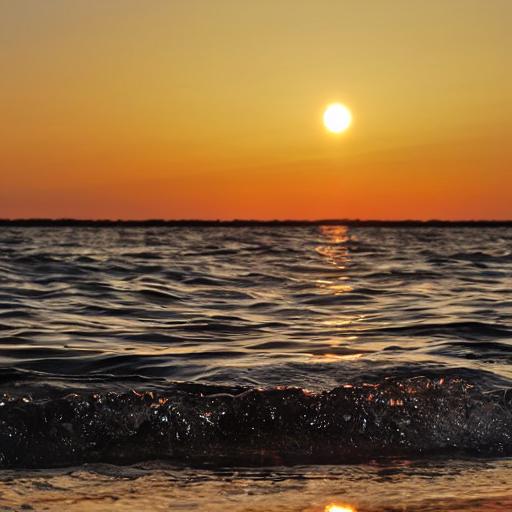}
    \end{subfigure}
    \begin{subfigure}{0.22\textwidth}
        \includegraphics[width=\linewidth]{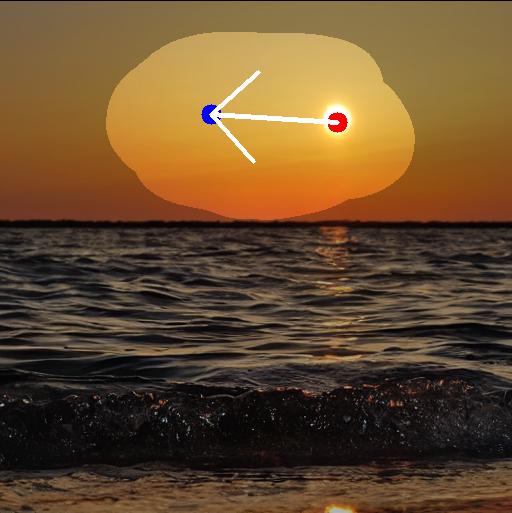}
    \end{subfigure}
    \begin{subfigure}{0.22\textwidth}
        \includegraphics[width=\linewidth]{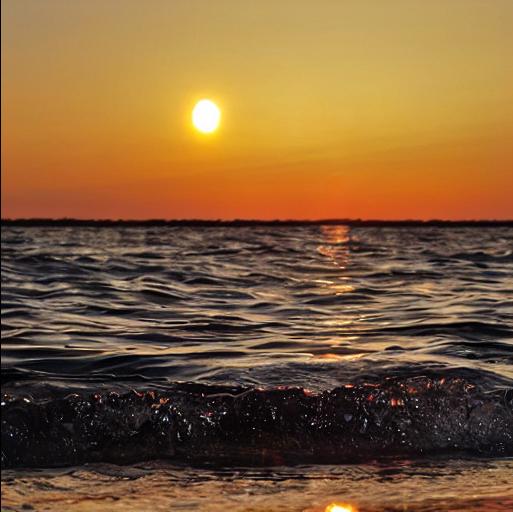}
    \end{subfigure}
    \begin{subfigure}{0.22\textwidth}
        \includegraphics[width=\linewidth]{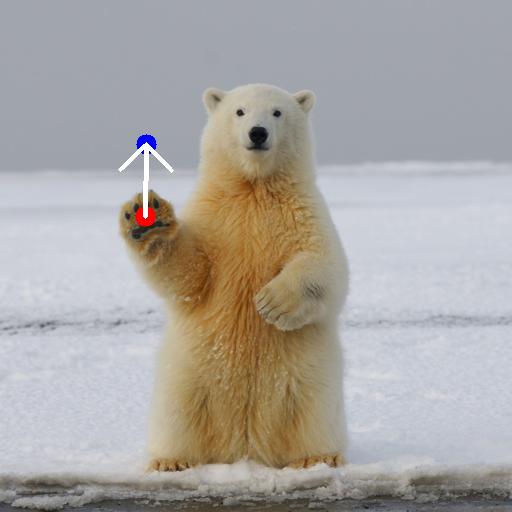}
    \end{subfigure}
    \begin{subfigure}{0.22\textwidth}
        \includegraphics[width=\linewidth]{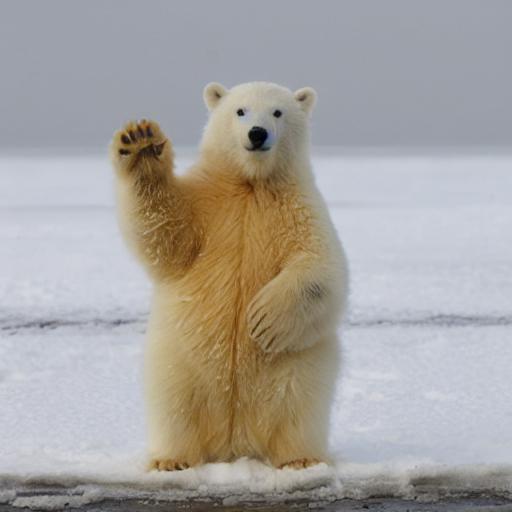}
    \end{subfigure}
    \begin{subfigure}{0.22\textwidth}
        \includegraphics[width=\linewidth]{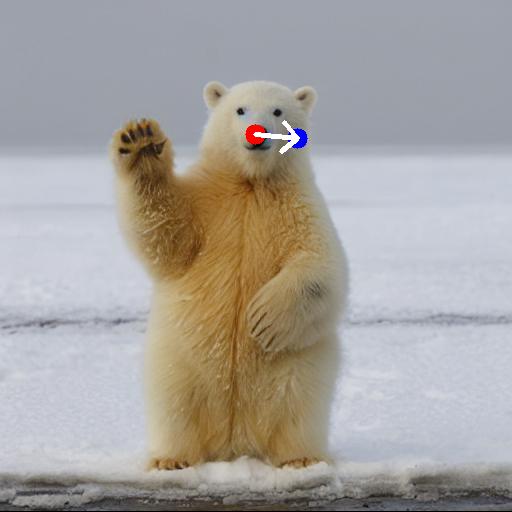}
    \end{subfigure}
    \begin{subfigure}{0.22\textwidth}
        \includegraphics[width=\linewidth]{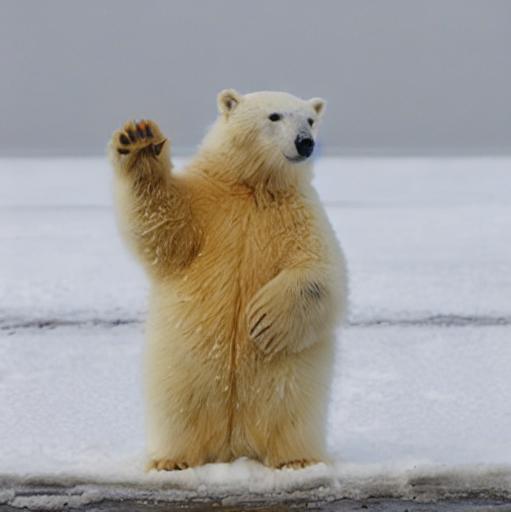}
    \end{subfigure}
    \caption{\textbf{Results of dragging multiple points sequentially.}}
    \label{fig:drag_fun_continuous}
\end{figure}

\section{Failure case}
\label{app:failure_case}
We show a failure case of SDE-Drag in Fig.~\ref{fig:failure_case} which indicates that effectively dragging open-set images is still a significant challenge.

\begin{figure}[t!]
    \centering
    \begin{subfigure}{0.9\textwidth}
        \includegraphics[width=\linewidth]{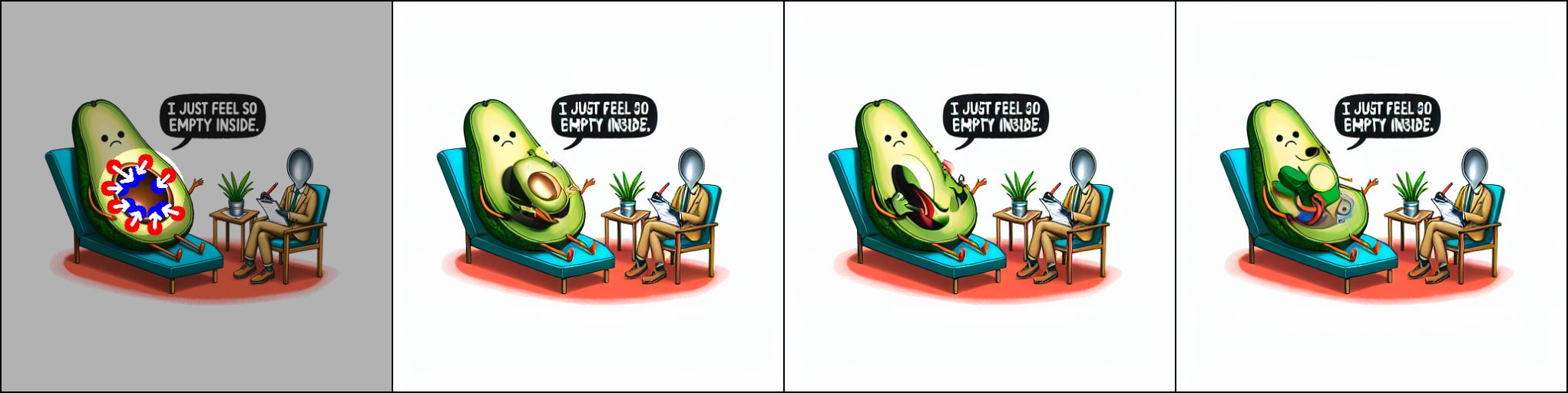}
    \end{subfigure}
    \begin{subfigure}{0.9\textwidth}
        \includegraphics[width=\linewidth]{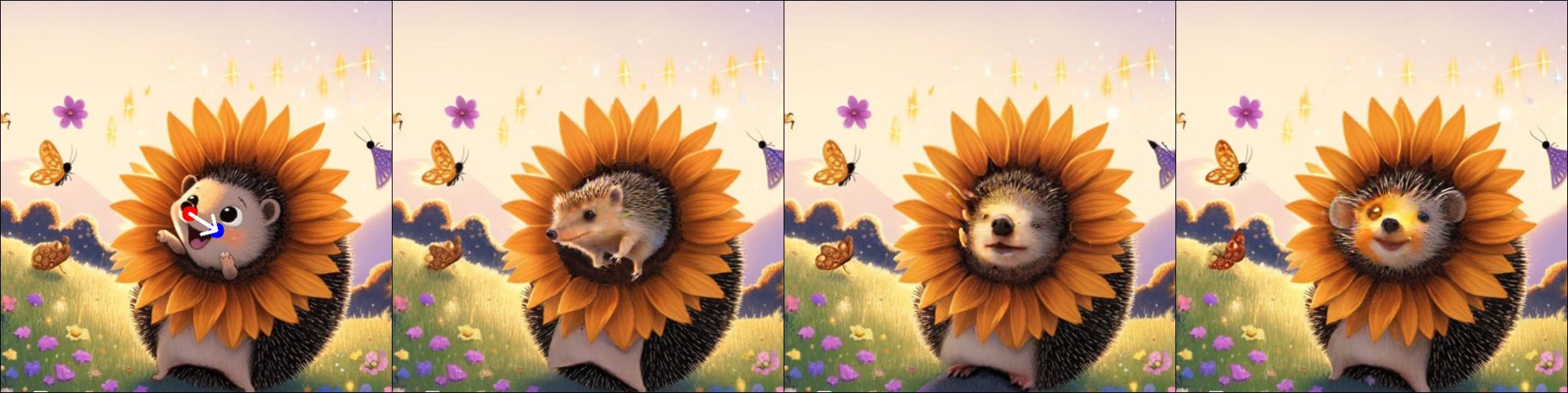}
    \end{subfigure}
    \caption{\textbf{Failure case of SDE-Drag.} For each input image shown on the far left, we tested SDE-Drag with three different random seeds. Such results suggest that dragging open-set images is still a significant challenge.}
    \label{fig:failure_case}
\end{figure}

\end{document}